\let\mathbb\varmathbb
\crefname{lemma}{Lemma}{Lemmas}
\crefname{fact}{Fact}{Facts}
\crefname{theorem}{Theorem}{Theorems}
\crefname{corollary}{Corollary}{Corollaries}
\crefname{claim}{Claim}{Claims}
\crefname{example}{Example}{Examples}
\crefname{algorithm}{Algorithm}{Algorithms}
\crefname{problem}{Problem}{Problems}
\crefname{definition}{Definition}{Definitions}
\crefname{exercise}{Exercise}{Exercises}
\crefname{model}{Model}{Models}
\newtheorem{theorem}{Theorem}[section]
\newtheorem*{theorem*}{Theorem}
\newtheorem{lemma}[theorem]{Lemma}
\newtheorem*{lemma*}{Lemma}
\newtheorem*{fact*}{Fact}
\newtheorem*{proposition*}{Proposition}
\newtheorem{corollary}[theorem]{Corollary}
\newtheorem*{corollary*}{Corollary}
\newtheorem*{hypothesis*}{Hypothesis}
\newtheorem*{conjecture*}{Conjecture}
\theoremstyle{definition}
\newtheorem{definition}[theorem]{Definition}
\newtheorem*{definition*}{Definition}
\newtheorem*{construction*}{Construction}
\newtheorem*{example*}{Example}
\newtheorem*{question*}{Question}
\newtheorem{algorithm}[theorem]{Algorithm}
\newtheorem*{algorithm*}{Algorithm}
\newtheorem*{assumption*}{Assumption}
\newtheorem*{problem*}{Problem}
\newtheorem*{openquestion*}{Open Question}
\theoremstyle{remark}
\newtheorem{claim}[theorem]{Claim}
\newtheorem*{claim*}{Claim}
\newtheorem*{remark*}{Remark}
\newtheorem*{observation*}{Observation}
\theoremstyle{model}
\newtheorem*{model*}{Model}
\let\originalleft\left
\let\originalright\right
\renewcommand{\left}{\mathopen{}\mathclose\bgroup\originalleft}
\renewcommand{\right}{\aftergroup\egroup\originalright}
\let\latexparagraph\paragraph
\RenewDocumentCommand{\paragraph}{som}{%
  \IfBooleanTF{#1}
    {\latexparagraph*{#3}}
    {\IfNoValueTF{#2}
       {\latexparagraph{\maybe@addperiod{#3}}}
       {\latexparagraph[#2]{\maybe@addperiod{#3}}}%
  }%
}
\newcommand{\maybe@addperiod}[1]{%
  #1\@addpunct{.}%
}
\newenvironment{algorithmbox}{\begin{mdframed}[nobreak=true]
\begin{algorithm}}{\end{algorithm}\end{mdframed}}
\newcommand{\Paren}[1]{\left(#1\right)}
\newcommand{\bigparen}[1]{\big(#1\big)}
\newcommand{\Bigparen}[1]{\Big(#1\Big)}
\newcommand{\Brac}[1]{\left[#1\right]}
\newcommand{\Set}[1]{\left\{#1\right\}}
\newcommand{\norm}[1]{\lVert#1\rVert}
\newcommand{\Norm}[1]{\left\lVert#1\right\rVert}
\newcommand{\iprod}[1]{\langle#1\rangle}
\newcommand{\Iprod}[1]{\left\langle#1\right\rangle}
\newcommand{\Esymb}{\mathbb{E}}
\newcommand{\Psymb}{\mathbb{P}}
\DeclareMathOperator*{\E}{\Esymb}
\DeclareMathOperator*{\ProbOp}{\Psymb}
\renewcommand{\Pr}{\ProbOp}
\newcommand\bdot\bullet
\DeclareMathOperator{\Ind}{\mathbf 1}
\DeclareMathOperator{\Tr}{Tr}
\DeclareMathOperator{\SDP}{SDP}
\DeclareMathOperator{\poly}{poly}
\DeclareMathOperator{\polylog}{polylog}
\newcommand{\Erdos}{Erd\H{o}s\xspace}
\newcommand{\Renyi}{R\'enyi\xspace}
\newcommand{\Z}{\mathbb Z}
\newcommand{\R}{\mathbb R}
\newcommand{\ddelta}{d_\delta}
\newcommand{\mudelta}{\mu_\delta}
\newcommand{\Deltadelta}{\Delta_\delta}
\renewcommand{\leq}{\leqslant}
\renewcommand{\geq}{\geqslant}
\let\epsilon=\varepsilon
\numberwithin{equation}{section}
\newcommand\MYcurrentlabel{xxx}
\newcommand{\MYstore}[2]{%
  \global\expandafter \def \csname MYMEMORY #1 \endcsname{#2}%
}
\newcommand{\MYload}[1]{%
  \csname MYMEMORY #1 \endcsname%
}
\newcommand{\MYnewlabel}[1]{%
  \renewcommand\MYcurrentlabel{#1}%
  \MYoldlabel{#1}%
}
\newcommand{\MYdummylabel}[1]{}
\newcommand{\torestate}[1]{%
  % overwrite label command
  \let\MYoldlabel\label%
  \let\label\MYnewlabel%
  #1%
  \MYstore{\MYcurrentlabel}{#1}%
  % restore old label command
  \let\label\MYoldlabel%
}
\newcommand{\restatetheorem}[1]{%
  % overwrite label command with dummy
  \let\MYoldlabel\label
  \let\label\MYdummylabel
  \begin{theorem*}[Restatement of \cref{#1}]
    \MYload{#1}
  \end{theorem*}
  \let\label\MYoldlabel
}
\newcommand{\restatelemma}[1]{%
  % overwrite label command with dummy
  \let\MYoldlabel\label
  \let\label\MYdummylabel
  \begin{lemma*}[Restatement of \cref{#1}]
    \MYload{#1}
  \end{lemma*}
  \let\label\MYoldlabel
}
\newcommand{\restateprop}[1]{%
  % overwrite label command with dummy
  \let\MYoldlabel\label
  \let\label\MYdummylabel
  \begin{proposition*}[Restatement of \cref{#1}]
    \MYload{#1}
  \end{proposition*}
  \let\label\MYoldlabel
}
\newcommand{\restatefact}[1]{%
  % overwrite label command with dummy
  \let\MYoldlabel\label
  \let\label\MYdummylabel
  \begin{fact*}[Restatement of \cref{#1}]
    \MYload{#1}
  \end{fact*}
  \let\label\MYoldlabel
}
\newcommand{\restate}[1]{%
  % overwrite label command with dummy
  \let\MYoldlabel\label
  \let\label\MYdummylabel
  \MYload{#1}
  \let\label\MYoldlabel
}
\newcommand{\eps}{\epsilon}
\newcommand*{\Id}{\mathrm{Id}}
\newcommand*{\normop}[1]{\norm{#1}_{\mathrm{op}}}
\newcommand*{\Normop}[1]{\Norm{#1}_{\mathrm{op}}}
\newcommand*{\Normf}[1]{\Norm{#1}_{\mathrm{F}}}
\newcommand{\sbm}{\text{sbm}_{d,\eps}}
\newcommand{\SBM}{\mathsf{SBM}}
\DeclareMathOperator{\diag}{diag}
\newcommand*{\dyad}[1]{#1#1{}^{\mkern-1.5mu\mathsf{T}}}
\title{Reaching Kesten-Stigum Threshold in the Stochastic Block Model under Node Corruptions  \thanks{This project has received funding from the European Research Council (ERC) under the European Union's Horizon 2020 research and innovation programme (grant agreement No 815464).}}
\author{
Jingqiu Ding\thanks{ETH Z\"urich.}
\and
Tommaso d'Orsi\footnotemark[2]
\and
Yiding Hua\footnotemark[2]
\and
David Steurer\footnotemark[2]
}
\begin{document}

\pagestyle{empty}

% MAKE TITLE

\maketitle
\thispagestyle{empty} % seems to be required here to avoid page number on first page

% ABSTRACT

\begin{abstract}

We study robust community detection in the context of node-corrupted stochastic block model, 
where an adversary can arbitrarily modify all the edges incident to a fraction of the $n$ vertices. 
We present the first polynomial-time algorithm that achieves weak recovery at the Kesten-Stigum threshold even in the presence of a small constant fraction of corrupted nodes.
Prior to this work, even state-of-the-art robust algorithms were known to break under such node corruption adversaries, when close to the Kesten-Stigum threshold.

We further extend our techniques to the $\Z_2$ synchronization problem, 
where our algorithm reaches the optimal recovery threshold in the presence of similar strong adversarial perturbations.

The key ingredient of our algorithm is a novel identifiability proof that leverages the push-out effect of the Grothendieck norm of principal submatrices.

\end{abstract}

\newpage

%\clearpage

% TOC

% assumes microtype
\microtypesetup{protrusion=false}
\tableofcontents{}
\microtypesetup{protrusion=true}

\clearpage

\pagestyle{plain}
\setcounter{page}{1}

% SECTION

\section{Introduction}
Community detection is the problem of identifying hidden communities in random graphs that are generated based on some planted structures. 
The stochastic block models are a family of random graph models that, for a variety of reasons, play a central role in the study of community detection 
(see the excellent survey of \cite{AbbeReview}). 
In this work, we focus on balanced two-community stochastic block model, which is defined as follows:

\begin{definition}[Balanced two-community stochastic block model]
  \label{def:SBM}
  For parameters $\epsilon\in (0,1),n>0,d>0$, for some label vector $x^* \in \{ \pm 1 \}^n$ such that $\mathbf{1}^\top x^*=0$,
  the balanced two-community stochastic block model $\SBM_{d,\epsilon}(x^*)$ describes the following distribution over graph with $n$ vertices: 
  every pair of distinct vertices $i,j\in [n]$ in the graph is connected by an edge independently with probability $(1+\epsilon x_i^*x_j^*)\frac{d}{n}$.
\end{definition}

In the above definition, $\epsilon$ is the bias parameter, $d$ is the average degree and each vertex $i$ gets a label $x^*_i$. Given a graph G sampled according to this model, the goal is to recover the (unknown) underlying vector of labels $x^*$ as accurate as possible.

\begin{definition}[Weak recovery]
For $G \sim \sbm$, an estimator $\hat{x}(G)\in \{\pm 1\}^n$ is said to achieve weak recovery if and only if $\E\iprod{\hat{x}(G), x^*}^2 \geq \Omega(n^2)$.
\end{definition}

In the last decade, a celebrated line of  works established that weak recovery can be achieved (and efficiently so) if and only if $\epsilon^2 d>1$ when $d = o(n)$
(\cite{decelle2011asymptotic,massoulie2014community,mossel2014belief,mossel2015reconstruction,DenseSBMThreshold}). 
This threshold  is called the \textit{Kesten-Stigum threshold} (henceforth KS threshold).
The algorithms introduced in these works, however, are brittle and can be fooled by very small adversarial perturbations.\footnote{In the sense that, altering $n^{o(1)}$ edges is enough to break them.}
Later works (\cite{montanari2016semidefinite, ding2022robust}) showed that, perhaps surprisingly, we can achieve weak recovery at the KS threshold, even in the presence of edge perturbations.

It is easy to see that weak recovery becomes information theoretically impossible when more than $\eps\cdot d\cdot n$ edges are corrupted.\footnote{An adversary may remove each intra-cluster edge with probability $\eps\cdot d/n$ and add each inter-cluster edge with probability $\eps\cdot d/n$. With high probability such process alter at most $\epsilon\cdot d\cdot n(1+o(1))$ edges. The graph now is indistinguishable from an \Erdos-\Renyi graph.}
In light of this, \cite{liu2022minimax} explored an alternative model of corruption, where a constant fraction of vertices have all of their incident edges modified.

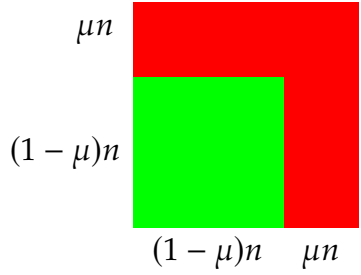
\begin{figure}
  \begin{center}
  \begin{tikzpicture}
    \fill[red] (1, 0.7) rectangle (4, 3.7);
    \fill[green] (1, 0.7) rectangle (3, 2.7);
    \draw (1,1.7) node[anchor=east]{$(1-\mu)n$};
    \draw (2,0) node[anchor=south]{$(1-\mu) n$};
    \draw (3.5,0) node[anchor=south]{$\mu n$};
    \draw (0.5,3) node[anchor=south]{$\mu n$};
  \end{tikzpicture}
  \end{center}
  \caption{For the adjacency matrix generated from \cref{def:node_corrupted_SBM},
  the entries in the red area are adversarially
  corrupted, while the entries in the green area 
  are sampled as in stochastic block model.}\label{fig:node-corruption}
\end{figure}

\begin{definition}[Node-corrupted balanced two-community stochastic block model]
\label{def:node_corrupted_SBM}
Given $\mu \in [0, 1)$ and $G^0\sim \sbm$,
an adversary may choose up to $\mu n$ vertices in $G^0$ and arbitrarily modify incident edges of them to produce the corrupted graph $G$. 
\end{definition}

In this node corruption model, the adversary is allowed to change an arbitrary number of edges for each corrupted vertex and could introduce up to $O(\mu n^2)$ edges.
While vertices of untypically large degree are algorithmically easy to identify and remove, less naive adversaries may remove all edges of an arbitrary subset of $\mu n$ vertices, and replace them by roughly $d$ spurious edges of their choosing. Such an adversary would introduce $O(\mu\cdot d\cdot n)$ edges, causing the algorithms of \cite{montanari2016semidefinite, ding2022robust} to fail when $\mu \geq \eps$.

As a result, node corruption settings present new algorithmic challenges that previous approaches fail to overcome. 
\cite{liu2022minimax} developed a polynomial-time algorithm which achieves minimax error rates when $\epsilon^2 d>C$ and the corruption ratio is $\mu=o(1)$ (where $C$ is a large universal constant). 
\footnote{The algorithm in \cite{liu2022minimax} is further robust against a different type of adversary corruptions with unbounded monotone changes from semi-random model defined in \cite{Moitra16}.}

However, their algorithm cannot achieve weak recovery when the signal-to-noise ratio is close to the KS threshold, i.e. when $\epsilon^2 d=1+\delta$ for small $\delta$.

In this work, we develop algorithms that can exploit the special structure of node corruptions and \textit{achieve node-robust weak recovery} as long as $\mu \leq \mudelta$,  where $\mudelta$ is a constant that only depends on $\delta$ but not on $\epsilon$ or $d$.

\subsection{Results}

Our main result is an efficient algorithm to achieve robust weak recovery under node corruptions.

\begin{theorem}\label{thm:main}
Let $n>1,d > 1, \eps \in (0,1)$ and balanced label vector $x^*\in \Set{\pm 1}^n$. 
Let $\delta\coloneqq \eps^2 d-1$. 
Let $G$ be a corrupted graph generated from \cref{def:node_corrupted_SBM} given $G^0\sim \SBM_{d,\epsilon}(x^*)$ and $\mu \in [0, 1)$.
When $\delta\geq \Omega(1)$ and $\mu\leq \Omega_\delta(1)$ \footnote{$\mu\leq \Omega_\delta(1)$ here means that $\mu$ is bounded by a constant depending on $\delta$. 
The dependence on $\delta$ is necessary: if $\mu$ is a fixed constant, then the recovery impossible for a small enough constant $\delta$ (see \cref{sec:lower-bound} for details).}, there exists a polynomial-time algorithm that outputs a labelling $\hat{x}\in \Set{\pm 1}^n$ such that
\begin{equation*}  
    \E[\iprod{\hat{x}, x^*}^2] \geq \Omega(n^2)
\end{equation*}
\end{theorem}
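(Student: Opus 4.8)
The plan is to reduce robust weak recovery to an SDP-based identifiability argument centered on the Grothendieck norm. First I would write down the natural semidefinite relaxation: maximize $\iprod{A(G), X}$ over the set of PSD matrices $X$ with $X_{ii}=1$, where $A(G)$ is a centered/normalized adjacency matrix of the corrupted graph. The key quantity is the Grothendieck norm $\Normg{\cdot}$ restricted to principal submatrices. On the uncorrupted graph $G^0 \sim \SBM_{d,\epsilon}(x^*)$, standard spectral concentration (as in the analyses underlying \cite{montanari2016semidefinite, ding2022robust}) guarantees that the planted solution $x^*(x^*)^\top/n$ achieves value at least $\epsilon d \cdot (1 - o(1)) = (1+\delta)\cdot(1-o(1))$ after normalization, while any solution poorly correlated with $x^*$ achieves value at most $1 + o(1)$ — this is precisely the slack that $\delta \geq \Omega(1)$ buys us.

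Next I would handle the node corruptions. Since the adversary controls only $\mu n$ rows and columns, the difference $A(G) - A(G^0)$ is supported on a $2\mu n$-by-$n$ ``cross'' (the red region of \cref{fig:node-corruption}). The central step — and the ``novel identifiability proof'' advertised in the abstract — is the \emph{push-out effect}: I would show that the contribution of this corrupted cross to the SDP objective, measured in the Grothendieck norm of the relevant principal submatrices, is bounded by something like $O(\sqrt{\mu}) \cdot (\text{something})$ or more precisely a quantity that is $o(\delta)$ when $\mu \leq \mudelta$ for $\mudelta$ depending only on $\delta$. The crucial point, and the reason the bound is independent of $\epsilon$ and $d$, is that the Grothendieck norm of a $k \times k$ block (here $k = \mu n$) is controlled by $k$ itself rather than by the spectral norm, so the adversary's budget is genuinely limited by the \emph{number} of corrupted nodes and not by how many edges it rewires. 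Combining this with the clean-graph analysis, any near-optimal SDP solution must be $\Omega_\delta(1)$-correlated with $x^* (x^*)^\top / n$ on the uncorrupted coordinates.

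Finally I would extract a labelling: solve the SDP, take the top eigenvector (or a Gaussian rounding) of the optimal $X$, threshold to get $\hat x \in \Set{\pm 1}^n$, and possibly run a local majority-vote cleanup step on the corrupted vertices. The correlation guarantee on the clean block, together with the fact that the $\mu n$ corrupted vertices can contribute at most $O(\mu n^2) = o(n^2)$ to any inner product when $\mu$ is small, yields $\E[\iprod{\hat x, x^*}^2] \geq \Omega(n^2)$. I expect the main obstacle to be establishing the quantitative push-out bound on the Grothendieck norm of the corrupted principal submatrix with a constant that depends only on $\delta$: one must show that the adversary cannot use its freedom on $\mu n$ rows to create a competing solution that both satisfies the PSD and unit-diagonal constraints globally and has large objective value, and this requires carefully exploiting the interplay between the unit-diagonal constraint on the clean part and the low ``Grothendieck-rank'' of the corrupted perturbation. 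A secondary technical point is ensuring the spectral concentration for $G^0$ holds in the sparse regime $d = O(1)$, which typically forces a nonbacktracking-walk or self-avoiding-walk preprocessing of $A(G^0)$ rather than using the raw adjacency matrix; I would import that machinery from the prior robust-SBM works and check that the push-out argument survives the preprocessing.
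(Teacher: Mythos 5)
There is a genuine gap at the heart of your push-out argument. You claim that the Grothendieck norm of the corrupted $k \times k$ block (with $k = \mu n$) is ``controlled by $k$ itself rather than by the spectral norm,'' and that this lets you bound the corruption's contribution to the SDP objective independently of $\epsilon$ and $d$. This is not correct as stated. The corrupted ``cross'' has $\Theta(\mu n^2)$ entries, each of which the adversary can set to be $\Theta(1)$ after centering; a rank-one sign pattern there yields a Grothendieck norm (and $\infty \to 1$ norm) of order $\mu n^2$. Meanwhile the usable signal gap in the basic SDP is only $\Theta_\delta(n\sqrt{d})$. Comparing $\mu n^2$ against $n\sqrt d$ forces $\mu \to 0$ as $n \to \infty$, not $\mu = \Omega_\delta(1)$. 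So applying the basic SDP to the full corrupted matrix and subtracting off a Grothendieck-norm bound on the cross cannot give a constant corruption fraction.

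What makes the paper's argument close, and what is missing from your sketch, is a \emph{joint} optimization: the program searches over both a PSD matrix $X$ and a vertex subset $S$ (encoded by a $\{0,1\}$ vector $w$), and crucially imposes the constraint $\Normop{\tilde A_S} \le O(\sqrt d)$ on the chosen submatrix. With this spectral constraint in hand, the contribution of any further $\mu n$ removals is bounded by $\Normop{\tilde A_S} \cdot \Tr(X_S - X_{S'}) \le O(\sqrt d)\cdot \mu n = O(\mu n \sqrt d)$, which is finally comparable to the signal gap and yields $\mu \le \mu_\delta$. Feasibility requires showing that degree pruning of $G^0$ produces a spectrally bounded submatrix (\cref{corollary:degree_pruning_spectral_norm}), and the whole argument is carried out in degree-$4$ SOS to make the search over $S$ tractable; your sketch mentions neither the subset search, the spectral-norm constraint, nor an SOS relaxation, and an SDP on the raw corrupted matrix would also be destroyed by a single planted high-degree vertex. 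As a secondary point, the paper handles the sparse regime $d \le d_\delta$ not by self-avoiding-walk preprocessing but by an iterative degree-pruning reduction to the edge-robust algorithm of \cite{ding2022robust}; your proposed SAW/NBW route is plausible in spirit but is a different (and unanalyzed-here) approach, and one would still need to argue it tolerates $\Omega_\delta(1)$ node corruptions rather than just bounded edge corruptions.
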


Our algorithm is the first one that succeeds in this setting, \cite{liu2022minimax} cannot work unless $\delta$ is sufficiently large, and \cite{montanari2016semidefinite,ding2022robust} cannot tolerate the corruption of $\Omega_\delta(1)$ vertices. 
Additionally, our algorithm is optimal in the sense that for $\epsilon^2 d< 1$, it is information theoretically impossible to achieve weak recovery (see \cite{mossel2015reconstruction}).

Based on the techniques we use for the robust stochastic block model, we also give an algorithm for the closely related robust $\Z_2$ synchronization problem, which can be formulated as follows:
\begin{definition}[Row/column-corrupted $\Z_2$ Synchronization model]\label{def:Z2-corruption}
	Given a hidden vector $x^* \in \{\pm 1\}^{n}$ and $\sigma > 0$, let $A^0$ be the uncorrupted $\Z_2$ synchronization matrix
	\begin{equation*}
		A^0 = \sigma x^* (x^*)^{\top} + W
	\end{equation*}
	where $W \in \R^{n\times n}$ is a symmetric random matrix 
	whose upper triangular entries are i.i.d sampled from $N(0, n)$. 
	An adversary may select $\mu n$ elements of $[n]$ and arbitrarily modify the corresponding rows and 
	columns of $A^0$ to produce a corrupted matrix $A$ that we observe.
\end{definition}
When $\sigma\leq 1$, even with no corruptions (i.e. $\mu=0$), it is information 
theoretically impossible to achieve weak recovery (\cite{LowerBoundPCA}). 
When $\sigma \geq 1+\Omega(1)$ and $\mu=0$, a polynomial-time algorithm is known to 
output estimator $\hat{x}\in \Set{\pm 1}^n$ such that $\iprod{\hat{x},x}^2\geq \Omega(n^2)$ with high probability. 
This is due to the thresholding phenomenon called the BBP transition (\cite{BBPtransition}).
However, for reasons similar to those described in the SBM settings, when $\mu\geq \Omega(1)$, the analysis of known algorithms such as semidefinite programming (\cite{montanari2016semidefinite}) or spectral algorithm (\cite{LowerBoundPCA}) breaks down.
In this paper, we give an algorithm that can achieve the constant sharp threshold for robust $\Z_2$ synchronization:
\begin{theorem}[Proved in \cref{sec:robust_Z2}]\label{thm:sos-z2-informal}
Given a row/column corrupted matrix $A$ generated from \cref{def:Z2-corruption}, 
when $\sigma \geq 1+\Omega(1)$, there is a polynomial-time algorithm that outputs an estimator $\hat{x}\in \{\pm 1\}^n$ such that $\E\Brac{\iprod{\hat{x},x^*}^2}\geq \Omega(n^2)$ with high probability over $x^*$ and $A^0$. 
\end{theorem}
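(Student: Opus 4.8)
The plan is to transport the argument behind \cref{thm:main} from the sparse Bernoulli setting of the stochastic block model to the Gaussian setting of \cref{def:Z2-corruption}, which is in fact the technically cleaner of the two: all the submatrix concentration statements needed below follow from standard Gaussian comparison and concentration inequalities rather than from combinatorial estimates for sparse random graphs. Write $A^0 = \sigma x^*(x^*)^\top + W$ and let $C \subseteq [n]$, $\abs{C}\le \mu n$, be the (unknown) set of corrupted indices, so that $A$ and $A^0$ agree on the clean block $\bar C\times\bar C$. The algorithm solves a semidefinite (sum-of-squares) relaxation of $\max_{x\in\set{\pm1}^n}\iprod{x,Ax}$ of the following form: maximize $\iprod{A,X}$ over $X\succeq 0$ with $X_{ii}\le 1$, subject to \emph{robustness constraints} $\iprod{A_S,X_S}\le \tau(\abs S)$ for every index set $S$, where $A_S, X_S$ denote principal submatrices and $\tau(s)\approx \sigma s^2 + O(s^{3/2}\sqrt n)$ is, up to lower-order terms, the largest value the \emph{uncorrupted} model can produce on a clean index set of size $s$; it then rounds the optimizer $X$ to a labelling $\hat x\in\set{\pm1}^n$.

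The easy direction is feasibility of a proxy for the planted solution: the ``masked'' point $X^\sharp = vv^\top$ with $v_i = x_i^*$ for $i\in\bar C$ and $v_i = 0$ otherwise is PSD, has $X^\sharp_{ii}\le 1$, and on every $S$ restricts to the rank-one matrix built from $x^*_{S\cap\bar C}$; since this involves only clean entries of $A$, $\iprod{A_S,X^\sharp_S} = \sigma\abs{S\cap\bar C}^2 + (x^*_{S\cap\bar C})^\top W_{S\cap\bar C}\,x^*_{S\cap\bar C}$, and the Gaussian fluctuation here is $O(\abs S^{3/2}\sqrt n)$ simultaneously over all $S$, so $X^\sharp$ is feasible. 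Hence the optimum of the program is at least $\iprod{A,X^\sharp} = \sigma(1-\mu)^2 n^2 - O(n^{3/2})$.

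The heart of the proof is the identifiability step: every feasible $X$ with $\iprod{A,X}\ge \sigma(1-\mu)^2 n^2 - o(n^2)$ must satisfy $(x^*_{\bar C})^\top X_{\bar C}\, x^*_{\bar C} = \Omega(n^2)$. Decompose $\iprod{A,X} = \iprod{A^0_{\bar C}, X_{\bar C}} + R$, where $R$ collects all terms with at least one index in $C$; the robustness constraints (notably on $S = C$, together with the way a corrupted row's diagonal entry is ``charged'' inside its principal submatrices) confine $R$ to $o_\mu(1)\cdot n^2$ once $\mu$ is a small enough constant --- quantitatively, $\tau(\mu n)\approx (\sigma\mu^2 + O(\mu^{3/2}))n^2$, and the permissible $\mu$ shrinks with the gap $\sigma-1$, exactly as in \cref{thm:main}. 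On the clean block, $\iprod{A^0_{\bar C}, X_{\bar C}} = \sigma\,(x^*_{\bar C})^\top X_{\bar C}\, x^*_{\bar C} + \iprod{W_{\bar C}, X_{\bar C}}$, and here the \emph{Grothendieck-norm push-out} enters: since $X_{\bar C}$ is PSD with diagonal $\le 1$ it is a feasible Grothendieck test matrix, so $\iprod{W_{\bar C}, X_{\bar C}}\le \normg{W_{\bar C}}$, and we establish a bound $\normg{W_S}\le (1+o(1))\,g(\abs S)$ holding \emph{simultaneously over all} $S$, where $g(s) = \Theta(s^{3/2}\sqrt n)$ is the pure-noise Grothendieck profile. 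The point --- and the reason one cannot just invoke the global bound $\normg{W} = \Theta(n^2)$, whose constant is too large --- is that the planted spike pushes the Grothendieck value of the signal-bearing block above this noise floor by precisely $\sigma$ times the squared correlation it realizes, so chaining the lower bound on $\iprod{A,X}$ with $g((1-\mu)n) = O(n^2)$ forces $(x^*_{\bar C})^\top X_{\bar C}\, x^*_{\bar C} = \Omega(n^2)$ exactly in the regime $\sigma = 1 + \Omega(1)$; this is where the sharp constant, the analogue of the Kesten--Stigum/BBP threshold, materializes.

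From such an $X$ a standard Gaussian hyperplane rounding of (the diagonal-normalized) $X$ yields $\hat x \in \set{\pm1}^n$ with $\E_\zeta \iprod{\hat x(\zeta), x^*}^2 \ge \tfrac{2}{\pi}\,(x^*)^\top X\, x^* \ge \Omega(n^2)$, using the nonnegativity of the Taylor coefficients of $\arcsin$ together with the Schur product theorem for the first inequality, and $\bignorm{\sum_{i\in[n]} x_i^* g_i}\ge \bignorm{\sum_{i\in\bar C} x_i^* g_i} - \mu n = \Omega(n)$ (with $g_i$ the Gram vectors of $X$) for the second; since a good $X$ exists with probability $1-o(1)$ over $x^*$ and $A^0$, this gives $\E\iprod{\hat x, x^*}^2 = \Omega(n^2)$. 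I expect the main obstacle to be the identifiability step, and inside it the uniform-over-$S$ control of $\normg{W_S}$ with a constant tight enough that the push-out gap survives all the way to $\sigma = 1 + \Omega(1)$ while still surviving a union bound over the $\binom{n}{\abs S}$ choices of each $S$ --- together with verifying that the \emph{principal}-submatrix constraints alone suffice to absorb the contribution of the corrupted rows and columns (in particular the off-diagonal corrupted blocks, which are not themselves principal submatrices). A secondary point, with no counterpart in the Bernoulli model, is preventing the adversary from exploiting heavy Gaussian tails in a few clean rows; a routine truncation preprocessing step handles this.
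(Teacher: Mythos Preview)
Your push-out calculation does not reach the sharp threshold $\sigma>1$; as written it only works for $\sigma>2$. The issue is the feasible witness. You plug in the rank-one ground truth $X^\sharp=vv^\top$, which yields objective value $\iprod{A,X^\sharp}\approx\sigma(1-\mu)^2 n^2$. On the other side, the Grothendieck noise floor is $\normg{W_{\bar C}}=(2+o(1))(1-\mu)^{3/2}n^2$: the hidden constant in your $g(s)=\Theta(s^{3/2}\sqrt n)$ is $2$, not something negligible. Chaining your inequalities gives
\[
(x^*_{\bar C})^\top X_{\bar C}\,x^*_{\bar C}\;\ge\;(1-\mu)^2 n^2-\tfrac{2}{\sigma}(1-\mu)^{3/2}n^2-o_\mu(1)\,n^2,
\]
which is positive only when $\sigma>2/\sqrt{1-\mu}>2$. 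So the claim that ``the sharp constant, the analogue of the Kesten--Stigum/BBP threshold, materializes'' at $\sigma=1+\Omega(1)$ is not supported by the argument you wrote.

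The paper never pits the naive value $\sigma n^2$ against the noise floor $2n^2$. It instead exploits the Montanari--Sen push-out (\cref{theorem:pushout_effect_Z2}): $\SDP(A^0)\ge(2+\Delta(\sigma))n^2$ with $\Delta(\sigma)>0$ whenever $\sigma>1$, while $\SDP(W)\le(2+\xi)n^2$ for arbitrarily small $\xi$; the gap used is $\Delta(\sigma)-\xi$, not $\sigma-2$. To make this go through under corruption, the program searches jointly for $X$ \emph{and} for a candidate clean set (a $\{0,1\}$ vector $w$), and the feasible point is the pair $(S^*,\text{SDP optimizer on }A^0_{S^*})$, which attains the value $(2+\Delta(\sigma))(1-\mu)^2 n^2$ that your $X^\sharp$ cannot. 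The constraint that does the work on the corrupted part is not a family of principal-submatrix value caps but a single \emph{spectral norm} bound $\normop{A\odot ww^\top}\le(\sigma+\sigma^{-1})n$: via \cref{lem:spectral-thin-grothendieck} this controls $\SDP(A_S-A_{S'})$ for \emph{every} $S'\subseteq S$, which is precisely the off-diagonal corrupted contribution you flagged as the main obstacle. Your principal-submatrix constraints $\iprod{A_S,X_S}\le\tau(\abs S)$ give upper bounds on values, not on differences, and do not obviously control that cross term.
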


\section{Related previous work}

\paragraph{Edge corruption robust algorithms}
A large body of work, concerning stochastic block model, has focused on the settings where an adversary may arbitrarily modify $\Omega(n)$  edges.
When the average degree $d$ diverges, \cite{montanari2016semidefinite} showed that a simple semidefnite program is robust to such perturbations.
 \cite{ding2022robust} presented a complementary result, providing weak recovery guarantees for the case when the average degree is bounded.

\paragraph{Node corruption robust algorithms}
Constant-fraction node corruptions were first studied in \cite{liu2022minimax}, where the authors obtained a polynomial-time algorithm that achieves optimal recovery rates when $\epsilon^2 d - 1$ is a sufficiently large constant. Although this provides weak recovery for large values of $\epsilon^2 d$, it falls short of reaching the KS threshold, where $\epsilon^2 d$ approaches $1$. In contrast, \cite{stephan2019robustness} provides a fast algorithm that achieves the KS threshold, but their algorithm is only robust to the corruption of $O(n^{0.001})$ vertices in sparse graphs, offering weaker robustness guarantees compared to our paper.

\paragraph{Other notions of robustness}
The algorithm of \cite{liu2022minimax} is further robust against unbounded monotone changes. \textit{Monotone adversaries} have been shown to make it information theoretically impossible to reach the KS threshold (\cite{Moitra16}).
This implies that finding algorithms robust against monotone adversaries is infeasible in our settings. 

\cite{abbe2020graph} proposed a spectral powering algorithm that is able to practically achieve the KS threshold. This algorithm is robust against tangles and cliques, making it applicable to the geometric block model. 
However, it is not expected to provide comparable robustness guarantees under stronger adversaries.

\paragraph{$\Z_2$ synchronization}
In the related $\Z_2$ synchronization model, similar guarantees have also been obtained. When the number of corrupted entries is limited to $\tilde{O}(n^{3/2})$, the weak recovery threshold $\sigma>1$ (BBP threshold) can be achieved via basic semidefinite programming (\cite{montanari2016semidefinite}). Recently, \cite{liu2022minimax} showed that, under the row/column-corruption model (see \cref{def:Z2-corruption}), achieving the minimax error rate is possible with $o(n)$ corrupted rows/columns, when $\sigma$ is greater than a sufficiently large constant. However, when $\sigma$ is close to $1$ and $\Omega(n)$ rows/columns are corrupted, no weak recovery algorithms (including inefficient algorithms) are known.

\section{Techniques}
We outline here the main ideas behind \cref{thm:main} and \cref{thm:sos-z2-informal}.
The algorithm is splitted into two components, each tailored to handle one degree regime.
For the regime with average degree $d > \ddelta$, where $\ddelta$ is a constant that only depends on $\delta:=d\eps^2-1$, our starting point is the result of \cite{montanari2016semidefinite}. 
For the sparse regime with $d \leq \ddelta$, we will borrow from \cite{ding2022robust}.

\paragraph{Push-out effect of the basic SDP}
Consider the settings $d > \ddelta$ and, for simplicity of the exposition, assume $d=\omega(1)$ and $\mu=o(1)$.  \cite{montanari2016semidefinite} proved that the following SDP program --which we refer to as the basic SDP--
\begin{equation*}
	\SDP(M)
	= \max \left\{ \Iprod{M, X} : X \succeq 0, X_{ii} = 1  \forall i \in [n] \right\}
\end{equation*}
achieves weak recovery at the KS threshold. Concretely, for an uncorrupted graph $G^0 \sim \sbm$ with centered adjacency matrix $\tilde{A^0}$, they showed for some constant $\Deltadelta>0\,,$
\begin{align}
	 \SDP(\tilde{A^0})
	&\geq (2 + \Deltadelta) n \sqrt{d}\,,\label{eq:basic-sdp-push-out}\\
	\SDP(\tilde{A^0} - \frac{\epsilon d}{n} \dyad{x^*})
	&\leq \Paren{2 + \frac{\Deltadelta}{2}} n \sqrt{d}\,.\label{eq:basic-sdp-push-in}
\end{align}
with probability $1 - \exp(-\Omega_{\delta}(n))$.
Taken together, these inequalities highlight a significant shift in the SDP value resulting from the subtraction of a rank-$1$ matrix from $\tilde{A^0}$. This phenomenon is often referred to as the \textit{push-out effect} and has often been exploited to design algorithms (\cite{LowerBoundPCA,montanari2016semidefinite,ding2022robust}). 
Additionally, the exponential concentration probability allows us to demonstrate that the \textit{push-out effect} occurs for every principal submatrix of size 
$(1-o(1))n\times (1-o(1))n$

It is easy to see that the basic SDP is robust against \textit{some} adversarial perturbations. A single edge alteration can change both \cref{eq:basic-sdp-push-out} and \cref{eq:basic-sdp-push-in} by at most $2$.
As $ \SDP(\tilde{A^0}) - \SDP(\tilde{A^0} - \frac{\epsilon d}{n} \dyad{x^*}) \geq \Omega(n\sqrt{d})\,,$ as long as the number of edge corruptions is bounded from above by $O(n\sqrt{d})$, the algorithm can still approximately recover the communities.

While this algorithm is robust to some edge adversarial perturbations, it is highly non-trivial whether it still works in presence of $\Omega(1)$-fraction of corrupted nodes, even if we assume all corrupted nodes have degree $O(d)$. 
In the node corruption model, the number of modified edges can reach $\Omega(n \cdot d)$, which is far more than $n\sqrt{d}$ when $d=\omega(1)$. 
The gap between $n d$ and $n\sqrt{d}$ indicates that we need a fundamentally different approach to find algorithms robust to $\Omega(n)$ corrupted nodes.

\paragraph{Push-out effect of submatrices}
A priori it is not clear whether it is possible to recover the signal in presence of node corruptions, or if such an adversary has the capability of hiding all the information. 
A good news is that, while the basic SDP is fragile to node corruptions, it \textit{suggests} a plausible direction to design an (inefficient!) algorithm robust to node corruptions. The key observation is that there is always a principal submatrix of size $(1-\mu)n\times (1-\mu)n$ free from corruption. More specifically, let $\tilde{A}$ be the adjacency matrix of the corrupted graph, the structure of node corruptions implies that the uncorrupted vertices $S^*\subseteq [n]$ satisfies $\tilde{A}_{S^*}=\tilde{A}^0_{S^*}$, where $\tilde{A}_{S^*}$ and $\tilde{A}^0_{S^*}$ denote the submatrix of $\tilde{A}$ and $\tilde{A}^0$ restricted to the set $S^* \times S^*$. Moreover, it can be shown that, with high probability, the push-out effect \textit{still holds} for this submatrix. That is:
\begin{align}
	\SDP(\tilde{A}_{S^*})
	&\geq (2 + \Deltadelta) (1-\mu) n \sqrt{d}\,,\label{eq:submatrix-basic-sdp-push-out}\\
	\SDP\Paren{\tilde{A}_{S^*} - \frac{\epsilon d}{n} x_{S^*}^*x^{*\top}_{S^*}}
	&\leq \Paren{2 + \frac{\Deltadelta}{2}} (1-\mu) n \sqrt{d}\,.\label{eq:submatrix-basic-sdp-push-in}
\end{align}
In other words, if we \textit{knew} the set of uncorrupted nodes, then we would still be able to approximately recover the communities.

Unfortunately, the set of uncorrupted nodes $S^*$ is not immediately known. Moreover, even disregarding computational issues, it remains unclear how one could identify such a set.
A rudimentary strategy to address this challenge would be to identify a subset $S\subseteq [n]$ such that the objective value of the $\SDP(\tilde{A}_{S})$ is large and to use the optimizer $X$ as an estimator. However, this approach presents a problem in that the selected set $S$ may contain corrupted vertices, leading to a situation where the optimizer $X$ may align with the corruption rather than accurately reflecting the true labels.\footnote{Note that even with no corruption, when $\delta$ is a small constant, the optimizer $X$ is only weakly correlated with $\dyad{x^*}$.}

A natural way to circumvent this issue, is to search over \textit{pairs} $(S, X)$ where $S\subseteq [n]$ and $X$ is a positive semidefinite matrix that fulfills the \textit{submatrix push-out constraints} that is described below.

\begin{definition}\label{def:submatrix-pushout}
	Given a corrupted graph $G$ as described in \cref{def:node_corrupted_SBM} and its centered adjacency matrix $\tilde{A}$, consider a set $S\subseteq [n]$ such that $|S|=(1-\mu) n$ and a positive semidefinite matrix $X$ where $X_{ii}=1$ for all $i\in [n]$,
	we say that the triplet $(\tilde{A},S,X)$ satisfies \textit{submatrix push-out constraints} if and only if for every subset $S'\subseteq S$ such that $|S'|\geq (1-2\mu)n$, it holds that
	\begin{equation*}
		\iprod{\tilde{A}_{S'},X_{S'}} \geq (2+\Deltadelta) (1- o(1)) n\sqrt{d}\,.
	\end{equation*}
\end{definition}

The \textit{submatrix push-out constraints} is useful because, if one can find $S,X$ such that $(\tilde{A}, S,X)$ satisfies the \textit{submatrix push-out constraints}, then for $S'=S \cap S^*$, which is the set of uncorrupted nodes in set $S$ and has size at least $(1-2\mu)n$, it follows from \cref{def:submatrix-pushout} that

\begin{equation*}
	\iprod{\tilde{A}_{S'},X_{S'}} \geq (2+\Deltadelta) (1- o(1)) n\sqrt{d} \,.
\end{equation*}

Therefore, we know that $X$ correlates well with uncorrputed nodes $S'$ in set $S$. By the basic SDP push-out effect of submatrices, we obtain

\begin{equation*}
	\iprod{\tilde{A}_{S'} - X^*_{S'},X_{S'}} \leq \Paren{2 + \frac{\Deltadelta}{2}} (1-o(1)) n \sqrt{d} \,.
\end{equation*}

Thus, one can deduce that $\iprod{X_{S'},X_{S'}^*} \geq \Omega(n^2)$. Since the entries of $X$ are within $[-1, 1]$, we can further obtain $\iprod{X,X^*} \geq \Omega(n^2)$ as well.
Subsequently, after applying the standard rounding procedure outlined in \cref{lem:rounding}, we obtain an estimator $\hat{x}\in \{\pm 1\}^n$ with a weak recovery guarantee $\iprod{x^*,\hat{x}}^2 \geq \Omega(n^2)$.

\paragraph{Certificates for the submatrix push-out effect}
Even with the \textit{submatrix push-out constraints}, two fundamental challenges remain. \textit{First}, we need to prove the existence of a pair $(S, X)$ that satisfies the submatrix push-out constraints. \textit{Second}, we need to be able to find such a pair efficiently.

With regard to the first challenge, ideally we would like to prove that the set of uncorrupted nodes $S^*$ and the optimizer $X$ of $\SDP(\tilde{A}_{S^*})$ fulfill the submatrix push-out constraints. However, it is difficult prove this: even though we have established that $\iprod{\tilde{A}_{S^*}, X} \geq (2+\Omega(1)) (1-\mu) n\sqrt{d}$, it remains unclear whether $\iprod{\tilde{A}_{S^*} - \tilde{A}_{S'}, X}$ is small for all $S'\subseteq S^*$ of size $(1-2\mu)n$. 

To overcome this barrier, we make the following crucial observation:
\begin{lemma}[Formal statement and 
	proof in \cref{sec:spectral-gr}]\label{lem:spectral-thin-grothendieck}
	Given $S$ of size $(1-\mu)n$, if
	$\Normop{\tilde{A}_{S}}\leq O(\sqrt{d})$, then 
	$\SDP(\tilde{A}_{S}-\tilde{A}_{S'}) \leq O(\mu n \sqrt{d})$
	for all $S'\subseteq S$ of size at least $(1-2\mu) n$.
\end{lemma}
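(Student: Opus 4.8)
The plan is to decompose the difference $\tilde{A}_S - \tilde{A}_{S'}$ into a "cross" of the rows and columns indexed by $T \coloneqq S \setminus S'$, which has size at most $\mu n$, and then bound $\SDP$ of such a cross-structured matrix using the operator norm bound on $\tilde{A}_S$. Write $\tilde{A}_S - \tilde{A}_{S'}$ (padding $\tilde{A}_{S'}$ with zeros so both live in $\R^{S \times S}$) as $P_T \tilde{A}_S + \tilde{A}_S P_T - P_T \tilde{A}_S P_T$, where $P_T$ is the diagonal projector onto coordinates in $T$. For any feasible $X$ of $\SDP$ (so $X \succeq 0$, $X_{ii}=1$), I need to bound $\iprod{P_T \tilde{A}_S + \tilde{A}_S P_T - P_T \tilde{A}_S P_T, X}$. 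The term $\iprod{P_T \tilde{A}_S P_T, X} = \iprod{\tilde{A}_S, P_T X P_T} = \iprod{\tilde{A}_{T}, X_T} \le \Normop{\tilde{A}_T}\cdot \Tr(X_T) \le \Normop{\tilde{A}_S} \cdot |T| \le O(\sqrt d)\cdot \mu n$, using that $X_T \succeq 0$ has trace $|T|$ and that the operator norm of a principal submatrix is at most that of the full matrix. The two cross terms are handled symmetrically, so it suffices to bound $\iprod{P_T \tilde{A}_S, X} = \iprod{\tilde{A}_S, P_T X}$.

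For the cross term, the key point is that $\iprod{\tilde{A}_S, P_T X}$ only "sees" the columns of $X$ indexed by $T$; write $\iprod{\tilde{A}_S, P_T X} = \sum_{i \in T} (\tilde{A}_S X)_{ii} = \sum_{i \in T} \iprod{(\tilde{A}_S)_i, X_i}$ where $(\tilde{A}_S)_i$ and $X_i$ are the $i$-th columns. By Cauchy–Schwarz over the index $i \in T$, $\sum_{i\in T} \iprod{(\tilde{A}_S)_i, X_i} \le \sqrt{\sum_{i \in T}\norm{(\tilde{A}_S)_i}_2^2}\cdot\sqrt{\sum_{i\in T}\norm{X_i}_2^2}$. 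The second factor is $\sqrt{\iprod{P_T X^2 P_T \text{-ish}}}$; more carefully, $\sum_{i \in T}\norm{X_i}_2^2 = \Tr(P_T X^2 P_T) \le \Normop{X}\cdot \Tr(P_T X P_T) = \Normop{X}\cdot |T|$, and since $X \succeq 0$ with unit diagonal one has $\Normop{X} \le \Tr(X) = n$, so this is at most $\sqrt{n |T|} \le \sqrt{n \cdot \mu n} = \sqrt\mu\, n$. Hmm, that gives $\sqrt\mu \,n$ rather than $\mu n$; I would instead bound $\sum_{i\in T}\norm{X_i}_2^2$ more tightly. Since each row of $X$ has $\ell_2$-norm at most $\sqrt n$ is too weak; the right move is to use $X \preceq$ something, or to absorb this factor into the first factor. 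The first factor $\sum_{i\in T}\norm{(\tilde{A}_S)_i}_2^2 = \Tr(P_T \tilde{A}_S^2 P_T)$; I want this to be $O(d |T|)$, which holds provided the rows of $\tilde{A}_S$ indexed by $T$ have squared norm $O(d)$ each — this is where I should be careful: $T \subseteq S$ are uncorrupted-or-not vertices inside $S$, and a corrupted vertex could have large degree. So the cleaner route is: $\Tr(P_T \tilde{A}_S^2 P_T) \le \Tr(\tilde{A}_S^2 P_T) \le \Normop{\tilde{A}_S^2}\cdot |T| \le O(d)\cdot |T| \le O(d)\cdot\mu n$ using the hypothesis $\Normop{\tilde{A}_S}\le O(\sqrt d)$ again. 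Combining, $\iprod{P_T\tilde{A}_S, X} \le \sqrt{O(d)\mu n}\cdot \sqrt{\Normop{X}|T|} \le \sqrt{O(d)\mu n}\cdot\sqrt{n\cdot\mu n}$, which is $O(\sqrt d)\cdot \mu n \cdot \sqrt n$ — too big by $\sqrt n$, so the naive $\Normop{X}\le n$ bound is too lossy.

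The fix, and the step I expect to be the main obstacle, is to bound $\sum_{i \in T}\norm{X_i}_2^2$ by $O(|T|)$ rather than $O(n|T|)$ — equivalently, to argue that the columns of the SDP optimizer restricted to a small set of coordinates cannot have large norm. This is false for a general feasible $X$ (take $X = \dyad{v}$ for a $\pm1$ vector $v$: every column has norm $\sqrt n$), so Lemma statement must be invoking the value bound $\Normop{\tilde{A}_S}\le O(\sqrt d)$ in a way that also controls $X$, OR the $\SDP$ on the left should be thought of with its optimizer $X$, and one bounds directly $\SDP(\tilde{A}_S - \tilde{A}_{S'}) = \iprod{\tilde{A}_S - \tilde{A}_{S'}, X^{opt}}$ without splitting. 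The correct approach, I believe, is: $\SDP(\tilde{A}_S - \tilde{A}_{S'})$ equals the sum of a term $\iprod{\tilde{A}_{S'}, X}$ which is lower-bounded (not what we want) — no. Let me instead use the standard Grothendieck-type bound: for any matrix $B$ supported on a "cross" $T \times S \cup S \times T$, $\SDP(B) \le O(1)\cdot \normop{B}\cdot \sqrt{|T| \cdot |S|}$ by the Grothendieck inequality applied to the bipartite structure — but again that's $\sqrt{|T| n}$. The honest resolution, which I'd carry out: the SDP objective $\iprod{M, X}$ with $X \succeq 0$, unit diagonal, is at most $\sum_i \sqrt{\sum_j M_{ij}^2}$ is too weak; rather $\iprod{M,X} \le n\normop{M^{sym part on the cross}}$-type bounds fail. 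So the real plan is to prove it via the dual / via writing $\tilde A_S - \tilde A_{S'} = \tilde A_S^{1/2}$-symmetrization is impossible since $\tilde A_S$ is indefinite. I would therefore follow the route used in the Grothendieck-norm-of-submatrix literature (e.g. the "push-out of the Grothendieck norm of principal submatrices" advertised in the abstract): bound $\SDP(\tilde{A}_S - \tilde{A}_{S'}) = \SDP(\tilde A_S) - \SDP(\tilde A_{S'})$ is FALSE in general, but $\SDP(\tilde{A}_S - \tilde{A}_{S'}) \le \SDP(\tilde A_S \text{ restricted to cross})$, and for a symmetric matrix whose nonzero entries lie only in rows/columns indexed by $T$, one has $\SDP(B) \le 2\sum_{i\in T}\sqrt{\sum_j B_{ij}^2}\cdot\sqrt{\text{diag}} = 2\sum_{i\in T}\norm{B_i}_2 \le 2\sqrt{|T|}\sqrt{\sum_{i\in T}\norm{B_i}_2^2} = 2\sqrt{|T|}\sqrt{\Tr(P_T \tilde A_S^2 P_T \cdot(\text{stuff}))}$ — and crucially, for $X \succeq 0$ with unit diagonal, $\iprod{B, X} = \sum_{i\in T}\iprod{B_i, X_i} \le \sum_{i\in T}\norm{B_i}_2\norm{X_i}_2 \le \sum_{i \in T}\norm{B_i}_2 \cdot \sqrt n$ (since $\norm{X_i}_2^2 = X^2_{ii} \le \Normop X \cdot X_{ii} \le n$) — still $\sqrt n$. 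I now think the intended bound genuinely uses that $B = P_T\tilde A_S + \tilde A_S P_T - P_T \tilde A_S P_T$ and $\iprod{B,X} = 2\iprod{\tilde A_S, P_T X} - \iprod{\tilde A_{S,T}, X_T}$, and one bounds $\iprod{\tilde A_S, P_T X} \le \normop{\tilde A_S}\cdot\normn{P_T X}$ (operator-nuclear duality), with $\normn{P_T X} = \Tr\sqrt{X P_T X} = \Tr\sqrt{(P_T X)(P_T X)^\top\text{-ish}} \le \sqrt{\rank}\normf{P_T X} = \sqrt{|T|}\cdot\sqrt{\Tr(P_T X^2 P_T)}\le\sqrt{|T|}\cdot\sqrt{|T| n}$ — no. But $\normn{P_T X} \le \sqrt{|T|}\normf{X P_T}$ is wrong on the rank; $\rank(P_T X)\le |T|$, so $\normn{P_T X}\le\sqrt{|T|}\normf{P_T X} = \sqrt{|T|}\sqrt{\Tr(P_T X^2 P_T)} \le \sqrt{|T|}\sqrt{n\cdot|T|}$ since $\Tr(P_T X^2 P_T)\le\Normop X\Tr(P_T X P_T)\le n|T|$. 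That is $|T|\sqrt n$, giving $\normop{\tilde A_S}\cdot|T|\sqrt n = O(\sqrt d)\cdot\mu n\sqrt n$. The $\sqrt n$ overshoot is stubborn, which confirms that the main obstacle is controlling $\normf{P_T X}$: one must show $\Tr(P_T X^2 P_T) = O(|T|^2 \cdot\text{polylog})$ or $O(|T| d /\text{something})$ using a property of $X$ beyond feasibility — presumably that $X$ is (near-)optimal for an SDP whose value is $O(n\sqrt d)$, which forces $X$'s energy to concentrate. Concretely: if $\iprod{\tilde A_S, X} \le O(n\sqrt d)$ and $\normop{\tilde A_S}\le O(\sqrt d)$, a spectral argument (decompose $X$ along eigenvectors of $\tilde A_S$) shows the bulk of $\Tr(X^2)$ lies in directions where $\tilde A_S$ is small, which should be combined with the cross structure to kill the extra $\sqrt n$. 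I would spend the bulk of the proof making this concentration argument precise; the rest is the routine cross-decomposition and operator-norm bookkeeping sketched above.
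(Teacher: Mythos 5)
Your plan (cross decomposition plus operator-norm/nuclear-norm duality) is genuinely different from the paper's argument, and, as you yourself flag, you never close the gap. The paper's proof in \cref{sec:spectral-gr} is far shorter: since $\tilde A_S - \tilde A_{S'}$ is supported on the cross $(S\times S)\setminus(S'\times S')$, one has the identity $\iprod{\tilde A_S - \tilde A_{S'}, X} = \iprod{\tilde A_S - \tilde A_{S'}, X_S - X_{S'}}$, and the paper then invokes $\iprod{M,Y}\leq\normop{M}\cdot\Tr(Y)$ with $M=\tilde A_S - \tilde A_{S'}$, $Y=X_S-X_{S'}$, using $\Tr(X_S-X_{S'})=|S\setminus S'|\leq\mu n$ and $\normop{\tilde A_S-\tilde A_{S'}}\leq 2\normop{\tilde A_S}=O(\sqrt d)$. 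This is far cleaner than your cross-term bookkeeping and is what delivers the factor $\mu n$ rather than $\sqrt\mu\, n$.

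However, that key step needs $Y=X_S-X_{S'}\succeq 0$ (in general only $\iprod{M,Y}\leq\normop M\cdot\normn Y$ holds; with $M=Y=\diag(1,-1)$ one gets $\iprod{M,Y}=2>0=\normop M\Tr Y$), and $X_S-X_{S'}$ is \emph{not} PSD: with $T\coloneqq S\setminus S'$, its $(S',S')$ block is zero while its $(T,S')$ block is $X_{TS'}$, which forces a negative eigenvalue whenever $X_{TS'}\neq 0$. So the paper's proof has a hole at exactly the point where you stalled, and in fact your $\sqrt\mu$ is the truth. The sharp general bound under the stated hypotheses is $\Theta(\sqrt\mu\cdot n\sqrt d)$. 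For the upper bound, the fix to your computation is to split $X=X^{1/2}X^{1/2}$ before taking norms: $\iprod{\tilde A_S, P_T X P_{S'}}\leq\normop{\tilde A_S}\,\normn{P_T X P_{S'}}$ and $\normn{P_T X P_{S'}}=\normn{(P_T X^{1/2})(X^{1/2}P_{S'})}\leq\normf{P_T X^{1/2}}\,\normf{X^{1/2}P_{S'}}=\sqrt{\Tr X_T}\cdot\sqrt{\Tr X_{S'}}\leq\sqrt{\mu}\,n$; you instead bounded $\normf{P_T X}$ directly, which carries a spurious $\sqrt{\normop X}$. For a matching lower bound showing $O(\mu n\sqrt d)$ is unattainable, take $\tilde A_S=c\bigparen{\Ind_T\Ind_S^{\top}+\Ind_S\Ind_T^{\top}-\Ind_T\Ind_T^{\top}}$ with $c=\Theta\bigparen{\sqrt d/(\sqrt\mu\,n)}$: then $\normop{\tilde A_S}=\Theta\bigparen{c\sqrt{|T|\,|S|}}=\Theta(\sqrt d)$, $\tilde A_{S'}=0$, yet $\SDP(\tilde A_S-\tilde A_{S'})\geq\iprod{\tilde A_S,\Ind\Ind^{\top}}\approx 2c\,|T|\,|S|=\Theta(\sqrt\mu\,n\sqrt d)$, which exceeds $O(\mu n\sqrt d)$ for small $\mu$. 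The corrected bound $O(\sqrt\mu\,n\sqrt d)$ still suffices for the downstream use in \cref{lem:SOS_uncorrupted_subset_correlation} (one just needs $\mu\leq\mu_\delta$ with a quantitatively different $\delta$-dependence), so the paper's overall conclusion is fine, but the lemma as stated and as proved is not, and your instinct to pursue the nuclear-norm route was the right one.
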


This result suggests us to consider the following program:
\begin{equation}
	\label{eq:algo-sdp}
	\begin{aligned}
		\max_{X,S} \quad & \iprod{\tilde{A}_{S}, X}\\
		\textrm{s.t.} \quad & X \succeq 0 \\
		& X_{ii} = 1 \quad \forall i \in [n] \\
		& \Normop{\tilde{A}_{S}}\leq O(\sqrt{d})
	\end{aligned}
\end{equation}

We begin by establishing the feasibility of the program. Although the spectral norm of $\tilde{A}_{S^*}$ can potentially reach $\polylog(n)$, we can leverage the results of \cite{feige2005spectral} and reduce it to $O(\sqrt{d})$ through the pruning of high-degree nodes. 
The feasibility of the program can then be confirmed by taking $S$ as the set of uncorrputed nodes and have degree at most $O(d)$.
Furthermore, by union bound and the push-out effect established in \cref{eq:basic-sdp-push-out} and \cref{eq:basic-sdp-push-in}, we have $\SDP(\tilde{A}_S)\geq (2+\Deltadelta)\cdot (1-2\mu)n\sqrt{d}$ with high probability. Therefore the objective value of this program is at least $(2+\Deltadelta)\cdot (1-2\mu)n\sqrt{d}$.

The optimizer of program \ref{eq:algo-sdp}, denoted by the pair $(\hat{X}, \hat{S})$, can then be shown to satisfy the submatrix push-out constraints as defined in \cref{def:submatrix-pushout}. 
It follows from our previous argument that the objective value of this program is at least $(2+\Deltadelta)\cdot (1-2\mu)n\sqrt{d}$, which implies $\iprod{\tilde{A}_{\hat{S}},\hat{X}}\geq (2+\Deltadelta)\cdot (1-2\mu)n\sqrt{d}$.
Moreover, the program constraints enforce the bound $\Normop{\tilde{A}_{\hat{S}}}\leq O(\sqrt{d})$.
Together with \cref{lem:spectral-thin-grothendieck}, these implies that $\SDP(\tilde{A}_{\hat{S}}-\tilde{A}_{S'}) \leq O(\mu n \sqrt{d})$ for all $S'\subseteq \hat{S}$ with size at most $(1-2\mu) n$.
When $\mu=o(1)$, it follows that $\SDP(\tilde{A}_{S'})\geq \SDP(\tilde{A}_{\hat{S}})+\SDP(\tilde{A}_{S'}-\tilde{A}_{\hat{S}})\geq (2+\Deltadelta)\cdot (1-o(1))\cdot n\sqrt{d}$ for all $S'\subseteq S_{\max}$ with size at most $(1-2\mu) n$.

As a result, using similar analysis as the previous paragraph, due to the basic SDP push-out effect, the optimizer $\hat{X}$ will now have non-trivial correlation with the ground truth $x^*$, that is $\iprod{\hat{X}, X^*} \geq \Omega(n^2)$.

The last step is to turn this exponential-time algorithm into an efficient one.
Fortunately, the above argument can be captured by the Sum-of-Squares proof system, thereby enabling us to use the Sum-of-Squares relaxation of program \ref{eq:algo-sdp} to obtain an estimator $\hat{X}$ such that $\iprod{\hat{X}, X^*} \geq \Omega(n^2)$.

\paragraph{Node robust algorithms for sparse graphs}
In the degree regime $d\leq \ddelta$, a simpler approach works: \textit{remove high-degree vertices iteratively.}
Although all vertices in the graph could have degree $\omega(1)$ under corruption, our strategy limits the number of removed vertices to $O(\mu n)$ by iteratively removing the highest degree node and one of its random neighbors. 
In this way, in each round, the number of corrupted nodes in the remaining graph is reduced by 
$\Omega(1)$ in expectation, meaning that the algorithm will terminate in $O(\mu n)$ rounds in expectation. As a result, the remaining graph differs from the uncorrupted graph by $O(n)$ edges, which allows us to apply the edge robust algorithm from \cite{ding2022robust}.

\paragraph{Comparison with \cite{liu2022minimax}}
 
In \cite{liu2022minimax}, a weak recovery algorithm  is presented that is robust to $o(n)$ node corruptions when $\epsilon^2 d$ is sufficiently large. 
The algorithm conceptually resembles ours, as it also aims to identify a subgraph with desired properties. However, it falls short of reaching the KS threshold. In particular, when there are no corruptions, their algorithm is reduced to a combination of degree pruning and existing spectral algorithms (\cite{feige2005spectral}), which is not known to provide weak recovery guarantees close to the KS threshold.

\section{Preliminaries and Notations}

\label{sec:preliminary}

In this section, we formally define notations and cover necessary preliminaries that will be used throughout the paper.

\paragraph{Matrix and vector notations} We use $\Ind$ to denote the all 1's vector and $J$ to denote the all 1's matrix, i.e. $J = \Ind \Ind^{\top}$. For a vector $u$, we use $u_i$ to denote its $i$-th entry. For a matrix $M$, we use $M_{ij}$ to denote the $(i,j)$-th entry of $M$, $M \succeq 0$ to denote that $M$ is positive semidefinite, $\Tr(M)$ to denote the trace of $M$, $\Normop{M}$ to denote the spectral/operator norm of $M$ and $\Normf{M}$ to denote the Frobenius norm of $M$.
For two matrices $X$ and $Y$ of the same size, we use $\odot$ to denote the Hadamard product and we define their inner product by
$\iprod{X, Y}
= \sum_{i, j = 0}^n X_{ij} Y_{ij}
= \Tr(X^{\top} Y)$.
Additionally, given a set $S \subseteq [n]$, we use $v_S$ to denote the subvector restricted to the set $S$ and $M_S$ to denote the submatrix of $M$ where we only keep entries in the set $S \times S$, that is $M_S = M \odot (\Ind_{S} \Ind_{S}^{\top})$.

\paragraph{Stochastic block model notations} We use $\delta = \epsilon^2 d - 1$ to denote the distance to the KS threshold, use $A^0$ to denote the adjacency matrix of the uncorrupted graph $G^0$, use $A$ to denote the adjacency matrix of the corrupted graph $G$, use $X^* = x^* (x^*)^{\top}$ to denote the label matrix, use $S^*$ to denote the uncorrupted set of vertices, use $\tilde{A^0} = A^0 - \frac{d}{n} J$ to denote the centered uncorrpted adjacency matrix and use $\tilde{A} = A - \frac{d}{n} J$ to denote the centered corrupted adjacency matrix.

\paragraph{Basic SDP and Grothendieck norm} We define basic SDP and Grothendieck norm as follows

\begin{definition}[Basic SDP]
\label{def:symmetric_grothendieck_norm}
We define basic SDP as follows
\begin{equation}
\label{eq:symmetric_grothendieck_norm1}
    \SDP(M)
    = \max \left\{ \Iprod{M, X} : X \succeq 0, X_{ii} = 1  \forall i \in [n] \right\}
\end{equation}
An equivalent definition (can be easily verified using eigendecomposition of $X$) is
\begin{equation}
\label{eq:symmetric_grothendieck_norm2}
    \SDP(M)
    = \max \left\{ \sum_{i, j = 1}^n M_{ij} \iprod{\sigma_i, \sigma_j} : \sigma_i \sim S^{n-1} \right\}
\end{equation}
where $S^{n-1}$ is the $n$-dimensional unit sphere.
\end{definition}

\begin{definition}[Grothendieck norm]
\label{def:asymmetric_grothendieck_norm}
Let matrix function $P_\Gamma: \R^{n \times n} \rightarrow \R^{2n \times 2n}$ be defined as
\begin{equation*}
P_\Gamma(M)
=
\begin{bmatrix}
0 & M\\
0 & 0
\end{bmatrix}
\end{equation*}
We define Grothendieck norm $\Norm{\cdot}_{Gr}: \R^{n \times n} \rightarrow \R$ as
\begin{equation}
\label{eq:asymmetric_grothendieck_norm1}
    \Norm{M}_{Gr}
    = \max \left\{ \Iprod{P_\Gamma(M), X} : X \succeq 0, X_{ii} = 1 \forall i \in [2n] \right\}
\end{equation}
An equivalent definition (the equivalence can be easily verified using eigendecomposition of $X$) is
\begin{equation}
\label{eq:asymmetric_grothendieck_norm2}
    \Norm{M}_{Gr}
    = \max \left\{ \sum_{i, j = 1}^n M_{ij} \iprod{\sigma_i, \delta_j} : \sigma_i \sim S^{n-1}, \delta_i \sim S^{n-1} \right\}
\end{equation}
where $S^{n-1}$ is the $n$-dimensional unit sphere.
\end{definition}

From \cref{def:symmetric_grothendieck_norm} and \cref{def:asymmetric_grothendieck_norm}, it is easy to get the following inequalities between the basic SDP and Grothendieck norm.

\begin{claim}[Proved in \cref{sec:sdp-gr}]\label{claim-sdp-gr}
Given matrix $M$, we have $\SDP(M) \leq \Norm{M}_{Gr}$.
\end{claim}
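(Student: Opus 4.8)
The plan is to exhibit, for any feasible point of the basic SDP, a corresponding feasible point of the Grothendieck program with the same objective value. Concretely, suppose $X \succeq 0$ with $X_{ii} = 1$ for all $i \in [n]$ achieves $\SDP(M) = \Iprod{M, X}$. I would build from $X$ a $2n \times 2n$ block matrix
\[
  Y = \begin{bmatrix} X & X \\ X & X \end{bmatrix}.
\]
First I would check $Y \succeq 0$: since $X \succeq 0$, we can write $X = \transpose{U}U$, and then $Y = \transpose{V}V$ where $V = [\,U \ \ U\,]$, so $Y$ is a Gram matrix and hence PSD. Next, the diagonal entries of $Y$ are exactly the diagonal entries of $X$ (each of the two diagonal blocks is $X$), so $Y_{ii} = 1$ for all $i \in [2n]$. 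Thus $Y$ is feasible for the program defining $\Norm{M}_{Gr}$ in \cref{eq:asymmetric_grothendieck_norm1}.

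Then I would compute $\Iprod{P_\Gamma(M), Y}$. Writing $Y$ in the same $2 \times 2$ block structure as $P_\Gamma(M) = \begin{bmatrix} 0 & M \\ 0 & 0 \end{bmatrix}$, only the top-right block of $P_\Gamma(M)$ is nonzero, and it multiplies (in the trace inner product) against the top-right block of $Y$, which is $X$. Hence $\Iprod{P_\Gamma(M), Y} = \Iprod{M, X} = \SDP(M)$. Since $Y$ is feasible for the Grothendieck program, which is a maximization, we conclude $\Norm{M}_{Gr} \geq \Iprod{P_\Gamma(M), Y} = \SDP(M)$, as desired.

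There is no real obstacle here; the only point requiring a moment of care is confirming that the off-diagonal blocks of $Y$ being a full copy of $X$ (rather than, say, zero) is both necessary to pick up the objective value and harmless for PSD-ness and the unit-diagonal constraint — the Gram-matrix representation $V = [U \ U]$ handles this cleanly. Alternatively, one can argue entirely through the vector formulations \cref{eq:symmetric_grothendieck_norm2} and \cref{eq:asymmetric_grothendieck_norm2}: given unit vectors $\sigma_1, \dots, \sigma_n$ achieving the basic SDP value, set $\delta_i = \sigma_i$ for each $i$; these are unit vectors, and $\sum_{ij} M_{ij}\Iprod{\sigma_i, \delta_j} = \sum_{ij} M_{ij}\Iprod{\sigma_i, \sigma_j}$, so the Grothendieck value is at least the basic SDP value. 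I would probably present this second version since it is the shortest, and relegate the matrix-lifting view to a remark if space permits.
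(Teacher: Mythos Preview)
Your proposal is correct, and your preferred approach via the vector formulations \cref{eq:symmetric_grothendieck_norm2} and \cref{eq:asymmetric_grothendieck_norm2} with $\delta_i = \sigma_i$ is exactly the paper's proof. The block-matrix lifting you sketch first is a valid alternative phrasing of the same idea (indeed the Gram matrix of $V=[U\ U]$ is precisely how $\delta_i=\sigma_i$ manifests in the matrix formulation), so there is no substantive difference.
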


\begin{claim}[Proved in \cref{sec:proof_monotonicity}]
\label{claim:monotonicity_sdp}
    Let $M$ be an $n \times n$ matrix whose diagonal entries are 0 and $S \subseteq [n]$ be a subset of indices, we have $\SDP(M_S) \leq \SDP(M)$.
\end{claim}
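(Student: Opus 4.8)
The plan is to prove the inequality by taking an optimal feasible solution for the smaller program $\SDP(M_S)$ and ``completing'' it to a feasible solution for $\SDP(M)$ that achieves exactly the same objective value; this immediately gives $\SDP(M)\geq\SDP(M_S)$.

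First I would fix an optimizer $Y$ of the program defining $\SDP(M_S)$, so that $Y\succeq 0$, $Y_{ii}=1$ for all $i\in[n]$, and $\iprod{M_S,Y}=\SDP(M_S)$. Since $M_S=M\odot(\Ind_S\Ind_S^\top)$ vanishes outside $S\times S$, only the principal block of $Y$ indexed by $S$ contributes to the objective, i.e. $\SDP(M_S)=\sum_{i,j\in S}M_{ij}Y_{ij}$. Next I would build a candidate $X$ for $\SDP(M)$ by keeping this block and padding with an identity block on the complement: set $X_{ij}=Y_{ij}$ for $i,j\in S$, set $X_{ii}=1$ for $i\notin S$, and set all remaining entries to $0$. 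Then $X$ is block diagonal with blocks $Y_S$ and $\Id_{[n]\setminus S}$; because $Y_S$ is a principal submatrix of the PSD matrix $Y$ it is itself PSD, so $X\succeq 0$, and by construction $X_{ii}=1$ for every $i\in[n]$. Hence $X$ is feasible for $\SDP(M)$.

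Finally I would evaluate $\iprod{M,X}$. The only nonzero entries of $X$ are those of the $S\times S$ block (equal to $Y_S$) together with the diagonal entries indexed outside $S$; the latter contribute $\sum_{i\notin S}M_{ii}$, which is $0$ precisely because $M$ has zero diagonal. Therefore $\iprod{M,X}=\sum_{i,j\in S}M_{ij}Y_{ij}=\SDP(M_S)$, and consequently $\SDP(M)\geq\iprod{M,X}=\SDP(M_S)$, as claimed.

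I do not expect a genuine obstacle here; the only points that need care are, first, that the zero-diagonal hypothesis is actually used — without it the padding step would add $\sum_{i\notin S}M_{ii}$, which need not be nonpositive — and second, that the positive semidefiniteness of $X$ should be argued through its block-diagonal structure (and the fact that principal submatrices of PSD matrices are PSD) rather than any direct reasoning about $M$. The same argument can be phrased with the vector formulation \eqref{eq:symmetric_grothendieck_norm2} by assigning to each index $i\notin S$ a unit vector orthogonal to all the others, but the block-diagonal matrix version is the cleanest to write.
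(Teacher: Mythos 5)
Your proof is correct and is essentially identical to the paper's: both take an optimizer $Y$ of $\SDP(M_S)$, form the block-diagonal matrix $Y_S+\Id_{[n]\setminus S}$ (which you call $X$, the paper calls $Z$), verify feasibility via the PSD principal-submatrix fact, and use the zero diagonal of $M$ to match the objective value. No meaningful difference in approach.
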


\paragraph{Grothendieck inequality} The celebrated Grothendieck inequality relates Grothendieck norm and the $\infty \rightarrow 1$ norm.

\begin{definition}[$\infty \rightarrow 1$ norm]
Let us define $\infty \rightarrow 1$ norm $\Norm{\cdot}_{\infty \rightarrow 1}: \R^{n \times n} \rightarrow \R$ as
\begin{equation*}
    \Norm{M}_{\infty \rightarrow 1}
    = \max \left\{ \iprod{x, My} : x, y \in \{\pm 1\}^n \right\}
\end{equation*}
\end{definition}

\begin{theorem}[Grothendieck inequality, see \cite{alon2004approximating}]
\label{theorem:grothendieck_inequality}
Let $M$ be a real matrix of size $n \times n$. We have
\begin{equation*}
    \Norm{M}_{\infty \rightarrow 1} \leq\Norm{M}_{Gr}
    \leq K_G \Norm{M}_{\infty \rightarrow 1}
\end{equation*}
where $K_G$ is a universal constant called the Grothendieck constant.
\end{theorem}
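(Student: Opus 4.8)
The plan is to prove the two bundled inequalities separately. The left one, $\Norm{M}_{\infty\rightarrow1}\le\Norm{M}_{Gr}$, is immediate from the vector formulation \eqref{eq:asymmetric_grothendieck_norm2}: given $x,y\in\{\pm1\}^n$ attaining the $\infty\rightarrow1$ norm, I would fix any unit vector $e\in S^{n-1}$ and take $\sigma_i=x_ie$, $\delta_j=y_je$; these are unit vectors with $\iprod{\sigma_i,\delta_j}=x_iy_j$, hence feasible, and they yield objective $\sum_{i,j}M_{ij}x_iy_j=\iprod{x,My}=\Norm{M}_{\infty\rightarrow1}$. The right inequality, with the explicit value $K_G\le\pi/\bigl(2\ln(1+\sqrt2)\bigr)$, is Grothendieck's inequality proper, and I would prove it by Gaussian rounding together with Krivine's preprocessing trick.

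First I would fix unit vectors $(\sigma_i)_{i\in[n]},(\delta_j)_{j\in[n]}$ in a finite-dimensional Hilbert space attaining $\Norm{M}_{Gr}$. The workhorse is Grothendieck's identity: for unit vectors $u,v$ and a standard Gaussian $g$ in their ambient space, $\E\bigl[\sign\iprod{g,u}\cdot\sign\iprod{g,v}\bigr]=\tfrac{2}{\pi}\arcsin\iprod{u,v}$. Rounding $\sigma_i,\delta_j$ directly would give $\sum_{i,j}M_{ij}\tfrac{2}{\pi}\arcsin\iprod{\sigma_i,\delta_j}$, which is the wrong quantity; I want a fixed multiple of $\sum_{i,j}M_{ij}\iprod{\sigma_i,\delta_j}=\Norm{M}_{Gr}$. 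Krivine's idea is to first replace $\sigma_i,\delta_j$ by new unit vectors $u_i,v_j$ in a larger Hilbert space with $\iprod{u_i,v_j}=\sin\bigl(c\iprod{\sigma_i,\delta_j}\bigr)$ for $c=\ln(1+\sqrt2)=\sinh^{-1}(1)$, so that $\arcsin$ and $\sin$ cancel after rounding.

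To build these vectors I would use the Taylor expansion $\sin(ct)=\sum_{k\ge0}\tfrac{(-1)^kc^{2k+1}}{(2k+1)!}t^{2k+1}$ and the identity $\iprod{\sigma,\delta}^{m}=\iprod{\sigma^{\tensor m},\delta^{\tensor m}}$ for unit vectors: set $u_i=\bigoplus_{k\ge0}a_k\,\sigma_i^{\tensor(2k+1)}$ and $v_j=\bigoplus_{k\ge0}b_k\,\delta_j^{\tensor(2k+1)}$ with $a_k=(-1)^k\sqrt{c^{2k+1}/(2k+1)!}$ and $b_k=\sqrt{c^{2k+1}/(2k+1)!}$, so that $a_kb_k$ is exactly the $k$-th Taylor coefficient and $\iprod{u_i,v_j}=\sin\bigl(c\iprod{\sigma_i,\delta_j}\bigr)$, while $\Norm{u_i}^2=\Norm{v_j}^2=\sum_{k\ge0}c^{2k+1}/(2k+1)!=\sinh(c)=1$ — this normalization is precisely what forces $c=\sinh^{-1}(1)$. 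Since $c<\pi/2$ one has $|c\iprod{\sigma_i,\delta_j}|\le c<\pi/2$, so $|\iprod{u_i,v_j}|<1$ (these are genuine unit vectors) and $\arcsin\iprod{u_i,v_j}=c\iprod{\sigma_i,\delta_j}$; a harmless truncation of the direct sum at large $k$ keeps everything finite-dimensional. Then I would draw one standard Gaussian $g$ in the ambient space of the $u_i,v_j$, set $x_i=\sign\iprod{g,u_i}$ and $y_j=\sign\iprod{g,v_j}$, and compute, via Grothendieck's identity,
\[
\E\iprod{x,My}=\sum_{i,j}M_{ij}\,\tfrac{2}{\pi}\arcsin\iprod{u_i,v_j}=\tfrac{2c}{\pi}\sum_{i,j}M_{ij}\iprod{\sigma_i,\delta_j}=\tfrac{2c}{\pi}\Norm{M}_{Gr}.
\]
Since $x,y\in\{\pm1\}^n$ we have $\Norm{M}_{\infty\rightarrow1}\ge\iprod{x,My}$ for every realization of $g$, hence $\Norm{M}_{\infty\rightarrow1}\ge\E\iprod{x,My}=\tfrac{2c}{\pi}\Norm{M}_{Gr}$, which rearranges to $\Norm{M}_{Gr}\le\tfrac{\pi}{2c}\Norm{M}_{\infty\rightarrow1}$ with $\tfrac{\pi}{2c}=\pi/\bigl(2\ln(1+\sqrt2)\bigr)$.

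The hard part will be the Krivine preprocessing step: arranging the signs in the odd-power expansion so that the matrix $\bigl(\sin(c\iprod{\sigma_i,\delta_j})\bigr)_{i,j}$ is realized as the Gram matrix of \emph{unit} vectors — this is exactly the constraint that pins down $c=\sinh^{-1}(1)$ and hence the numerical bound on $K_G$. The remaining ingredients (Grothendieck's identity, the tensor-power embedding, the expectation computation, and the truncation argument) are routine.
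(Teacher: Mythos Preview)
The paper does not prove this theorem at all: it is stated as a classical result with a citation to \cite{alon2004approximating}, and is used as a black box elsewhere. So there is no ``paper's own proof'' to compare against.

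Your argument is nonetheless correct. The left inequality is the trivial embedding you describe, and the right inequality is exactly Krivine's proof: preprocess the optimal unit vectors via the odd tensor-power expansion so that the new Gram entries are $\sin(c\iprod{\sigma_i,\delta_j})$ with $c=\sinh^{-1}(1)$ forced by the unit-norm constraint, then apply Gaussian hyperplane rounding and Grothendieck's identity to get $\E\iprod{x,My}=\tfrac{2c}{\pi}\Norm{M}_{Gr}$. This yields $K_G\le \pi/(2\ln(1+\sqrt 2))$, which is stronger than what the paper needs (only the existence of a universal $K_G$ is ever used). The one cosmetic point: you invoke ``$\Norm{M}_{\infty\to1}\ge\iprod{x,My}$ for every realization of $g$'' and then take expectations; it would be slightly cleaner to say that some realization achieves at least the expectation, but either way the conclusion is the same.
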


\paragraph{Sum-of-Squares algorithms} In this paper, we employ the Sum-of-Squares hierarchy for both algorithm design and analysis. As a broad category of semidefinite programming algorithms, Sum-of-Squares algorithms provide optimal or state-of-the-art results in algorithmic statistics, as demonstrated by numerous studies, including \cite{hopkins2018mixture,KSS18,pmlr-v65-potechin17a,hopkins2020mean} (see review \cite{barak2014sum,raghavendra2018high}).

\begin{definition}[Sum-of-Squares proof]
Given a set of polynomial inequalities $\mathcal{A} = \{ p_i(x) \geq 0 \}_{i \in [m]}$ in variables $x_1, x_2, \dots, x_n$, a sum-of-squares proof of the inequality $q(x) \geq 0$ is
\begin{equation*}
    q(x) = \sum_{\alpha} a_{\alpha}^2(x) \bar{p}_{\alpha}(x) + \sum_{\beta} b_{\beta}^2(x)
\end{equation*}
where $\{a_\alpha\}$, $\{b_{\beta}\}$ are real polynomials and $\bar{p}_{\alpha}$ is a product of a subset of the polynomials in $\mathcal{A}$. It is a Sum-of-Squares proof of degree-$d$ if 
 all the polynomials in the summation $\Set{a_{\alpha}^2(x) \bar{p}_{\alpha}(x), b_{\beta}^2(x)}$ 
 have degrees no greater than $d$, and we denote this proof as $\mathcal{A}\sststile{d}{x} q(x)\geq 0$.
\end{definition}

\begin{definition}[Sum-of-Squares refutation]
     Given a set of polynomial inequalities $\mathcal{A} = \{ p_i(x) \geq 0 \}_{i \in [m]}$ in variables $x_1, x_2, \dots, x_n$, a sum-of-squares refutation of $\mathcal{A}$ is
\begin{equation*}
    -1 = \sum_{\alpha} a_{\alpha}^2(x) \bar{p}_{\alpha}(x) + \sum_{\beta} b_{\beta}^2(x)
\end{equation*}
where $\{a_\alpha\}$, $\{b_{\beta}\}$ are real polynomials and $\bar{p}_{\alpha}$ is a product of a subset of the polynomials in $\mathcal{A}$. 
It is a Sum-of-Squares proof of degree-$d$ if all the polynomials in the summation $\Set{a_{\alpha}^2(x) \bar{p}_{\alpha}(x), b_{\beta}^2(x)}$ have degrees no greater than $d$.
\end{definition}

\begin{definition}[Pseudo-expectation]
Let $\mathcal{A} = \{ p_i(x) \geq 0 \}_{i \in [m]}$ be a set of polynomial inequalities. A degree-$d$ pseudo-expectation $\tilde{\E}$ for $\mathcal{A}$ is a linear operator that maps polynomials to real numbers such that:
\begin{itemize}
    \item $\tilde{\E}[1] = 1$,
    \item $\tilde{\E}[q^2 (x)] \geq 0$ for every polynomial $q$ with $\deg(q) \leq \frac{d}{2}$,
    \item $\tilde{\E}[q^2 (x) \cdot p_i(x)] \geq 0$ for every polynomial $p_i \in \mathcal{A}$ and every polynomial $q$ with $\deg(q) \leq \frac{d - \deg(p_i)}{2}$.
\end{itemize}
\end{definition}

The following theorem reveals the key relationship between SOS proofs and SOS algorithms.

\begin{theorem}[Informal restatement of \cite{parrilo2000structured, lasserre2001global, barak2014sum}]
\label{theorem:SOS_algorithm}
For a system of polynomial inequalities $\mathcal{A}$ of size $m$, there is an algorithm that either finds a degree-$d$ SOS refutation of $\mathcal{A}$ or finds a degree-$d$ pseudo-expectation for $\mathcal{A}$ in time $\poly(m n^d)$.
\end{theorem}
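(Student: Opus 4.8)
The plan is to realize the set of degree-$d$ pseudo-expectations for $\mathcal{A}$ as the feasible region of a semidefinite program of dimension $n^{O(d)}$, solve that SDP with the ellipsoid method, and extract from an infeasibility certificate a degree-$d$ SOS refutation via semidefinite duality. First I would encode a candidate pseudo-expectation by its table of pseudo-moments $\Set{\tilde{\E}[x^\alpha]}_{|\alpha|\leq d}$, a real vector of length $N = n^{O(d)}$. The axiom $\tilde{\E}[1]=1$ is a single linear equation. The condition $\tilde{\E}[q^2]\geq 0$ for all $q$ with $\deg q\leq d/2$ is, writing $q$ in the monomial basis, equivalent to positive semidefiniteness of the moment matrix $M(\tilde{\E})$ whose $(\alpha,\beta)$ entry is $\tilde{\E}[x^{\alpha+\beta}]$ (indices with $|\alpha|,|\beta|\leq d/2$); this matrix depends linearly on the pseudo-moments. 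Likewise, for each $p_i\in\mathcal{A}$, the condition $\tilde{\E}[q^2 p_i]\geq 0$ for $\deg q\leq (d-\deg p_i)/2$ is equivalent to positive semidefiniteness of a localizing matrix $M_i(\tilde{\E})$, again linear in the pseudo-moments. Hence the set of degree-$d$ pseudo-expectations is exactly $\Set{\tilde{\E} : \tilde{\E}[1]=1,\ M(\tilde{\E})\succeq 0,\ M_i(\tilde{\E})\succeq 0\ \forall i}$, a spectrahedron described by $m+1$ PSD blocks, each of size $n^{O(d)}$.

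Next I would run the ellipsoid algorithm on this spectrahedron with the obvious separation oracle: given a trial pseudo-moment vector, form the matrices $M(\cdot),M_1(\cdot),\dots,M_m(\cdot)$ and compute an approximate eigendecomposition of each; if some block has an eigenvector $v$ with negative eigenvalue, then $\iprod{vv^\top, M(\cdot)}\geq 0$ is a valid linear inequality separating the current point from the feasible set, and otherwise the point is (approximately) feasible. After fixing a bounding ball and a precision parameter in the standard way, running for $\poly(m n^d)$ iterations the ellipsoid method either returns an (approximately) feasible $\tilde{\E}$, which we output, or certifies that the feasible region is empty.

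In the infeasible case I would invoke semidefinite programming duality: infeasibility of the primal spectrahedron yields dual multipliers — a scalar for the equation $\tilde{\E}[1]=1$ together with PSD matrices $Y, Y_1,\dots, Y_m$ dual to the blocks $M, M_1,\dots,M_m$ — whose combination produces the polynomial identity $-1 = \iprod{Y, M(x)} + \sum_i \iprod{Y_i, M_i(x)}$, where now $M(x),M_i(x)$ denote the moment and localizing matrices with formal monomials in place of pseudo-moments. Writing each PSD matrix as $Y=\sum_j y_j y_j^\top$ and reading off $\iprod{Y,M(x)} = \sum_j (\text{poly}_j(x))^2$ and $\iprod{Y_i, M_i(x)} = \sum_j (\text{poly}_{ij}(x))^2\, p_i(x)$, the identity becomes precisely $-1 = \sum_\beta b_\beta^2(x) + \sum_\alpha a_\alpha^2(x)\,\bar p_\alpha(x)$ with all summands of degree at most $d$, i.e.\ a degree-$d$ SOS refutation of $\mathcal{A}$.

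The step I expect to be the main obstacle is the numerical one: guaranteeing that in the feasible case the returned vector is genuinely close to a pseudo-expectation, and in the infeasible case the dual certificate is exact and has bounded bit-complexity rather than merely approximate. The standard resolution — relaxing ``feasible'' to feasible up to $2^{-\poly}$ slack, bounding the feasible region inside a ball of radius $2^{\poly}$, and arguing that approximate dual infeasibility certificates round to exact SOS refutations using that the polynomials of $\mathcal{A}$ have bounded bit-complexity — is routine but must be carried out with care; this is exactly where the cited works \cite{parrilo2000structured, lasserre2001global, barak2014sum} do the real work, and we import their guarantee.
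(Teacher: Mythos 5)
The paper does not prove this statement; it is an informal restatement of cited prior work (\cite{parrilo2000structured,lasserre2001global,barak2014sum}), so there is no in-paper proof to compare against. Your outline is the standard argument those references use: encode pseudo-moments as a vector of length $n^{O(d)}$, realize the PSD-ness of the moment and localizing matrices as an SDP feasibility problem, solve it with the ellipsoid method and a PSD separation oracle, and read an SOS refutation out of a Farkas-type dual infeasibility certificate by factoring the dual PSD multipliers. That is the right shape of the argument, and you are right that the feasible/infeasible dichotomy comes from SDP (weak) duality plus the $\sum_\alpha$/$\sum_\beta$ structure of the Positivstellensatz certificate.

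The caveat you flag at the end is a real one and is worth emphasizing: the naive claim that ``SDP solvers run in polynomial time, hence degree-$d$ SOS is automatizable'' is not literally true without further hypotheses, because the bit-complexity of near-optimal SDP solutions and dual certificates can blow up (this is the issue raised by O'Donnell and addressed by Raghavendra--Weitz, who show polynomial bit-complexity under mild conditions such as the presence of explicit boundedness/Archimedean constraints). In the present paper this is benign: the constraint systems \cref{eq:SOS} and \cref{eq:Z2_SOS} contain Boolean constraints $w_i^2 = w_i$ and diagonal constraints $X_{ii}=1$, which bound the feasible pseudo-moments in a ball of radius $\poly(n)$ and make the standard ellipsoid-with-precision argument go through. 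So your proposal is correct in spirit, and the only thing to add is that the ``routine but must be carried out with care'' step is exactly where one must invoke these structural constraints rather than generic SDP solvability.
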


\section{
Reaching the KS threshold for diverging degree}

\label{sec:SOS_reaching_KS}

In this section, we give an SOS algorithm when average degree $d$ is larger than some constant $\ddelta$
which depends only on $\delta\coloneqq \epsilon^2 d-1$.

We begin by presenting our main technical theorem, which implies \cref{thm:main}.

\begin{theorem}\label{theorem:diverging_KS_threshold_algo}
    Let G be a graph as described in \cref{def:node_corrupted_SBM}, suppose $\delta\geq \Omega(1)$, there exists constants $\ddelta\leq O(1)$ and $\mudelta\geq \Omega(1)$ which only depend on $\delta$, such that when $d\geq \ddelta$ and $\mu\leq \mudelta$, there exists a polynomial-time algorithm (\cref{algo:algo-diverging}) that outputs $\hat{x}\in \Set{\pm 1}^n$ satisfying
    \begin{equation*}
        \E\iprod{\hat{x},x^*}^2\geq \Omega(n^2)\,.
    \end{equation*}
\end{theorem}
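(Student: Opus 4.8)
The plan is to follow the roadmap laid out in the Techniques section, formalizing each step so that the entire argument fits in the degree-$d$ SOS proof system. First I would define the Sum-of-Squares program that is the relaxation of the exponential-time program~\eqref{eq:algo-sdp}: the variables encode a subset $S \subseteq [n]$ (via indicator variables $w_i \in \{0,1\}$ with $\sum_i w_i = (1-\mu)n$) together with a pseudo-moment matrix playing the role of $X \succeq 0$, $X_{ii}=1$, subject to a polynomial encoding of the spectral constraint $\Normop{\tilde A_S} \le O(\sqrt d)$. The key structural fact to establish is \emph{feasibility}: taking $S$ to be the set of uncorrupted vertices of degree at most $O(d)$ (nonempty of size $(1-2\mu)n$ after applying the degree-pruning of \cite{feige2005spectral} to control $\Normop{\tilde A_S^0}$), and taking $X$ to be the optimizer of $\SDP(\tilde A_S)$, the push-out bounds~\eqref{eq:basic-sdp-push-out}–\eqref{eq:basic-sdp-push-in}, which via the exponential concentration hold simultaneously over all $(1-o(1))n \times (1-o(1))n$ principal submatrices by a union bound, guarantee $\iprod{\tilde A_S, X} \ge (2+\Deltadelta)(1-2\mu)n\sqrt d$. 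So the SOS program has an objective value at least $(2+\Deltadelta)(1-2\mu)n\sqrt d$, and the SOS solver of \cref{theorem:SOS_algorithm} returns a pseudo-expectation $\pE$ achieving at least this value.

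Next I would carry out the identifiability argument inside SOS. Let $(\hat X, \hat S)$ be the pseudo-distribution returned. The core is to combine three ingredients as a low-degree SOS proof: (i) the spectral constraint $\Normop{\tilde A_{\hat S}} \le O(\sqrt d)$ together with \cref{lem:spectral-thin-grothendieck} gives $\SDP(\tilde A_{\hat S} - \tilde A_{S'}) \le O(\mu n \sqrt d)$ for every $S' \subseteq \hat S$ with $|S'| \ge (1-2\mu)n$ — crucially I need the proof of \cref{lem:spectral-thin-grothendieck} to be SOS-friendly, which is plausible since it reduces a Grothendieck-type quantity to a spectral norm bound; (ii) applying this with $S' = \hat S \cap S^*$ (the uncorrupted vertices in $\hat S$, of size $\ge (1-2\mu)n$ since $|\hat S| = (1-\mu)n$ and $|S^*| = (1-\mu)n$) yields $\iprod{\tilde A_{S'}, \hat X} \ge \iprod{\tilde A_{\hat S}, \hat X} - O(\mu n \sqrt d) \ge (2 + \Deltadelta - O(\mu))(1-o(1))n\sqrt d$; (iii) the submatrix push-in bound~\eqref{eq:submatrix-basic-sdp-push-in} applied to the \emph{uncorrupted} block $\tilde A_{S'} = \tilde A^0_{S'}$ gives $\iprod{\tilde A_{S'} - \tfrac{\epsilon d}{n}X^*_{S'}, \hat X} \le (2 + \tfrac{\Deltadelta}{2})(1-o(1))n\sqrt d$. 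Subtracting, and choosing $\mudelta$ small enough (depending on $\delta$, hence on $\Deltadelta$), gives $\tfrac{\epsilon d}{n}\iprod{X^*_{S'}, \hat X} \ge \Omega_\delta(n\sqrt d)$, i.e. $\iprod{X^*_{S'}, \hat X} \ge \Omega_\delta(n^2 / (\epsilon \sqrt d))$; since $\epsilon \sqrt d = \sqrt{1+\delta} \cdot \tfrac{\sqrt d}{\sqrt d} \cdot \ldots$ — more carefully, $\epsilon d / n \cdot \iprod{X^*_{S'},\hat X} \ge \Omega_\delta(n\sqrt d)$ and $\epsilon d/n = \sqrt{1+\delta}\sqrt d / n$, so $\iprod{X^*_{S'},\hat X} \ge \Omega_\delta(n^2)$. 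Because $|\hat X_{ij}| \le 1$ is enforced (it follows from $\hat X \succeq 0$, $\hat X_{ii}=1$), extending from $S'$ to all of $[n]$ loses at most $O(\mu n^2) = \Omega_\delta(n^2)/2$ if $\mudelta$ is small, so $\iprod{X^*, \hat X} \ge \Omega_\delta(n^2)$, i.e. $\pE \iprod{x^*, x}^2 \ge \Omega(n^2)$ in the relaxation.

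Finally I would invoke the rounding procedure of \cref{lem:rounding} on the pseudo-moment matrix to extract $\hat x \in \{\pm 1\}^n$ with $\E\iprod{\hat x, x^*}^2 \ge \Omega(n^2)$, where the expectation is over both the randomness of $G^0$ and of the rounding; the exponentially small failure probability $\exp(-\Omega_\delta(n))$ of the concentration events is absorbed into this expectation. The whole algorithm is \cref{algo:algo-diverging}: solve the SOS relaxation, then round. Setting $\ddelta$ to be the threshold from \cite{montanari2016semidefinite} needed for~\eqref{eq:basic-sdp-push-out}–\eqref{eq:basic-sdp-push-in}, and $\mudelta$ as the constant forced by the subtraction above, completes the parameter bookkeeping.

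The main obstacle I expect is making every inequality above go through as a genuine \emph{degree-$O(1)$ SOS proof} rather than just a classical argument over the Boolean cube: in particular, the step where one passes from the value of $\SDP(\tilde A_{\hat S})$ on the full pseudo-matrix $\hat X$ to its value on the subblock $\hat X_{S'}$ interacts with the subset variables $w_i$, so one must encode "$S' \subseteq \hat S$ of size $\ge (1-2\mu)n$" and the Grothendieck-norm/spectral bound of \cref{lem:spectral-thin-grothendieck} together with polynomial inequalities of bounded degree, and verify that the contraction $\iprod{\tilde A_{\hat S} - \tilde A_{S'}, \hat X} \le O(\mu n\sqrt d)$ has an SOS certificate that is uniform over the choice of $S'$. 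This is where the bulk of the technical work lies; the probabilistic push-out facts are quoted from \cite{montanari2016semidefinite} and the degree-pruning from \cite{feige2005spectral}, so they are not the bottleneck.
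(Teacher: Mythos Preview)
Your proposal is correct and follows essentially the same route as the paper: define the SOS relaxation of program~\eqref{eq:SOS} with subset indicators $w$ and PSD matrix $X$, prove feasibility via the degree-pruned uncorrupted set and the union-bounded push-out (\cref{lem:SOS_feasibility}), use the spectral constraint plus \cref{lem:spectral-thin-grothendieck} inside SOS to control $\iprod{\tilde A_S - \tilde A_{S'}, X}$ and combine with the push-in bound on $S' = S \cap S^*$ to get $\iprod{X_{S'}, X^*_{S'}} \ge \Omega(n^2)$ (\cref{lem:SOS_uncorrupted_subset_correlation}), extend to the full $\iprod{X, X^*}$ via $|X_{ij}| \le 1$ (\cref{lem:SOS_correlation}), and round with \cref{lem:rounding}. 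The only cosmetic discrepancy is that the paper tracks a separate pruning fraction $\beta$ (so the constraint is $\sum_i w_i = (1-\mu-\beta)n$ rather than $(1-\mu)n$), and your worry about SOS-ifying \cref{lem:spectral-thin-grothendieck} is resolved in the paper by the simple degree-$4$ argument $\iprod{\tilde A_S - \tilde A_{S'}, X} \le \normop{\tilde A_S - \tilde A_{S'}} \cdot \Tr(X_S - X_{S'}) \le 2\normop{\tilde A_S} \cdot \mu n$.
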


Our algorithm is based on the deg-4 SOS relaxation of the following contraint set. Given a node-corrupted graph $G$ generated according to \cref{def:node_corrupted_SBM} and its centered adjacency matrix $\tilde{A}=A-\frac{d}{n}\Ind \Ind^\top$, we consider the following system of polynomial equations in PSD matrix $X$ of size $n \times n$ and $\{0, 1\}$-vector $w$ of size $n$:
\begin{equation}
    \label{eq:SOS}
    \mathcal{A}:=
    \left\{
    \begin{aligned}
    & w_{i}^2 = w_i \quad &\forall i\in [n] &
    \\
    & \sum_i w_{i} = (1 - \mu - \beta)n &
    \\
    & X \succeq 0 &
    \\
    & X_{ii} = 1 \quad &\forall i\in [n] &
    \\
    & \langle \tilde{A} \odot (ww^{\top}), X \rangle \geq (2 + \Delta) (1-\mu-\beta)n \sqrt{d} &
    \\
    & \Normop{\tilde{A} \odot (ww^{\top})} \leq C_s \sqrt{d} &
    \end{aligned}
    \right\}
\end{equation}
Here $\Delta$ and $C_s$ are constants depending on $\delta$, and $\beta$ is the small fraction of high degree nodes we need to prune to get bounded spectral norm according to \cref{corollary:degree_pruning_spectral_norm}.

The outline of our algorithm is given below:

\begin{algorithmbox}[Algorithm reaching KS threshold for diverging degree]
    \label{algo:algo-diverging}
        \mbox{}\\
        \textbf{Input:} Graph $G$ from node-corrupted SBM.
        \begin{enumerate}[1.]
            \item Run deg-4 SOS relaxation of program \ref{eq:SOS} and obtain pseudo-expectation $\tilde{\E}$.
            \item Compute $\hat{X}\coloneqq \tilde{\E} [X]$.
            \item Apply the rounding procedure in \cref{lem:rounding} on $\hat{X}$ to get estimator $\hat{x}$.
        \end{enumerate}
    \end{algorithmbox}

The design and analysis of our SOS algorithm is based on the push-out effect of the basic SDP (\cite{montanari2016semidefinite}) and spectral properties of the adjacency matrix (\cite{feige2005spectral, chin2015stochastic, liu2022minimax}) (see \cref{sec:pushout_sdp} and \cref{appendix:spectral} for more details).
Essentially, we identify a subset of the vertices whose adjacency matrix has large enough basic SDP value and is spectrally bounded. Then, we use the spectral norm bound and the Grothendieck inequality to bound the basic SDP value of the submatrix formed by corrupted vertices in the selected subset.

\subsection{Proof of correctness}

Now, we present the main body of the proof and leave the rounding scheme to \cref{sec:rounding} in the appendix.

\begin{theorem}
\label{theorem:SOS_KS_threshold_algo}
Consider the constraint set in program $\ref{eq:SOS}$, when $\delta\geq \Omega(1)$, there exists functions $\ddelta\leq O(1)$ and $\mudelta\geq \Omega(1)$ which only depend on $\delta$, such that when $d\geq \ddelta$ and $\mu\leq \mudelta$, the following holds with probability at least $1-o(1)$
\begin{equation*}
    \mathcal{A}
    \sststile{4}{X, w}
    \langle X, X^* \rangle
    \geq \Omega(n^2)
\end{equation*}
\end{theorem}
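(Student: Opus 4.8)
The plan is to carry out, entirely inside the degree-$4$ sum-of-squares proof system, the identifiability argument sketched in \cref{sec:SOS_reaching_KS}. Recall the $\{0,1\}$-vector $w$ in $\mathcal A$ selects a set $S := \supp(w)$ of $(1-\mu-\beta)n$ vertices on which the centered adjacency matrix has basic-SDP value $\geq (2+\Delta)(1-\mu-\beta)n\sqrt d$ (the fifth axiom) and operator norm $\leq C_s\sqrt d$ (the sixth). I will (i) zoom into the uncorrupted part of $S$, (ii) transport the SDP lower bound there using the operator-norm axiom together with \cref{lem:spectral-thin-grothendieck}, (iii) compare against the deterministic ``push-in'' bound \cref{eq:basic-sdp-push-in} for the true graph, and (iv) read off $\langle X, X^*\rangle \geq \Omega_\delta(n^2)$. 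The only place randomness enters is in step (iii): \cref{eq:basic-sdp-push-in}, applied to the \emph{fixed} matrix $\tilde A^0 - \tfrac{\epsilon d}{n}X^*$, holds with probability $1 - \exp(-\Omega_\delta(n)) = 1-o(1)$. Crucially, because $\Normop{\tilde A\odot(ww^\top)}\leq C_s\sqrt d$ is imposed as an axiom rather than proved for all subsets, no union bound over the $\exp(\Theta(\mu n))$ candidate sets is needed — which is exactly what keeps the failure probability at $o(1)$. (As usual, $x^*$ and $S^*$ appear only in the analysis, i.e. in the certificate whose existence we prove, not in the algorithm.)

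\emph{Step 1 (restrict to uncorrupted coordinates).} Set $v := w\odot\Ind_{S^*}$, a degree-$1$ polynomial in $w$. Low-degree SOS derivations from $\mathcal A$ give $v_i^2 = v_i$ for all $i$, $\sum_i v_i \geq \sum_i w_i - \card{[n]\setminus S^*} \geq (1-2\mu-\beta)n$, and — since every corrupted entry of $\tilde A$ has an endpoint outside $S^*$, hence outside $\supp(v)$ — the polynomial identity $\tilde A\odot(vv^\top) = \tilde A^0\odot(vv^\top)$. \emph{Step 2 (transport the lower bound).} Decompose $\langle \tilde A^0\odot(vv^\top),X\rangle = \langle \tilde A\odot(ww^\top),X\rangle - \langle \tilde A\odot(ww^\top - vv^\top),X\rangle$. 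The first term is $\geq (2+\Delta)(1-\mu-\beta)n\sqrt d$ by the fifth axiom. For the second, $X$ is a feasible point for the SDP defining $\SDP(\cdot)$, so $\langle \tilde A\odot(ww^\top - vv^\top),X\rangle \leq \SDP(\tilde A\odot(ww^\top - vv^\top)) = \SDP(\tilde A_S - \tilde A_{\supp(v)})$, and the SOS form of \cref{lem:spectral-thin-grothendieck} (proved via the Grothendieck inequality and the operator-norm axiom in \cref{sec:spectral-gr}) bounds this by $O((\mu+\beta)n\sqrt d)$. Taking $\mudelta$ and the pruning parameter $\beta$ sufficiently small in terms of $\Delta = \Omega_\delta(1)$ gives $\langle \tilde A^0\odot(vv^\top),X\rangle \geq (2+\tfrac34\Delta)n\sqrt d$.

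\emph{Step 3 (push-in and conclusion).} Let $\bar N$ be $\tilde A^0 - \tfrac{\epsilon d}{n}X^*$ with zeroed diagonal (a diagonal correction of $O(d/n)$ per entry). With probability $1-o(1)$, \cref{eq:basic-sdp-push-in} gives $\SDP(\bar N)\leq (2+\tfrac12\Delta)n\sqrt d + O(d)$; fix, by SDP duality, a diagonal $D\succeq\bar N$ with $\Tr D = \SDP(\bar N)$. Since $\bar N\odot(vv^\top)$ has zero diagonal and $X$ is feasible, $\langle \bar N\odot(vv^\top),X\rangle = \langle \bar N, \diag(v)X\diag(v)\rangle \leq \langle D, \diag(v)X\diag(v)\rangle \leq \Tr D$, where the first inequality is the constant-degree SOS fact $\langle D-\bar N, \diag(v)X\diag(v)\rangle\geq 0$ — obtained by decomposing the fixed PSD matrix $D-\bar N$ into rank-one terms and invoking $X\succeq 0$. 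Hence $\langle (\tilde A^0 - \tfrac{\epsilon d}{n}X^*)\odot(vv^\top),X\rangle \leq (2+\tfrac12\Delta)n\sqrt d + o(n\sqrt d)$. Subtracting from Step 2,
\[
  \frac{\epsilon d}{n}\langle X^*\odot(vv^\top),X\rangle \;\geq\; \frac{1}{4}\Delta\, n\sqrt d - o(n\sqrt d)\;\geq\;\frac{1}{8}\Delta\, n\sqrt d,
\]
so $\langle X^*\odot(vv^\top),X\rangle \geq \tfrac{\Delta}{8\epsilon\sqrt d}\,n^2 = \tfrac{\Delta}{8\sqrt{1+\delta}}\,n^2$. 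Finally, $\card{\supp(v)}\geq(1-2\mu-\beta)n$ together with the SOS facts $X_{ij}^2\leq 1$ and $1\pm X^*_{ij}X_{ij}\geq 0$ imply $\langle X,X^*\rangle \geq \langle X^*\odot(vv^\top),X\rangle - 2(2\mu+\beta)n^2$; choosing $\mudelta,\beta\ll\Delta/\sqrt{1+\delta}$ yields $\mathcal A\sststile{4}{X,w}\langle X,X^*\rangle\geq\Omega_\delta(n^2)$, and a degree count confirms that degree $4$ suffices throughout.

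\emph{The main obstacle} is not the inequality chain but making it a \emph{constant-degree SOS certificate}. Two points require care. First, the data-dependent set $\supp(v)$ must be handled purely through the polynomial vector $v = w\odot\Ind_{S^*}$ — never by a case analysis over which subset $w$ realizes; this is precisely what allows the union bound to be avoided and the failure probability to stay $o(1)$. Second, every inequality of the form ``$\langle M\odot(vv^\top),X\rangle$ is at most the basic-SDP value of a related matrix'' must be witnessed by an explicit PSD matrix feasible for that SDP (built here from $\diag(v)X\diag(v)$, using that $\diag(v)X\diag(v)\succeq 0$ for $X\succeq 0$ and that the relevant matrices have negligible diagonal), together with the matching dual certificate; and likewise \cref{lem:spectral-thin-grothendieck} must be recast so that it applies with $w$ a formal SOS variable (done in \cref{sec:spectral-gr}). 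Once this infrastructure is in place, fixing the constants $\ddelta,\mudelta,\beta$ as functions of $\delta$ is bookkeeping.
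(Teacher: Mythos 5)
Your proof is correct and follows the same high-level scaffold as the paper (restrict to the uncorrupted portion $\supp(v)$ of the selected set, transport the SDP lower bound there via the operator-norm axiom and \cref{lem:spectral-thin-grothendieck}, compare against a push-in bound, and pay an $O((\mu+\beta)n^2)$ tax to pass from $X\odot(vv^\top)$ to $X$). The interesting divergence is in the push-in step. The paper (in \cref{lem:SOS_uncorrupted_subset_correlation}) invokes \cref{theorem:pushout_effect_grothendieck} together with a union bound over all $\exp(O((\mu+\beta)n))$ candidate sets $S' \subseteq S^*$ of size $\geq(1-2\mu-\beta)n$, leaning on the exponential concentration $1-Ce^{-n/C}$ to afford the union bound. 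You instead apply the push-in bound \emph{once}, to the single full-size matrix $\bar N = \tilde A^0 - \tfrac{\epsilon d}{n}X^*$ (zeroed diagonal), extract a fixed diagonal dual certificate $D\succeq\bar N$ with $\Tr D=\SDP(\bar N)$, and then derive $\langle\bar N\odot(vv^\top),X\rangle\leq\langle D,\diag(v)X\diag(v)\rangle=\sum_i D_{ii}v_i\leq\Tr D$ from $D-\bar N\succeq 0$, $\diag(v)X\diag(v)\succeq 0$, $D_{ii}\geq 0$ and $v_i\leq 1$. This buys two things: (i) no union bound is needed for the push-in step (it is still needed for feasibility, \cref{lem:SOS_feasibility}, which your proposal does not address but the theorem statement does not require you to); and (ii) the $w$-dependence enters \emph{only} through the SOS-certified PSD fact $\diag(v)X\diag(v)\succeq 0$, so the step is a literal degree-$4$ certificate, whereas the paper's ``for all $S'$, $\SDP(\tilde A_{S'}-\tfrac{\epsilon d}{n}X^*_{S'})\leq\cdot$'' requires an additional observation to become a single polynomial inequality valid for variable $w$ — in effect exactly the monotone dual certificate you build. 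Your slightly weaker constant $(2+\rho)n\sqrt d$ instead of $(2+\rho)(1-2\mu-\beta)n\sqrt d$ on the push-in side is immaterial since $\mu+\beta$ is chosen small relative to $\Delta$.
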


We break down the proof of \cref{theorem:SOS_KS_threshold_algo} into \cref{lem:SOS_feasibility}, \cref{lem:SOS_uncorrupted_subset_correlation} and \cref{lem:SOS_correlation}. For simplicity, let us refer to the set of vertex $i$ with $w_i = 1$ as set $S$, that is $S = \{i \in [n] | w_i = 1\}$.

In \cref{lem:SOS_feasibility}, we prove the feasibility of program \ref{eq:SOS}.

\begin{lemma}[Proof deferred to \cref{sec:proof-sos-feasible}]
\label{lem:SOS_feasibility}
Program \ref{eq:SOS} is feasible with probability $1-o(1)$.
\end{lemma}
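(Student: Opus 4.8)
The plan is to exhibit one explicit feasible point $(X,w)$ for the constraint system $\mathcal A$ of program~\ref{eq:SOS}. Take $w$ to be the $\{0,1\}$-indicator vector of a carefully chosen set $S$ of \emph{uncorrupted} vertices with $\abs S=(1-\mu-\beta)n$, and take $X=X_S\oplus\Id_{[n]\sm S}$, the block-diagonal extension of a basic-SDP optimizer $X_S$ of the submatrix $\tilde A_S$ (so $X_S\succeq0$, $(X_S)_{ii}=1$, and $\iprod{\tilde A_S,X_S}=\SDP(\tilde A_S)$, which exists by compactness). Then $w_i^2=w_i$ and $\sum_i w_i=(1-\mu-\beta)n$ hold by construction; $X\succeq0$ and $X_{ii}=1$ are immediate; and because $\tilde A\odot(ww^\top)$ is supported on $S\times S$, where $X$ coincides with $X_S$, we get $\iprod{\tilde A\odot(ww^\top),X}=\SDP(\tilde A_S)$ and $\Normop{\tilde A\odot(ww^\top)}=\Normop{\tilde A_S}$. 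It therefore suffices to produce a set $S$ of uncorrupted vertices of the prescribed size obeying both $\Normop{\tilde A_S}\leq C_s\sqrt d$ and $\SDP(\tilde A_S)\geq(2+\Delta)(1-\mu-\beta)n\sqrt d$.

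\emph{Constructing $S$.} Let $S^*$ be the uncorrupted vertex set, so $\abs{S^*}\geq(1-\mu)n$ and, crucially, $\tilde A_{S^*}=\tilde A^0_{S^*}$; hence $\tilde A_S=\tilde A^0_S$ for every $S\subseteq S^*$. The induced uncorrupted graph $G^0[S^*]$ is a stochastic block model on $\abs{S^*}$ vertices with bias $\epsilon$ and average degree $d\abs{S^*}/n$ (up to the at most $\mu n$ community imbalance the adversary may introduce, which the push-out analysis tolerates), and its centered adjacency matrix is exactly $\tilde A^0_{S^*}$. Applying the degree-pruning spectral bound \cref{corollary:degree_pruning_spectral_norm} to $G^0[S^*]$: with probability $1-o(1)$, deleting a suitable set of $\beta n$ vertices leaves $S\subseteq S^*$ with $\abs S=(1-\mu-\beta)n$ and $\Normop{\tilde A^0_S}=\Normop{\tilde A_S}\leq C_s\sqrt d$ (passing to a smaller subset if the corollary produces a larger one, which does not increase the spectral norm); this is the spectral constraint, and $\beta$ is precisely the pruning fraction furnished by that corollary.

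\emph{The SDP bound.} For \emph{any} fixed $T\subseteq[n]$ with $\abs T=(1-\mu-\beta)n$, the uncorrupted subgraph $G^0[T]$ is again a stochastic block model with bias $\epsilon$ whose signal-to-noise ratio exceeds the Kesten--Stigum threshold by $\Theta(\delta)$ (once $\mu+\beta$ is a small enough constant), and its centered adjacency matrix is $\tilde A^0_T$. Consequently the push-out effect of the basic SDP --- \cite{montanari2016semidefinite}, i.e.\ \cref{eq:basic-sdp-push-out} in the submatrix form \cref{eq:submatrix-basic-sdp-push-out} --- yields $\SDP(\tilde A^0_T)\geq(2+\Delta)(1-\mu-\beta)n\sqrt d$ with probability $1-\exp(-\Omega_\delta(n))$, where the program constant $\Delta$ is chosen a suitable $\delta$-dependent amount below the push-out constant $\Deltadelta$ so as to absorb the constant-factor loss incurred by passing to a sub-model. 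Since there are at most $\binom{n}{(\mu+\beta)n}\leq 2^{H(\mu+\beta)n}$ such subsets $T$, with $H$ the binary entropy function, a union bound shows that as long as $H(\mu+\beta)$ is smaller than the rate hidden in the $\exp(-\Omega_\delta(n))$ tail, with probability $1-o(1)$ the inequality $\SDP(\tilde A^0_T)\geq(2+\Delta)(1-\mu-\beta)n\sqrt d$ holds simultaneously for \emph{all} such $T$ --- in particular for the (data-dependent) set $S$ from the previous step. Intersecting this event with the degree-pruning event, each of probability $1-o(1)$, shows that $(X,w)$ is feasible with probability $1-o(1)$.

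\emph{Main obstacle.} The delicate point is the union bound in the last step: the push-out guarantee of \cite{montanari2016semidefinite} fails with probability only $\exp(-\Omega_\delta(n))$, and the rate hidden in $\Omega_\delta(\cdot)$ degrades as $\delta\to0$, so in order to survive a union bound over the $\exp(\Theta(H(\mu+\beta)n))$ vertex subsets of size $(1-\mu-\beta)n$ one must take the corruption fraction $\mu$ --- and with it the pruning fraction $\beta$ --- sufficiently small in terms of $\delta$. This, together with the requirement that the sub-model still clear the Kesten--Stigum threshold, namely $\epsilon^2(1-\mu-\beta)d-1=\Theta(\delta)$, is exactly where the hypothesis $\mu\leq\mudelta$ enters; everything else --- existence of the SDP optimizer, the block-diagonal bookkeeping, monotonicity of the operator norm under passing to principal submatrices, and combining a constant number of $1-o(1)$-probability events --- is routine.
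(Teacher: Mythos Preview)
Your strategy---exhibit a feasible pair $(X,w)$ by taking $w$ to be the indicator of a set $S\subseteq S^*$ of size $(1-\mu-\beta)n$, verify the spectral and SDP constraints on $\tilde A_S=\tilde A^0_S$, and let $X$ be a block-diagonal extension of an SDP optimizer for $\tilde A_S$---is exactly the paper's. Your treatment of the SDP constraint (union bound over all size-$(1-\mu-\beta)n$ subsets using the $\exp(-\Omega_\delta(n))$ push-out tail, with $\mu+\beta$ small enough that $H(\mu+\beta)$ sits below that rate) matches the paper verbatim.

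There is, however, a genuine gap in how you establish the spectral constraint. You apply \cref{corollary:degree_pruning_spectral_norm} to $G^0[S^*]$, asserting it is a stochastic block model; but $S^*$ is chosen \emph{adaptively} by the adversary after seeing $G^0$, so $G^0[S^*]$ is not a fresh SBM and the corollary does not apply to it directly. Nor can you rescue this by a union bound over the $\binom{n}{\mu n}$ possible choices of $S^*$: the spectral corollary (via \cref{theorem:CRV15}) carries only a polynomial $1-1/n^2$ success probability, not an exponential one, so it does not survive a union bound over exponentially many subsets. The paper avoids this by applying \cref{corollary:degree_pruning_spectral_norm} once to the full uncorrupted graph $G^0$, obtaining a set $T$ of size at least $(1-\beta)n$ with $\Normop{\tilde A^0_T}\leq C_s\sqrt d$, and then setting $S=T\cap S^*$; monotonicity of the operator norm under passing to principal submatrices then gives $\Normop{\tilde A_S}=\Normop{\tilde A^0_S}\leq\Normop{\tilde A^0_T}\leq C_s\sqrt d$ regardless of how the adversary picks $S^*$. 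This is a one-line fix to your argument, but as written your spectral step is not valid.
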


Then, in \cref{lem:SOS_uncorrupted_subset_correlation}, we give a deg-4 SOS proof to show that $\iprod{X_{S'},X_{S'}^*}$ is large for some set $S'$ with size at least $(1-2\mu-\beta) n$.

\begin{lemma}
\label{lem:SOS_uncorrupted_subset_correlation}
Consider set $S' = S \cap S^*$, which is the set of uncorrupted vertices in the set $S$ found by the program. For $X$ and $w$ that satisfy the SOS program in \cref{eq:SOS}, we have
\begin{equation*}
\mathcal{A}
\sststile{4}{X, w}
\langle X_{S'}, X^*_{S'} \rangle
\geq \frac{\Delta'(1-\beta)n^2}{\epsilon \sqrt{d}} - O(\frac{\mu n^2}{\epsilon \sqrt{d}})
\end{equation*}
where $\beta$ is the small constant fraction of high degree nodes we need to prune to get bounded spectral norm according to \cref{corollary:degree_pruning_spectral_norm} and $\Delta'=\Delta'(\delta)$ for some value $\Delta'(\delta)$ that only depends on $\delta$.
\end{lemma}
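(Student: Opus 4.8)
\textbf{Proof proposal for \cref{lem:SOS_uncorrupted_subset_correlation}.}
The plan is to combine three ingredients inside the degree-4 SOS proof system: (i) the basic-SDP lower bound that the program's objective guarantees on the uncorrupted submatrix $S'$, (ii) the matching push-\emph{in} upper bound for $\tilde{A}^0_{S'} - \tfrac{\epsilon d}{n}X^*_{S'}$ coming from \cref{eq:submatrix-basic-sdp-push-in} (which holds uniformly over all large subsets of $S^*$ with exponentially good probability, hence can be taken as an SOS-provable fact about the fixed matrix $\tilde A$), and (iii) the spectral-norm constraint together with \cref{lem:spectral-thin-grothendieck} to control the discrepancy between working on $S$ and working on $S'$. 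The target inequality $\iprod{X_{S'},X^*_{S'}} \gtrsim \tfrac{\Delta'(1-\beta)n^2}{\epsilon\sqrt d} - O\!\left(\tfrac{\mu n^2}{\epsilon\sqrt d}\right)$ is then obtained by rearranging a chain of the form ``objective on $S$'' $-$ ``loss from passing to $S'$'' $\le \SDP(\tilde A^0_{S'}) \le \SDP(\tilde A^0_{S'} - \tfrac{\epsilon d}{n}X^*_{S'}) + \tfrac{\epsilon d}{n}\iprod{X^*_{S'},X_{S'}}$, and noting $\tfrac{\epsilon d}{n} = \tfrac{\epsilon\sqrt d}{\sqrt d}\cdot\sqrt d \cdot \tfrac1n$, which is why the final bound carries a $1/(\epsilon\sqrt d)$ factor.

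Concretely, first I would note that the program's constraint $\iprod{\tilde A\odot(ww^\top),X}\ge (2+\Delta)(1-\mu-\beta)n\sqrt d$ is, since $S'=S\cap S^*$ and $\tilde A_{S'} = \tilde A^0_{S'}$ on uncorrupted coordinates, an SOS-provable lower bound on $\iprod{\tilde A^0_{S'},X_{S'}}$ \emph{minus} the contribution of $S\setminus S'$ (the corrupted vertices inside $S$). This contribution is exactly $\iprod{\tilde A_S - \tilde A_{S'}, X}$; by the spectral-norm constraint $\Normop{\tilde A\odot(ww^\top)}\le C_s\sqrt d$ and \cref{lem:spectral-thin-grothendieck} applied with the $O(\mu n)$-sized symmetric difference, this is at most $O(\mu n\sqrt d)$, and crucially this is a statement that has a low-degree SOS certificate (the Grothendieck/operator-norm argument behind \cref{lem:spectral-thin-grothendieck} is SOS-friendly because $X\succeq0$, $X_{ii}=1$, and the norm bound are all polynomial constraints). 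Hence $\mathcal A \sststile{4}{X,w} \iprod{\tilde A^0_{S'},X_{S'}}\ge (2+\Delta)(1-\mu-\beta)n\sqrt d - O(\mu n\sqrt d) \ge (2+\Delta)(1-\beta)n\sqrt d - O(\mu n\sqrt d)$.

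Next I would invoke the high-probability push-in bound: with probability $1-o(1)$, for the fixed matrix $\tilde A^0$ we have $\SDP(\tilde A^0_{S'} - \tfrac{\epsilon d}{n}X^*_{S'}) \le (2+\tfrac{\Delta}{2})(1-\beta)n\sqrt d$ simultaneously for all relevant $S'$ — this is a deterministic fact once the graph is drawn, so it can be fed into the SOS proof as a numeric inequality (a bound on the max over $X\succeq0,X_{ii}=1$, which SOS certifies by exhibiting the dual). Since $X$ is feasible ($X\succeq0$, $X_{ii}=1$), we get $\mathcal A\sststile{4}{X,w} \iprod{\tilde A^0_{S'} - \tfrac{\epsilon d}{n}X^*_{S'},X_{S'}} \le (2+\tfrac{\Delta}{2})(1-\beta)n\sqrt d$. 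Subtracting this from the previous display yields $\tfrac{\epsilon d}{n}\iprod{X^*_{S'},X_{S'}} \ge \tfrac{\Delta}{2}(1-\beta)n\sqrt d - O(\mu n\sqrt d)$, i.e. $\iprod{X^*_{S'},X_{S'}} \ge \tfrac{\Delta(1-\beta)n^2}{2\epsilon\sqrt d} - O\!\left(\tfrac{\mu n^2}{\epsilon\sqrt d}\right)$, which is the claim with $\Delta' = \Delta/2$ (a function of $\delta$ alone, since $\Delta$ and $C_s$ depend only on $\delta$).

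The main obstacle I anticipate is making the two push-out/push-in facts genuinely \emph{SOS-certifiable} at degree $4$, rather than merely true. The push-in bound \cref{eq:submatrix-basic-sdp-push-in} is an upper bound on an SDP value and therefore admits a dual certificate (a PSD matrix plus a diagonal correction), so plugging in the feasible $X$ is automatic — but one must check the union bound over all subsets $S'$ of $S^*$ of size $\ge(1-2\mu-\beta)n$ survives, which is why the exponential concentration $1-\exp(-\Omega_\delta(n))$ in \cref{eq:basic-sdp-push-out}--\cref{eq:basic-sdp-push-in} is essential and why $\mu$ must be bounded in terms of $\delta$. The other delicate point is that \cref{lem:spectral-thin-grothendieck} must be stated and proved as a \emph{Sum-of-Squares} implication from $\{X\succeq0, X_{ii}=1, \Normop{\tilde A\odot(ww^\top)}\le C_s\sqrt d, w_i^2=w_i, \sum w_i = (1-\mu-\beta)n\}$; the Grothendieck-inequality step is the only nontrivial piece, and here one uses that $\SDP(\cdot)\le\Normg{\cdot}$ (\cref{claim-sdp-gr}) together with the fact that a bound on $\Normop{\tilde A\odot(ww^\top)}$ and $|S\setminus S'| = O(\mu n)$ forces $\Normg{\tilde A_{S}-\tilde A_{S'}} = O(\mu n\sqrt d)$ via a rank/trace argument that is polynomial in the entries — all of which should go through at constant SOS degree once the submatrix-restriction operator $M\mapsto M\odot(ww^\top)$ is handled using $w_i^2 = w_i$.
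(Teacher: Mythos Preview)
Your proposal is correct and follows essentially the same route as the paper's proof: decompose $\iprod{\tilde A_{S'},X_{S'}} = \iprod{\tilde A_S,X_S} - \iprod{\tilde A_S-\tilde A_{S'},X_S}$, lower-bound the first term by the program constraint, upper-bound the second via the spectral-norm constraint and \cref{lem:spectral-thin-grothendieck}, then subtract the push-in bound $\iprod{\tilde A_{S'}-\tfrac{\epsilon d}{n}X^*_{S'},X_{S'}}\le(2+\rho)(1-2\mu-\beta)n\sqrt d$ (obtained from \cref{theorem:pushout_effect_grothendieck} plus a union bound over subsets) to isolate $\tfrac{\epsilon d}{n}\iprod{X^*_{S'},X_{S'}}$. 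The only cosmetic discrepancy is that the paper uses the explicit constant $\rho=\tfrac{C\log d}{d^{1/10}}$ from \cref{theorem:pushout_effect_grothendieck} rather than $\Delta/2$, but both yield a $\Delta'$ depending only on $\delta$.
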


\begin{proof}
We will apply the identity $\langle X_{S'}, X^*_{S'} \rangle = \langle X_{S'}, \frac{n}{\epsilon d} \tilde{A}_{S'} \rangle - \langle X_{S'}, \frac{n}{\epsilon d}\tilde{A}_{S'} - X^*_{S'} \rangle$ and bound the value of $\langle X_{S'}, \frac{n}{\epsilon d}\tilde{A}_{S'} - X^*_{S'} \rangle$ and $\langle X_{S'}, \frac{n}{\epsilon d} \tilde{A}_{S'} \rangle$ separately.

The value of $\langle X_{S'}, \frac{n}{\epsilon d}\tilde{A}_{S'} - X^*_{S'} \rangle$ is easy to bound. From \cref{theorem:pushout_effect_grothendieck} and union bound, we can get that, with probability $1-o(1)$, we have
\begin{equation*}
    \langle X_{S'}, \tilde{A}_{S'} - \frac{\epsilon d}{n} X^*_{S'} \rangle
    \leq \SDP(\tilde{A}_{S'} - \frac{\epsilon d}{n} X^*_{S'})
    \leq (2+\rho) (1-2\mu-\beta) n\sqrt{d}
\end{equation*}

Now, goal is to bound $\langle X_{S'}, \tilde{A}_{S'} \rangle$. We decompose it as follows
\begin{equation}
\label{eq:grothendieck_norm_of_uncorrupted_submatrix}
    \langle X_{S'}, \tilde{A}_{S'} \rangle
    = \langle X_{S}, \tilde{A}_{S} \rangle - \langle X_{S}, \tilde{A}_{S} - \tilde{A}_{S'} \rangle
\end{equation}

From the constraints of \cref{eq:SOS}, we have
\begin{equation}
\label{eq:lem_SOS_eq1}
    \langle X_{S}, \tilde{A}_{S} \rangle \geq (2+\Delta) (1-\mu-\beta)n \sqrt{d}
\end{equation}

To bound the value of $\langle X_{S}, \tilde{A}_{S} - \tilde{A}_{S'} \rangle$, we note that, by constraint $\normop{\tilde{A}_{S}}\leq C_s\sqrt{d}$, we can apply \cref{lem:spectral-thin-grothendieck} to get
\begin{equation}
    \label{eq:lem_SOS_eq2}
    \langle X_{S}, \tilde{A}_{S} - \tilde{A}_{S'} \rangle\leq 
    \SDP(\tilde{A}_{S} - \tilde{A}_{S'})\leq C_s^{\prime} \mu n \sqrt{d}
\end{equation}
for some constant $C_s^{\prime}$.

Plug \cref{eq:lem_SOS_eq1} and \cref{eq:lem_SOS_eq2} into \cref{eq:grothendieck_norm_of_uncorrupted_submatrix}, we get
\begin{equation*}
    \langle X_{S'}, \tilde{A}_{S'} \rangle
    = \langle X_{S}, \tilde{A}_{S} \rangle - \langle X_{S}, \tilde{A}_{S} - \tilde{A}_{S'} \rangle
    \geq (2+\Delta) (1-\mu-\beta)n \sqrt{d} - C_s^{\prime} \mu n \sqrt{d}
\end{equation*}
Now, we can apply the identity $\langle X_{S'}, X^*_{S'} \rangle = \langle X_{S'}, \frac{n}{\epsilon d} \tilde{A}_{S'} \rangle - \langle X_{S'}, \frac{n}{\epsilon d}\tilde{A}_{S'} - X^*_{S'} \rangle$ and get
\begin{align*}
    \langle X_{S'}, X^*_{S'} \rangle
    = & \langle X_{S'}, \frac{n}{\epsilon d} \tilde{A}_{S'} \rangle - \langle X_{S'}, \frac{n}{\epsilon d} \tilde{A}_{S'} - X^*_{S'} \rangle \\
    \geq & \frac{n}{\epsilon d} \Bigparen{(2+\Delta) (1-\mu-\beta)n \sqrt{d} - C_s^{\prime} \mu n \sqrt{d}} - \frac{n}{\epsilon d} (2+\rho) (1-2\mu-\beta) n\sqrt{d} \\
    \geq & \frac{\Delta'(1-\beta)n^2}{\epsilon \sqrt{d}} - O(\frac{\mu n^2}{\epsilon \sqrt{d}})
\end{align*}
\end{proof}

Finally, since $X$ is positive semidefinite and $X_{ii}=1$ for all $i\in [n]$, we can conclude that there is a deg-4 SOS proof to show that correlation $\iprod{X, X^* }$ is large.
\begin{lemma}[Proof deferred to \cref{sec:proof-SOS_correlation}]
\label{lem:SOS_correlation}
For $X$ and $w$ that satisfy the SOS program in \cref{eq:SOS}, we have \begin{equation*}
\mathcal{A}
\sststile{4}{X, w}
\langle X, X^* \rangle
\geq \frac{\Delta'(1-\beta)n^2}{\epsilon \sqrt{d}} - O(\frac{\mu n^2}{\epsilon \sqrt{d}}) -2 \beta n^2
\end{equation*}
where $\beta$ is the small constant fraction of high degree nodes we need to prune to get bounded spectral norm according to \cref{corollary:degree_pruning_spectral_norm} and $\Delta'=\Delta'(\delta)$ for some value $\Delta'(\delta)$ that only depends on $\delta$.
\end{lemma}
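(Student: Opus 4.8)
The plan is to deduce \cref{lem:SOS_correlation} from \cref{lem:SOS_uncorrupted_subset_correlation} by a short SOS derivation whose only cost is accounting for the entries of $X$ outside the block $S' \times S'$, where as in \cref{lem:SOS_uncorrupted_subset_correlation} we take $S' = S \cap S^*$. Using $\Ind_{S'} = w \odot \Ind_{S^*}$ — so that $S'$ is a polynomial function of the program variable $w$, while the fixed (analysis-only) set $S^*$ of uncorrupted vertices enters only through the constant vector $\Ind_{S^*}$ — we have, modulo the boolean axioms $w_i^2 = w_i$, the identity
\begin{equation*}
  \iprod{X, X^*} = \iprod{X_{S'}, X^*_{S'}} + \sum_{i,j} \bigl(1 - w_i w_j \ind{i,j \in S^*}\bigr)\, X_{ij} X^*_{ij}\,.
\end{equation*}
\cref{lem:SOS_uncorrupted_subset_correlation} already provides $\mathcal{A} \sststile{4}{X, w} \iprod{X_{S'}, X^*_{S'}} \ge \frac{\Delta'(1-\beta)n^2}{\epsilon \sqrt{d}} - O(\frac{\mu n^2}{\epsilon \sqrt{d}})$, so it remains to lower bound the second sum inside SOS.

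The elementary ingredient is that $X \succeq 0$ together with $X_{ii} = 1$ yields, for every fixed pair $(i,j)$ and every sign $s \in \{\pm 1\}$, the PSD certificate $1 + s X_{ij} = \frac{1}{2} (e_i + s e_j)^{\top} X (e_i + s e_j) \ge 0$; applying it with $s = X^*_{ij} = x^*_i x^*_j \in \{\pm 1\}$ (a constant in the analysis) gives $X_{ij} X^*_{ij} \ge -1$ in SOS. Each coefficient $c_{ij}(w) := 1 - w_i w_j \ind{i,j \in S^*}$ equals $1$ or $1 - w_i w_j$, so $0 \le c_{ij}(w) \le 1$ has a low-degree SOS proof from $w_i^2 = w_i$; since the SOS proof system permits multiplying by products of the axioms, $c_{ij}(w)\,(X_{ij} X^*_{ij} + 1) \ge 0$ in SOS, and summing over $i,j$ gives
\begin{equation*}
  \sum_{i,j} c_{ij}(w)\, X_{ij} X^*_{ij} \ge -\sum_{i,j} c_{ij}(w) = -\left( n^2 - \left(\sum_{i \in S^*} w_i\right)^{2} \right)\,.
\end{equation*}
Finally, $n^2 - \bigl(\sum_{i\in S^*} w_i\bigr)^2 \le (4\mu + 2\beta)n^2$ has a degree-$2$ (in $w$) SOS proof: $0 \le w_i \le 1$ follows from $w_i^2 = w_i$, and together with $\sum_i w_i = (1-\mu-\beta)n$ and $\abs{[n] \setminus S^*} \le \mu n$ this yields the affine inequality $\sum_{i\in S^*} w_i \ge (1 - 2\mu - \beta) n \ge 0$, which may be squared (both sides being nonnegative) to give $\bigl(\sum_{i\in S^*}w_i\bigr)^2 \ge (1-2\mu-\beta)^2 n^2 \ge (1 - 4\mu - 2\beta) n^2$. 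Combining the three displays yields $\mathcal{A} \sststile{4}{X, w} \iprod{X, X^*} \ge \frac{\Delta'(1-\beta)n^2}{\epsilon \sqrt{d}} - O(\frac{\mu n^2}{\epsilon \sqrt{d}}) - 2\beta n^2$, where the additional $4\mu n^2$ from the counting step is absorbed into $O(\mu n^2/(\epsilon\sqrt{d}))$ because $\epsilon \sqrt{d} = \sqrt{1+\delta} = \Theta_\delta(1)$ and both $O(\cdot)$'s already hide $\delta$-dependent constants.

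This lemma introduces no new idea beyond \cref{lem:SOS_uncorrupted_subset_correlation} and \cref{lem:SOS_feasibility}; the only point that needs care — and the reason the claim is stated at the level of SOS proofs rather than as a plain inequality — is that every step above must be carried out \emph{inside} the degree-$4$ SOS proof system rather than as a statement about honest subsets. Concretely, $\abs{X_{ij}} \le 1$ must be produced as a PSD certificate, the nonnegativity of $c_{ij}(w)$ and of the products $c_{ij}(w)\,(X_{ij} X^*_{ij} + 1)$ must be expressed via squares and products of the boolean axioms, and the counting bound must be recast as a polynomial inequality in $w$; for the last point it is convenient that $\sum_{i\in S^*} w_i$ is linear in $w$ and $(1-2\mu-\beta)n$ a nonnegative constant, so the squaring step stays at degree $2$ and the whole derivation fits in degree $4$.
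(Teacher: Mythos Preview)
Your proof is correct and follows essentially the same approach as the paper: both decompose $\iprod{X,X^*}$ into the $S'\times S'$ block (handled by \cref{lem:SOS_uncorrupted_subset_correlation}) and the remainder, then bound the remainder using $\abs{X_{ij}}\le 1$ from the PSD-plus-unit-diagonal constraints together with the count $n^2 - |S'|^2 \le (4\mu+2\beta)n^2$, finally absorbing the $4\mu n^2$ term into the existing $O(\mu n^2/(\epsilon\sqrt d))$. Your write-up is in fact more careful than the paper's about making each step an explicit SOS derivation (the PSD certificate for $X_{ij}X^*_{ij}\ge -1$, the booleanity of $c_{ij}(w)$, and the counting bound as a polynomial inequality in $w$), which is appropriate since $S'$ depends on the program variable $w$.
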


Now, we have all the ingredients to prove \cref{theorem:SOS_KS_threshold_algo}

\begin{proof}[Proof of \cref{theorem:SOS_KS_threshold_algo}]
From \cref{lem:SOS_feasibility}, we know that the SOS program in \cref{eq:SOS} is feasible with probability $1-o(1)$. Combine this with \cref{lem:SOS_correlation}, we know that, with probability $1-o(1)$, the SOS program in \cref{eq:SOS} finds $X$ and $w$ such that they satisfy
\begin{equation*}
\mathcal{A}
\sststile{4}{X, w}
\langle X, X^* \rangle \geq \frac{\Delta'(1-\beta)n^2}{\epsilon \sqrt{d}} - O(\frac{\mu n^2}{\epsilon \sqrt{d}}) -2 \beta n^2
\end{equation*}
for some $\beta$ that is the small constant fraction of high degree nodes we need to prune to get bounded spectral norm according to \cref{corollary:degree_pruning_spectral_norm} and $\Delta'=\Delta'(\delta)$ for some value $\Delta'(\delta)$ that only depends on $\delta$.

When $\mu \leq \mudelta$ for some value $\mudelta$ that only depends on $\delta$ and $\beta = \beta(\delta)$ for some value $\beta(\delta)$ that only depends on $\delta$, we have:
\begin{equation*}
\mathcal{A}
\sststile{4}{X, w}
\langle X, X^* \rangle \geq \frac{\Delta'(1-\beta)n^2}{\epsilon \sqrt{d}} - O(\frac{\mu n^2}{\epsilon \sqrt{d}}) -2 \beta n^2
= \theta(\delta) n^2
\end{equation*}
for some $\theta(\delta)$ that only depends on $\delta$. Thus, when $\delta \geq \Omega(1)$, we can get the weak recovery guarantee:
\begin{equation*}
\mathcal{A}
\sststile{4}{X, w}
\langle X, X^* \rangle
\geq \Omega(n^2)
\end{equation*}
\end{proof}

In order to fully establish the validity of \cref{theorem:diverging_KS_threshold_algo}, it remains to apply the standard rounding procedure from \cite{HopkinsS17} on the pseudo-expectation of matrix $X$ (as depicted in \cref{algo:algo-diverging}). We will address this part in \cref{sec:rounding}.

\section*{Acknowledgements}

We would like to thank Afonso S. Bandeira for insightful discussions.

% assumes hyperref
% \phantomsection
\addcontentsline{toc}{section}{References}
\bibliographystyle{amsalpha}
\bibliography{bib/mathreview,bib/dblp,bib/custom,bib/scholar}

\providecommand{\bysame}{\leavevmode\hbox to3em{\hrulefill}\thinspace}
\providecommand{\MR}{\relax\ifhmode\unskip\space\fi MR }
% \MRhref is called by the amsart/book/proc definition of \MR.
\providecommand{\MRhref}[2]{%
  \href{http://www.ams.org/mathscinet-getitem?mr=#1}{#2}
}
\providecommand{\href}[2]{#2}
\begin{thebibliography}{ABARS20}

\bibitem[ABARS20]{abbe2020graph}
Emmanuel Abbe, Enric Boix-Adsera, Peter Ralli, and Colin Sandon, \emph{Graph
  powering and spectral robustness}, SIAM Journal on Mathematics of Data
  Science \textbf{2} (2020), no.~1, 132--157.

\bibitem[Abb17]{AbbeReview}
Emmanuel Abbe, \emph{Community detection and stochastic block models: recent
  developments}, The Journal of Machine Learning Research \textbf{18} (2017),
  no.~1, 6446--6531.

\bibitem[AN04]{alon2004approximating}
Noga Alon and Assaf Naor, \emph{Approximating the cut-norm via grothendieck's
  inequality}, Proceedings of the thirty-sixth annual ACM symposium on Theory
  of computing, 2004, pp.~72--80.

\bibitem[Ban18]{DenseSBMThreshold}
Debapratim Banerjee, \emph{Contiguity and non-reconstruction results for
  planted partition models: the dense case}.

\bibitem[BBAP05]{BBPtransition}
Jinho Baik, G{\'e}rard Ben~Arous, and Sandrine P{\'e}ch{\'e}, \emph{Phase
  transition of the largest eigenvalue for nonnull complex sample covariance
  matrices}.

\bibitem[BS14]{barak2014sum}
Boaz Barak and David Steurer, \emph{Sum-of-squares proofs and the quest toward
  optimal algorithms}, arXiv preprint arXiv:1404.5236 (2014).

\bibitem[CRV15]{chin2015stochastic}
Peter Chin, Anup Rao, and Van Vu, \emph{Stochastic block model and community
  detection in sparse graphs: A spectral algorithm with optimal rate of
  recovery}, Conference on Learning Theory, PMLR, 2015, pp.~391--423.

\bibitem[DdNS22]{ding2022robust}
Jingqiu Ding, Tommaso d'Orsi, Rajai Nasser, and David Steurer, \emph{Robust
  recovery for stochastic block models}, 2021 IEEE 62nd Annual Symposium on
  Foundations of Computer Science (FOCS), IEEE, 2022, pp.~387--394.

\bibitem[DKMZ11]{decelle2011asymptotic}
Aurelien Decelle, Florent Krzakala, Cristopher Moore, and Lenka Zdeborov{\'a},
  \emph{Asymptotic analysis of the stochastic block model for modular networks
  and its algorithmic applications}, Physical Review E \textbf{84} (2011),
  no.~6, 066106.

\bibitem[FO05]{feige2005spectral}
Uriel Feige and Eran Ofek, \emph{Spectral techniques applied to sparse random
  graphs}, Random Structures \& Algorithms \textbf{27} (2005), no.~2, 251--275.

\bibitem[HL18]{hopkins2018mixture}
Samuel~B Hopkins and Jerry Li, \emph{Mixture models, robustness, and sum of
  squares proofs}, Proceedings of the 50th Annual ACM SIGACT Symposium on
  Theory of Computing, 2018, pp.~1021--1034.

\bibitem[Hop20]{hopkins2020mean}
Samuel~B Hopkins, \emph{Mean estimation with sub-gaussian rates in polynomial
  time}, The Annals of Statistics \textbf{48} (2020), no.~2, 1193--1213.

\bibitem[HS17]{HopkinsS17}
Samuel~B. Hopkins and David Steurer, \emph{Efficient bayesian estimation from
  few samples: Community detection and related problems}, 58th {IEEE} Annual
  Symposium on Foundations of Computer Science, {FOCS} 2017, Berkeley, CA, USA,
  October 15-17, 2017, {IEEE} Computer Society, 2017.

\bibitem[HY04]{he2004study}
Jun He and Xin Yao, \emph{A study of drift analysis for estimating computation
  time of evolutionary algorithms}, Natural Computing \textbf{3} (2004), no.~1,
  21--35.

\bibitem[KSS18]{KSS18}
Pravesh~K. Kothari, Jacob Steinhardt, and David Steurer, \emph{Robust moment
  estimation and improved clustering via sum of squares}, Proceedings of the
  50th Annual ACM SIGACT Symposium on Theory of Computing, Association for
  Computing Machinery, 2018.

\bibitem[Las01]{lasserre2001global}
Jean~B Lasserre, \emph{Global optimization with polynomials and the problem of
  moments}, SIAM Journal on optimization \textbf{11} (2001), no.~3, 796--817.

\bibitem[LM22]{liu2022minimax}
Allen Liu and Ankur Moitra, \emph{Minimax rates for robust community
  detection}, arXiv preprint arXiv:2207.11903 (2022).

\bibitem[Mas14]{massoulie2014community}
Laurent Massouli{\'e}, \emph{Community detection thresholds and the weak
  ramanujan property}, Proceedings of the forty-sixth annual ACM symposium on
  Theory of computing, 2014, pp.~694--703.

\bibitem[MNS14]{mossel2014belief}
Elchanan Mossel, Joe Neeman, and Allan Sly, \emph{Belief propagation, robust
  reconstruction and optimal recovery of block models}, Conference on Learning
  Theory, PMLR, 2014, pp.~356--370.

\bibitem[MNS15]{mossel2015reconstruction}
\bysame, \emph{Reconstruction and estimation in the planted partition model},
  Probability Theory and Related Fields \textbf{162} (2015), no.~3, 431--461.

\bibitem[MPW16]{Moitra16}
Ankur Moitra, William Perry, and Alexander~S Wein, \emph{How robust are
  reconstruction thresholds for community detection?}, Proceedings of the
  forty-eighth annual ACM symposium on Theory of Computing, 2016, pp.~828--841.

\bibitem[MS16]{montanari2016semidefinite}
Andrea Montanari and Subhabrata Sen, \emph{Semidefinite programs on sparse
  random graphs and their application to community detection}, Proceedings of
  the forty-eighth annual ACM symposium on Theory of Computing, 2016,
  pp.~814--827.

\bibitem[Par00]{parrilo2000structured}
Pablo~A Parrilo, \emph{Structured semidefinite programs and semialgebraic
  geometry methods in robustness and optimization}, Ph.D. thesis, California
  Institute of Technology, 2000.

\bibitem[PS17]{pmlr-v65-potechin17a}
Aaron Potechin and David Steurer, \emph{Exact tensor completion with
  sum-of-squares}, Proceedings of the 2017 Conference on Learning Theory, 2017.

\bibitem[PWBM18]{LowerBoundPCA}
Amelia Perry, Alexander~S Wein, Afonso~S Bandeira, and Ankur Moitra,
  \emph{Optimality and sub-optimality of pca i: Spiked random matrix models},
  The Annals of Statistics \textbf{46} (2018), no.~5, 2416--2451.

\bibitem[RSS18]{raghavendra2018high}
Prasad Raghavendra, Tselil Schramm, and David Steurer, \emph{High dimensional
  estimation via sum-of-squares proofs}, Proceedings of the International
  Congress of Mathematicians: Rio de Janeiro 2018, World Scientific, 2018,
  pp.~3389--3423.

\bibitem[SM19]{stephan2019robustness}
Ludovic Stephan and Laurent Massouli{\'e}, \emph{Robustness of spectral methods
  for community detection}, Conference on Learning Theory, PMLR, 2019,
  pp.~2831--2860.

\end{thebibliography}

\appendix

\section{Appendix organization}
In \cref{sec:rounding}, we complete the proof of \cref{theorem:diverging_KS_threshold_algo} by adding a rounding scheme.
In \cref{sec:node_to_edge}, we give an algorithm that reduces node corruption to edge corruption in the sparse degree regime via degree pruning.
In \cref{sec:robust_Z2}, we use similar techniques to obtain a polynomial-time algorithm for the row/column corrupted $\Z_2$ synchronization problem.
In \cref{sec:lower-bound}, we show that it is impossible to achieve weak recovery when the corruption fraction $\mu$ is larger than $\delta$, therefore, $\mu$ has to be a value that depends on $\delta$.
In \cref{sec:pushout_sdp} and \cref{appendix:spectral}, we present a small summary of previous results on the pushout effect of basic SDP and spectral bounds of degree-pruned adjacency matrix.
In \cref{sec:deferred_proofs}, we present proofs that are deferred to the appendix due to page limit constraint.
\section{Rounding}
\label{sec:rounding}

Now, we complete the proof of \cref{theorem:diverging_KS_threshold_algo} by giving a rounding procedure adapted from Lemma 3.5 of \cite{HopkinsS17} \footnote{Other rounding procedures, such as random Gaussian rounding, also work here.}.

\begin{lemma}[Rounding procedure adapted from Lemma 3.5 of \cite{HopkinsS17}]
\label{lem:rounding}
Let $\theta = \frac{1}{\Norm{X}_F n} \langle X, X^* \rangle$. Let $Y$ be a matrix of minimum Frobenious norm such that $Y \succeq 0$, $\diag Y = 1$ and $\frac{1}{\Norm{X}_F n} \langle Y, X \rangle \geq \theta$. With probability $1-o(1)$, the vector $\hat{x}$ obtained by taking coordiate-wise sign of a Gaussian vector with mean $0$ and covariance $Y$ satisfies
\begin{equation*}
    \E [\langle \hat{x}, x^* \rangle^2 ] \geq \Omega(\theta)^2 n^2
\end{equation*}
\end{lemma}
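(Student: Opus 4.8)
The plan is to leverage the minimum-Frobenius-norm choice of $Y$ to show that $Y$ itself---not merely $X$---is noticeably correlated with the ground truth $X^* = \dyad{x^*}$, and then to transport that correlation through Gaussian hyperplane rounding. We may assume $\theta > 0$, since otherwise there is nothing to prove. The first observation is that $X^*$ is feasible for the program defining $Y$: it is positive semidefinite, it has unit diagonal because $x^* \in \set{\pm 1}^n$, and by the very definition of $\theta$ we have $\tfrac{1}{\Normf{X}n}\iprod{X^*, X} = \theta$, so it meets the linear constraint with equality. Consequently the feasible set $K \coloneqq \Set{Z \sge 0,\ \diag Z = \Ind,\ \iprod{Z, X} \geq \theta\Normf{X}\, n}$ is nonempty, and it is compact since it lies inside the elliptope (a PSD matrix with unit diagonal has $\Normf{Z}\leq \Tr Z = n$) intersected with a closed halfspace; hence the minimizer $Y$ exists.

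Next I would use that $Y$ is the Euclidean projection of the origin onto the convex set $K$. First-order optimality then yields $\iprod{Y, Z - Y}\geq 0$ for every $Z \in K$, i.e. $\iprod{Y, Z}\geq \Normf{Y}^2$; applying this with $Z = X^* \in K$ gives $\iprod{Y, X^*}\geq \Normf{Y}^2$. On the other hand, the linear constraint together with Cauchy--Schwarz gives $\Normf{Y}\,\Normf{X}\geq \iprod{Y, X}\geq \theta\Normf{X}\, n$, so $\Normf{Y}\geq \theta n$. Combining the two, $\iprod{Y, X^*}\geq \Normf{Y}^2\geq \theta^2 n^2$. This step is the crux: the correlation of the rounded matrix with the truth is extracted from the projection property, which is precisely why we optimize the Frobenius norm instead of rounding $X$ directly.

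For the rounding I would invoke Sheppard's formula: if $g \sim N(0, Y)$ and $Y_{ii} = 1$, then $\E\brac{\sign(g_i)\sign(g_j)} = \tfrac{2}{\pi}\arcsin(Y_{ij})$, so with $\arcsin$ applied entrywise, $\E\brac{\iprod{\hat x, x^*}^2} = \tfrac{2}{\pi}\iprod{X^*, \arcsin(Y)}$. I would then use monotonicity in the \Lowner order, $\arcsin(Y)\sge Y$: expanding $\arcsin(t) = t + \sum_{k\geq 1}c_k t^{2k+1}$ with all $c_k \geq 0$ and using $\abs{Y_{ij}}\leq 1$, the difference $\arcsin(Y) - Y = \sum_{k\geq 1}c_k\, Y^{\odot(2k+1)}$ is a nonnegative combination of Hadamard powers of a PSD matrix, hence PSD by the Schur product theorem. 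Since $X^* \sge 0$, this gives $\iprod{X^*, \arcsin(Y)}\geq \iprod{X^*, Y}\geq \theta^2 n^2$, and therefore $\E\brac{\iprod{\hat x, x^*}^2} \geq \tfrac{2}{\pi}\theta^2 n^2 = \Omega(\theta)^2 n^2$.

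The ``with probability $1-o(1)$'' quantifier refers to the randomness upstream of the rounding: in our application $X = \pE[X]$, so $X \sge 0$ and $\diag X = \Ind$, whence $\Normf{X}\leq \Tr X = n$; combining \cref{lem:SOS_feasibility} and \cref{lem:SOS_correlation} gives $\iprod{X, X^*}\geq \Omega(n^2)$ with probability $1-o(1)$, which forces $\theta = \Omega(1)$ on that event and hence $\E\brac{\iprod{\hat x, x^*}^2} \geq \Omega(n^2)$, the weak recovery guarantee. If instead one wants a bound for a single draw rather than in expectation, one can note $\iprod{\hat x, x^*}^2 \leq n^2$ always, apply Paley--Zygmund to get $\iprod{\hat x, x^*}^2 \geq \Omega(\theta^2)\, n^2$ with constant probability, and amplify to $1-o(1)$ by keeping the best of $O(\log n)$ independent roundings. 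I expect the only delicate points to be the two already flagged: the projection/first-order-optimality argument linking $Y$ to $X^*$, and the entrywise \Lowner monotonicity of $\arcsin$; the rest is bookkeeping.
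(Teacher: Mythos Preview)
Your proof is correct and self-contained, whereas the paper's own proof is a one-line black-box invocation of Lemma~3.5 of \cite{HopkinsS17} with the substitution $P=X$, $y=x^*$, $\delta'=\theta$, followed by the same observation that $\Normf{X}\le n$ and $\iprod{X,X^*}\ge\Omega(n^2)$ force $\theta=\Omega(1)$. The two ingredients you supply---the convex-projection argument yielding $\iprod{Y,X^*}\ge\Normf{Y}^2\ge\theta^2 n^2$, and the Sheppard-formula rounding combined with the \Lowner monotonicity $\arcsin(Y)\sge Y$---are precisely the content one would expect inside that cited lemma, so you are not taking a genuinely different route but rather unpacking the citation; this has the advantage of making the argument self-contained and exposing clearly why the minimum-Frobenius-norm step is needed.
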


\begin{proof}
    Apply Lemma 3.5 of \cite{HopkinsS17} by taking $P = X$, $y = x^*$ and $\delta' = \theta$, we can get
    \begin{equation*}
        \E [\langle \hat{x}, x^* \rangle^2 ] \geq \Omega(\theta)^2 n^2
    \end{equation*}
    Notice that, because each entry of $X$ is within $\pm 1$, we have $\Norm{X} \leq n$. Since $\langle X, X^* \rangle \geq \Omega(n^2)$ 
    by \cref{theorem:SOS_KS_threshold_algo}, we have $\theta = \Omega(1)$. Thus, $\hat{x}$ weakly recovers $x^*$.
\end{proof}

Now we finish the proof of \cref{theorem:diverging_KS_threshold_algo}.
\begin{proof}[Proof of \cref{theorem:diverging_KS_threshold_algo}]
 By combining \cref{theorem:SOS_KS_threshold_algo} and \cref{theorem:SOS_algorithm}, 
 we can compute the pseudo-expectation $\tilde{\E}$ for the SOS relaxtion 
  of \cref{eq:SOS} in polynomial time. Let $\hat{X}\coloneqq \tilde{\E} [X]$ in \cref{eq:SOS}. 
 By linearity of pseudo-expectation, we have $\hat{X}\succeq 0$, $\hat{X}_{ii}=1$
  and $\iprod{\hat{X},X^*}\geq \Omega(n^2)$ with probability $1-o(1)$. Now applying rounding procedure 
   in \cref{lem:rounding}, we can then obtain $\hat{x}\in \Set{\pm 1}^n$ such that
   $\E\iprod{\hat{x},x^*}^2\geq \Omega(n^2)$. 
\end{proof}
\section{Reaching KS threshold for constant degree}
\label{sec:node_to_edge}

In this section, we give an algorithm that reduces node corruption to edge corruption when $d < \ddelta$. This allows us to deal with graphs with small average degree.

\subsection{Edge-robust algorithm}

Before introducing our algorithm for the constant degree region, we restate the main theorem of \cite{ding2022robust} here. Their main theorem shows that there exists a polynomial-time algorithm that is robust against $O(\rho n)$ edge perturbations, where $\rho$ is a constant that depends on $\delta$.

\begin{theorem}[Informal restatement of Corollary 5.4 of \cite{ding2022robust}]
\label{theorem:edge_corrupt_algo}
    Given a graph $G \sim \sbm$, suppose $G'$ is an arbitrary graph that differs from $G$ in at most $O(\rho n)$ edges for
    \begin{equation*}
        \rho \leq \bigparen{\frac{1}{\delta} \log \frac{1}{\epsilon}}^{- O(1 / \delta)}
    \end{equation*}
    Then, there exists a polynomial-time algorithm that, given $G'$ and $\delta$, computes an $n$-dimensional unit vector $\hat{x}$ such that
    \begin{equation*}
        \E [\iprod{\hat{x}, x^*}^2] \geq \delta^{O(1)} n
    \end{equation*}
\end{theorem}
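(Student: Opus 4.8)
This is a result of \cite{ding2022robust}; the plan of its proof is to replace the non-backtracking walk operator --- which reaches the Kesten--Stigum threshold in the noiseless sparse model but is catastrophically sensitive to edge changes --- by a short \emph{block self-avoiding walk} matrix whose key spectral estimate admits a low-degree sum-of-squares proof. For a walk length $\ell$ to be chosen, let $Y$ be the $n \times n$ matrix whose $(i,j)$ entry is the suitably normalised sum, over block self-avoiding walks of length $\ell$ from $i$ to $j$, of the product of the corresponding entries of $\tilde A$ --- self-avoidance imposed within each of a constant number of blocks, with collisions across blocks controlled. In the noiseless model one has $\iprod{Y, \dyad{x^*}} \ge \Omega\big((1+\delta)^{\ell}\, n\big)$ for the signal, while the component of $Y$ orthogonal to $\dyad{x^*}$ has operator norm at most $(1+\delta)^{\ell/2}\polylog(n)$, so a multiplicative spectral gap of roughly $(1+\delta)^{\ell/2}$ opens up since $\epsilon^2 d = 1 + \delta > 1$. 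The reason non-backtracking walks fail under corruption is that a single corrupted edge lies on $d^{\Omega(\ell)}$ walks, which is exactly why the tolerable corruption fraction $\rho$ must be taken exponentially small in $1/\delta$.

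The execution proceeds in four steps. \textbf{(i)} Pre-process $G'$ by deleting $O(\rho n)$ highest-degree vertices (as in \cite{feige2005spectral}), so that the remaining graph has bounded maximum degree and still agrees with $G$ outside an $O(\rho n)$-vertex set. \textbf{(ii)} Fix $\ell$ of order $\tfrac1\delta \log \tfrac1\epsilon$: large enough that the signal factor $(1+\delta)^{\ell}$ beats the $\delta^{O(1)}$ loss incurred later, small enough that every edge lies on at most $d^{O(\ell)}$ walk terms --- balancing these two requirements is precisely what produces the stated bound $\rho \le (\tfrac1\delta \log \tfrac1\epsilon)^{-O(1/\delta)}$. \textbf{(iii)} Upgrade the trace--moment estimate $\E \Tr\big((Y - Y_{\mathrm{sig}})^{2k}\big) \le \big((1+\delta)^{\ell/2}\polylog(n)\big)^{2k}$ into a degree-$O(\ell k)$ SOS proof of $\Normop{Y - Y_{\mathrm{sig}}} \le (1+\delta)^{\ell/2}\polylog(n)$: the block structure makes the relevant moment polynomials factor through squares in the entries of $\tilde A$, and hence gives an SOS proof that $\iprod{Y - Y_{\mathrm{sig}}, X}$ is small for every $X \succeq 0$ with $X_{ii} = 1$. \textbf{(iv)} Write the walk matrix $\hat Y$ built from the corrupted graph as $\hat Y = Y + E$, where $E$ collects all walk terms touching a corrupted edge; since each corrupted edge spoils at most $d^{O(\ell)}$ terms, each of magnitude $O(1)$, one gets $\Norm{E}_{\infty \rightarrow 1} \le \rho n\, d^{O(\ell)}$, and by \cref{theorem:grothendieck_inequality} this bounds $\iprod{E, X}$ over the same $X$'s up to the Grothendieck constant.

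Combining the three SOS facts --- a large signal witnessed by the feasible point $X = \dyad{x^*}$, the SOS-certified noise bound, and the $\infty \rightarrow 1$ bound on the corruption term --- the choice of $\ell$ together with the bound on $\rho$ makes the signal dominate, so a degree-$O(\ell)$ pseudo-expectation $\tilde\E$ maximising $\iprod{\hat Y, X}$ satisfies $\iprod{\dyad{x^*}, \tilde\E[X]} \ge \delta^{O(1)} n$ after undoing the normalisation; a standard Gaussian rounding (of the form in \cref{lem:rounding}) then yields a unit vector $\hat x$ with $\E[\iprod{\hat x, x^*}^2] \ge \delta^{O(1)} n$. The main obstacle is step \textbf{(iii)}: the ordinary proof of the operator-norm bound is a union bound over walk shapes and is \emph{not} a priori an SOS proof, so one must design the block-self-avoiding-walk matrix so that this moment bound becomes a genuine sum of squares near the threshold while keeping the per-edge walk count --- and therefore $\rho$ --- as large as possible. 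The signal lower bound, the corruption estimate, the degree pruning, and the rounding are all comparatively routine.
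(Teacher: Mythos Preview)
The paper does not prove this statement: it is explicitly labelled as an informal restatement of a result from \cite{ding2022robust} and is used strictly as a black box in \cref{sec:node_to_edge}. There is therefore no ``paper's own proof'' to compare against.

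That said, your sketch is a reasonable high-level outline of the argument in the cited work: the replacement of non-backtracking walks by block self-avoiding walk statistics, the reason a single corrupted edge participates in $d^{O(\ell)}$ walks (forcing the exponential-in-$1/\delta$ bound on $\rho$), the SOS certification of the noise bound via trace moments, the Grothendieck-norm control of the corruption term, and the final rounding are all the right pieces. One point to be more careful about is step \textbf{(i)}: in the constant-degree regime the pruning in \cite{ding2022robust} is not the spectral pruning of \cite{feige2005spectral} (which is tailored to $d$ growing), and in fact bounded-degree preprocessing is not the crux there --- the robustness comes from the $\infty\to 1$ stability of the walk statistic itself, not from spectral bounds on the adjacency matrix. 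Also, the ``main obstacle'' you identify is correctly placed: making the moment bound an SOS proof near the threshold is indeed the technical heart of \cite{ding2022robust}, and the block structure of the walks is designed precisely so the combinatorial trace bound factors as a sum of squares.
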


\subsection{Degree-pruning based algorithm}

The algorithm is based on degree pruning. The tricky part is that node corruption can arbitrarily increase the degree of uncorrupted vertices to $\mu n$. Therefore, simply pruning high-degree vertices can be quite difficult to analyse.

Our solution is to iteratively remove the highest degree node as well as one of its neighbours that is selected uniformly at random until all vertices have small enough degree. The goal is to make sure that, in each round, we remove $\Omega(1)$ corrupted vertices in expectation.

Notice that, in each round, if the highest degree node is corrupted, then it is good. If the highest degree node is uncorrupted, then we can show, with high probability, the majority of its neighbours are corrupted vertices and we are likely to remove a corrupted vertex if we select one of its neighbours uniformly at random. A key observation is that, this approach allows us to easily bound the total number of removed vertices using a simple and standard Markov Chain drift analysis.

After the degree pruning procedure, we will invoke the edge-robust algorithm from \cite{ding2022robust} that is restated in \cref{theorem:edge_corrupt_algo}.

\begin{algorithmbox}[Algorithm reaching KS threshold for constant degree]
\label{algo:node_to_edge}
	\mbox{}\\
	\textbf{Input:} A node-corrupted stochastic block model $G$.
	\begin{enumerate}[1.]
            \item Set $G' \leftarrow G$
		\item While there exist vertices with degree larger than $C_{\deg}(\mu) d$ in $G'$:
            \begin{itemize}
                \item remove the highest-degree vertex $v$ from $G'$,
                \item remove from $G'$ a neighbour $u$ of $v$ that is selected uniformly at random.
            \end{itemize}
		\item Run edge-robust algorithm from \cref{theorem:edge_corrupt_algo} on the remaining graph $G'$.
        \item Apply the rounding procedure in \cref{lem:rounding} to get estimator $\hat{x}$.
	\end{enumerate}
\end{algorithmbox}

In the following theorem, we will show that \cref{algo:node_to_edge} outputs an estimator $\hat{x}$ that achieves weak recovery.

\begin{theorem}
    \label{theorem:edge_algo_guarantees}
    When $d < \ddelta$ and $\delta = \Omega(1)$, for some $C_{deg}(\mu)$ that only depends on $\mu$, \cref{algo:node_to_edge} outputs a vector $\hat{x} \in \{ \pm 1 \}^n$ such that
    \begin{equation*}
        \E[\iprod{\hat{x}, x^*}^2] \geq \Omega(n^2)
    \end{equation*}
    Moreover, \cref{algo:node_to_edge} runs in polynomial time.
\end{theorem}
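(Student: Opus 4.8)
The plan is to analyze the degree-pruning loop via a potential-function / drift argument on the number of corrupted vertices remaining, then argue that after the loop the graph differs from a genuine SBM sample (restricted to the surviving uncorrupted vertices) in only $O(n)$ edges, so that \cref{theorem:edge_corrupt_algo} applies. The final output guarantee $\E[\iprod{\hat x, x^*}^2]\geq \Omega(n^2)$ then follows by boosting the per-vertex correlation $\delta^{O(1)}n$ of \cref{theorem:edge_corrupt_algo} to an $\Omega(n^2)$ inner-product-squared bound via the rounding procedure of \cref{lem:rounding}, after first extending the recovered labelling on the surviving vertex set back to all $n$ vertices (e.g.\ by labelling the removed vertices arbitrarily; since only $O(\mu n)$ of them are removed, this costs only a constant factor in the correlation when $\mu$ is a small enough constant depending on $\delta$).

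First I would set up the drift analysis. Let $B_t$ denote the number of corrupted vertices still present in $G'$ after $t$ iterations of the loop (so $B_0 \leq \mu n$), and let $T$ be the number of iterations until termination. In each iteration we remove two vertices: the maximum-degree vertex $v$ and a uniformly random neighbour $u$ of $v$. I claim that conditioned on the history, $\E[B_{t-1} - B_t] \geq c$ for an absolute constant $c>0$, as long as $B_{t-1}$ is not already $0$. The case split is: if $v$ is itself corrupted, then at least one corrupted vertex is removed, contributing $1$. If $v$ is uncorrupted, then since $v$ has degree $> C_{\deg}(\mu)\,d$ while an uncorrupted vertex in $G^0$ has degree $O(d)$ with high probability (standard concentration for $\mathrm{Bin}(n, (1\pm\eps)d/n)$, uniformly over all vertices when $d = O(1)$, up to an $O(\log n/\log\log n)$ factor that we absorb by choosing $C_{\deg}(\mu)$ large), a $(1 - O(1/C_{\deg}(\mu)))$-fraction of $v$'s current neighbours must be edges incident to corrupted vertices — hence the random neighbour $u$ is corrupted with probability $\geq 1 - O(1/C_{\deg}(\mu)) \geq \tfrac12$ for $C_{\deg}(\mu)$ large. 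Either way $\E[B_{t-1}-B_t \mid B_{t-1}>0] \geq \tfrac12$. By the optional-stopping / additive-drift theorem (standard Markov-chain drift analysis, e.g.\ \cite{AbbeReview}-style arguments are not needed; a one-line martingale suffices), $\E[T] \leq 2\,\E[B_0] \leq 2\mu n$, so the loop removes at most $2T = O(\mu n)$ vertices in expectation, and terminates in polynomial time.

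Next I would bound the number of edge discrepancies. Let $S_{\mathrm{fin}}$ be the surviving vertex set and $S'' := S_{\mathrm{fin}} \cap S^*$ the surviving uncorrupted vertices; then $|S''| \geq (1 - O(\mu))n$ and the induced subgraph $G'[S'']$ equals $G^0[S'']$ exactly. The remaining vertices of $G'$ are at most $O(\mu n)$ corrupted vertices each of degree at most $C_{\deg}(\mu)\,d = O(1)$ in $G'$, so they account for $O(\mu n)$ edges. Therefore $G'$ differs from $G^0[S'']$ (viewed as a graph on $|S''|$ vertices) in at most $O(\mu n) = O(\rho n)$ edges, where we choose $\mudelta$ small enough that $O(\mudelta) \leq \rho = (\tfrac1\delta \log\tfrac1\eps)^{-O(1/\delta)}$; this is a constant depending only on $\delta$ since $d < \ddelta$ makes $\eps = \sqrt{(1+\delta)/d}$ bounded below by a function of $\delta$. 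Now \cref{theorem:edge_corrupt_algo} applied to $G'$ (which is at edge-distance $O(\rho n)$ from the SBM sample $G^0[S'']$ on the vertex set $S''$ with bias $\eps$ and average degree $\approx d$) yields a unit vector $\hat x_{S''}$ with $\E[\iprod{\hat x_{S''}, x^*_{S''}}^2] \geq \delta^{O(1)}|S''| \geq \delta^{O(1)} n$; extending to all $n$ coordinates and feeding into \cref{lem:rounding} gives $\E[\iprod{\hat x, x^*}^2] \geq \Omega(n^2)$.

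The main obstacle I anticipate is the drift argument in the case where the max-degree vertex $v$ is \emph{uncorrupted}: I need a clean high-probability bound, uniform over the entire execution of the (adaptive, adversary-dependent) loop, that every uncorrupted vertex has few edges to other uncorrupted vertices relative to $C_{\deg}(\mu)d$, so that an uncorrupted $v$ of degree $>C_{\deg}(\mu)d$ really must have most of its edges going to corrupted vertices. This is a statement about $G^0$ only (a degree upper bound for every vertex, which holds w.h.p.\ for $d=O(1)$ up to the usual $\Theta(\log n/\log\log n)$ slack), so it is not truly adaptive — but care is needed to phrase it so the constant $C_{\deg}(\mu)$ genuinely depends only on $\mu$ (absorbing the $\log n/\log\log n$ factor requires noting that once $G'$ is stuck, the offending high-degree vertex has $\omega(1)$ edges, forcing the neighbour-is-corrupted probability argument even with the logarithmic slack, or alternatively running the degree threshold at $C_{\deg}(\mu) d \log n$ and observing the final edge count is still $O(\mu n \log n)$, which one then has to reconcile with the $O(\rho n)$ requirement — the cleaner route is the uniform $O(d)$ degree bound, valid for constant $d$). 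A secondary technical point is making the martingale stopping-time argument rigorous when $T$ could a priori be large; bounding $T \leq n/2$ deterministically (each step removes two vertices) makes optional stopping immediate.
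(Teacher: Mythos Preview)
Your overall strategy matches the paper's, but the drift argument has a real gap that you flagged as the ``main obstacle'' and then resolved incorrectly. You write that an uncorrupted vertex in $G^0$ has degree $O(d)$ ``uniformly over all vertices when $d=O(1)$'', and later call this ``the uniform $O(d)$ degree bound, valid for constant $d$''. This is false: for constant $d$, the maximum degree in $G^0$ is $\Theta(\log n/\log\log n)$ almost surely, so there \emph{will} be uncorrupted vertices $v$ with $\deg_{G^0}(v) > C_{\deg}(\mu)\,d$ for any constant $C_{\deg}(\mu)$. For such $v$, all neighbours can be uncorrupted and your conditional drift bound $\E[B_{t-1}-B_t]\ge \tfrac12$ fails. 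Since $C_{\deg}(\mu)$ must be a constant independent of $n$, you cannot ``absorb'' the $\log n/\log\log n$ factor into it.

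The paper's fix is not to control every vertex's degree but to control \emph{how many} uncorrupted vertices have large degree in the uncorrupted subgraph. It splits the rounds into three cases: (1) $v$ corrupted (at most $\mu n$ rounds trivially); (2) $v$ uncorrupted with $\deg_{G[S^*]}(v)\ge \tfrac12 C_{\deg}(\mu)d$ --- by Chernoff plus Markov there are at most $\mu n$ such vertices with probability $0.999$ once $C_{\deg}(\mu)$ is large enough as a function of $\mu$, so at most $\mu n$ rounds; (3) $v$ uncorrupted with $\deg_{G[S^*]}(v)<\tfrac12 C_{\deg}(\mu)d$ but $\deg_{G'}(v)>C_{\deg}(\mu)d$ --- only here is the ``majority of neighbours are corrupted'' claim valid, and only here does the drift/martingale argument run. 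This three-way split is the missing idea.

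A secondary issue: you compare $G'$ to $G^0[S'']$ rather than to $G^0$. Since $S''$ depends on both the adversary and the algorithm's randomness (hence on $G^0$), the induced subgraph $G^0[S'']$ is not a fresh SBM sample, so \cref{theorem:edge_corrupt_algo} does not directly apply to it. The paper instead views $G'$ as a graph on all $n$ vertices (removed vertices isolated) and bounds its edge distance to $G^0$; this forces it to also account for uncorrupted edges incident to removed vertices, which again requires the tail bound on the number of high-degree vertices in $G^0$ (the same Case-2 reasoning, reused in the edge-count lemma).
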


\subsection{Proof of correctness}

To prove \cref{theorem:edge_algo_guarantees}, we will use the following two lemmas: \cref{lem:algo_termination} and \cref{lem:algo_corruption_edges}. First, we prove \cref{lem:algo_termination} which says that, with probability 0.99, the pruning step of \cref{algo:node_to_edge} terminates in $O(\mu n)$ rounds.
Then, in \cref{lem:algo_corruption_edges}, we prove that, with probability 0.99, \cref{algo:node_to_edge} produces a graph $G'$ that differs from $G^0$ by at most $O(\rho n)$ edges, such that we can apply \cref{theorem:edge_corrupt_algo} on $G'$ to get an estimator $\hat{x}$ that achieves weak recovery.

\begin{lemma}
    \label{lem:algo_termination}
    With probability at least $0.99$, for some $C_{deg}(\mu)$ that only depends on $\mu$, step 2 of \cref{algo:node_to_edge} terminates in $O(\mu n)$ rounds.
\end{lemma}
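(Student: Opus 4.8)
The plan is to analyze the pruning loop as a random process and show that the number of corrupted vertices remaining in $G'$ decreases by a constant in expectation each round, so that after $O(\mu n)$ rounds no corrupted vertex — and hence no vertex of degree exceeding $C_{\deg}(\mu)d$ — survives. First I would fix notation: let $B \subseteq [n]$ with $|B| \le \mu n$ be the set of corrupted vertices and let $B_t$ be the set of corrupted vertices still present after $t$ rounds, so $|B_0| \le \mu n$. The key structural claim is a \emph{degree dichotomy}: in any round, consider the current highest-degree vertex $v$ (which has degree $> C_{\deg}(\mu)d$ by the loop condition). Either $v \in B$, in which case removing $v$ decreases the corrupted count by one for free and the random neighbor removal can only help; or $v \notin B$, in which case I claim that with high probability a large constant fraction of $v$'s current neighbors lie in $B$. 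The reason is that an uncorrupted vertex $v$ has, in the original graph $G^0 \sim \SBM_{d,\epsilon}$, degree at most $C_0 d$ for all uncorrupted $v$ simultaneously with probability $1-o(1)$ (a standard Chernoff-plus-union-bound fact for the Erd\H{o}s--R\'enyi-like degree distribution; one can take $C_0$ an absolute constant, say $C_0 = 10$). Since the adversary only touches edges incident to $B$, every edge of $v$ in $G'$ that is \emph{not} present in $G^0$ must have its other endpoint in $B$. Hence among $v$'s $\ge C_{\deg}(\mu) d$ current neighbors, at most $C_0 d$ can be uncorrupted, so the fraction of corrupted neighbors is at least $1 - C_0/C_{\deg}(\mu)$, which we make $\ge 1/2$ by choosing $C_{\deg}(\mu) \ge 2 C_0$ (we will in fact want it larger, see below).

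Next I would set up the drift argument. Condition on the graph $G$ (equivalently on $G^0$ and the adversary), on the event that every uncorrupted vertex has $G^0$-degree $\le C_0 d$; this event has probability $1-o(1)$. Run the loop. In a round where $v \in B$: $|B_{t+1}| \le |B_t| - 1$ deterministically. In a round where $v \notin B$: the uniformly random neighbor $u$ of $v$ lies in $B$ with probability $\ge 1/2$, so $\E[|B_t| - |B_{t+1}| \mid \mathcal{F}_t] \ge 1/2$, where $\mathcal{F}_t$ is the history up to round $t$. In all cases $|B_t|$ is nonincreasing and the expected decrease is at least $1/2$. Let $\tau$ be the first round at which there is no vertex of degree $> C_{\deg}(\mu)d$; note the loop has terminated by the first time $B_t = \emptyset$, because once all corrupted vertices are gone the remaining graph is an induced subgraph of $G^0$ on uncorrupted vertices, all of whose degrees are $\le C_0 d \le C_{\deg}(\mu) d$ — so $\tau \le \inf\{t : B_t = \emptyset\}$. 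Now define the stopped supermartingale $Z_t = |B_{t \wedge \tau}| + \tfrac12 (t \wedge \tau)$: by the drift bound, $\E[Z_{t+1} \mid \mathcal{F}_t] \le Z_t$. Since $Z_t \ge \tfrac12 (t\wedge\tau) \ge 0$ and $Z_0 = |B_0| \le \mu n$, optional stopping (the increments are bounded, $|Z_{t+1}-Z_t| \le 3/2$) gives $\tfrac12 \E[\tau] \le \E[Z_\tau] \le Z_0 \le \mu n$, i.e. $\E[\tau] \le 2\mu n$. Markov's inequality then yields $\Pr[\tau \le 200 \mu n] \ge 0.995$, and combined with the $1-o(1)$ probability of the degree-regularity event we get termination in $O(\mu n)$ rounds with probability at least $0.99$.

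The step I expect to need the most care is the degree dichotomy together with a subtlety in the random-process analysis: the $1/2$ lower bound on $\Pr[u \in B \mid \mathcal{F}_t]$ uses that, \emph{at the moment} $v$ is picked, at least half of $v$'s surviving neighbors are corrupted. This needs the bound ``$v$ has $> C_{\deg}(\mu)d$ neighbors now but $\le C_0 d$ of them were neighbors in $G^0$'' to hold \emph{throughout} the run, not just initially — but that is automatic, since removing vertices only destroys neighbors, so the count of $G^0$-neighbors of $v$ among survivors never exceeds $C_0 d$, while the loop condition guarantees $> C_{\deg}(\mu)d$ survivors total; choosing $C_{\deg}(\mu) \ge 2C_0$ suffices and this is indeed a constant depending only on $\mu$ (in fact it can be taken independent of $\mu$, but stating it as $C_{\deg}(\mu)$ is harmless and matches the algorithm). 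A second minor point is ensuring $v$ actually has a neighbor to remove: since $\deg(v) > C_{\deg}(\mu)d \ge 1$, the uniform choice of $u$ is well-defined. With these in hand the supermartingale computation is routine and gives the claimed $O(\mu n)$ bound with probability $0.99$.
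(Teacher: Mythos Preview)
Your drift/supermartingale analysis is clean and essentially matches the paper's treatment of its Case~3, but the argument has a genuine gap at the ``degree dichotomy'' step. You assert that with probability $1-o(1)$, \emph{every} uncorrupted vertex has $G^0$-degree at most $C_0 d$ for an absolute constant $C_0$ (``a standard Chernoff-plus-union-bound fact''). This is false in the regime where this lemma is invoked: Section~\ref{sec:node_to_edge} handles the constant-degree case $d \le d_\delta$. When $d = O(1)$, each vertex has degree exceeding $C_0 d$ with probability $\exp(-\Theta(C_0 d)) = \Theta(1)$, so the union bound over $n$ vertices gives a useless $\Theta(n)$ upper bound, and indeed the maximum degree in $G^0$ is $\Theta(\log n/\log\log n)$ with high probability. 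Consequently, there \emph{will} be uncorrupted vertices $v$ whose $G^0$-degree already exceeds $C_{\deg}(\mu)d$, and for such a $v$ the highest-degree vertex can have most or all of its neighbors uncorrupted --- your lower bound $\Pr[u\in B\mid \mathcal{F}_t]\ge 1/2$ fails, and the claim that $\tau \le \inf\{t : B_t=\emptyset\}$ is also false (removing all corrupted vertices does not by itself force all remaining degrees below $C_{\deg}(\mu)d$).

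The paper repairs exactly this by weakening the degree-regularity event: instead of asking that \emph{no} uncorrupted vertex has large $G^0$-degree, it uses Chernoff followed by Markov to show that at most $\mu n$ uncorrupted vertices have $G^0$-degree $\ge \tfrac12 C_{\deg}(\mu)d$, provided $C_{\deg}(\mu)$ is chosen large enough \emph{as a function of~$\mu$}. This is the actual reason $C_{\deg}$ must depend on $\mu$ --- contrary to your parenthetical remark that it ``can be taken independent of $\mu$''. Those at-most-$\mu n$ high-degree uncorrupted vertices are then handled as a separate case costing at most $\mu n$ rounds, and your drift argument (the paper's Case~3) applies cleanly to the remaining uncorrupted high-degree vertices, which by construction have more than half their neighbors corrupted. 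So your supermartingale computation is fine; what is missing is the case split that accounts for the $O(\mu n)$ genuinely high-degree uncorrupted vertices.
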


\begin{proof}
Let $S$ denote the set of uncorrupted vertices and let $G[S]$ denote the induced subgraph of the uncorrupted vertices. For vertices with degree more than $C_{deg}(\mu) d$ in $G$, we separate them into three cases:
\begin{enumerate}
    \item corrupted vertices,
    \item uncorrupted vertices with degree larger than or equal to $\frac{1}{2} C_{deg}(\mu) d$ in $G[S]$,
    \item uncorrupted vertices with degree smaller than $\frac{1}{2} C_{deg}(\mu) d$ in $G[S]$.
\end{enumerate}
We will prove that, with probability at least $0.99$, all three cases can be eliminated in $O(\mu n)$ rounds. Therefore, with probability 0.99, step 2 of \cref{algo:node_to_edge} terminates in $O(\mu n)$ rounds.

\paragraph{Case 1:} Since there are at most $\mu n$ corrupted vertices, it takes at most $\mu n$ rounds to deal with corrupted vertices with degree more than $C_{deg}(\mu) d$ in $G$.

\paragraph{Case 2:} For uncorrupted vertices with degree larger than or equal to $\frac{1}{2} C_{deg}(\mu) d$ in $G[S]$ and degree more than $C_{deg}(\mu) d$ in $G$, we bound it by the total number of vertices with degree larger than or equal to $\frac{1}{2} C_{deg}(\mu) d$ in $G^0$. By Chernoff Bound, we have that, for each vertex $v$, the probability that $v$ has degree more than $\frac{1}{2} C_{deg}(\mu) d$ in $G^0$ is roughly bounded by
\begin{equation*}
    \Pr[\deg_{G^0}(v) \geq \frac{1}{2} C_{deg}(\mu) d] \leq O(\exp (- \frac{1}{2} C_{deg}(\mu) d))
\end{equation*}
Let $T$ be the set of vertices with degree larger than or equal to $\frac{1}{2} C_{deg}(\mu) d$ in $G^0$. By Markov's inequality, we get that the probability that $T$ contains more than $\mu n$ vertices is roughly bounded by
\begin{equation*}
    \Pr[|T| \geq \mu n] \leq O(\frac{\exp (- \frac{1}{2} C_{deg}(\mu) d)}{\mu})
\end{equation*}
By setting $C_{deg}(\mu)$ to be large enough with respect to $\mu$, we get that, with probability 0.999, there are at most $\mu n$ vertices with degree larger than or equal to $\frac{1}{2} C_{deg}(\mu) d$ in $G^0$. 
Therefore, it takes at most $\mu n$ rounds to remove the vertices that are uncorrupted and has degree more than $\frac{1}{2} C_{deg}(\mu) d$ in $G[S]$.

\paragraph{Case 3:} For uncorrupted vertices that have degree smaller than $\frac{1}{2} C_{deg}(\mu) d$ in $G[S]$ but have degree more than $C_{deg}(\mu) d$ in $G$, the key observation is that more than half of their neighbours are corrupted vertices. Therefore, each time such a node is removed as the highest degree node, with probability more than $1/2$, the algorithm will remove a corrupted node as its random neighbour.

Now, let us only consider the rounds where the highest degree node is in case 3 and let $t$ denote the total number of such rounds.
Let $X_i$ denote the number of corrupted vertices removed after round $i$ and $X_0 = 0$. 
We know that $X_{i+1} = X_i + 1$ with probability more than $1/2$ and $X_{i+1} = X_i$ otherwise. We also know that the process has to terminate when $X_t = \mu n$. Therefore, by the standard Markov Chain drift analysis (see Lemma 1 in \cite{he2004study}), we have:
\begin{equation*}
    \E[t] \leq 2 \mu n
\end{equation*}
and, by Markov inequality, the probability that vertices in case 3 are not eliminated after $1000 \mu n$ rounds where the highest degree node is in case 3 is bounded by
\begin{equation*}
    \Pr[t \geq 1000 \mu n] \leq \frac{2 \mu n}{1000 \mu n} = 0.002
\end{equation*}
Therefore, with probability at least 0.998, vertices in case 3 are eliminated in $1000 n$ rounds.

\paragraph{Conclusion} Taking union bound over the failure probabilities, we get that, with probability at least 0.99, step 2 \cref{algo:node_to_edge} terminates in $1002 \mu n = O(\mu n)$ rounds.
\end{proof}

Notice that \cref{lem:algo_termination} gives us an upper bound on the total number of removed vertices during the pruning step and \cref{algo:node_to_edge} guarantees that $G'$ will have bounded degree after pruning. These two observations allow us to have the following lemma, which says that the difference between $G'$ and $G^0$ is at most $O(\rho n)$ edges, where $O(\rho n)$ is the number of edges that can be tolerated by the edge-robust algorithm in \cref{theorem:edge_corrupt_algo}.

\begin{lemma}
    \label{lem:algo_corruption_edges}
    Let $G^0$ be the uncorrupted graph, with probability 0.99, the remaining graph $G'$ in step 3 of \cref{algo:node_to_edge} differs from $G^0$ by $O(\rho n)$ edges, where $\rho \leq \bigparen{\frac{1}{\delta} \log \frac{1}{\epsilon}}^{- O(1 / \delta)}$ as defined in \cref{theorem:edge_corrupt_algo}.
\end{lemma}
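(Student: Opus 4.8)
The plan is to view $G'$ as a graph on the full vertex set $[n]$ in which the vertices removed in step~2, call them $R := [n]\setminus V'$ (so $V'$ is the surviving set), are isolated, and then to bound $\bigabs{E(G^0)\,\varsymdiff\, E(G')}$. Since $G$ and $G^0$ agree on every edge having both endpoints in the uncorrupted set $S^*$, an edge $\{i,j\}$ can lie in $E(G^0)\,\varsymdiff\, E(G')$ only if it is incident to $R$, or both endpoints lie in $V'$ and at least one lies in the set $C' := V'\setminus S^*$ of corrupted vertices that survived the pruning. Hence
\[
  \bigabs{E(G^0)\,\varsymdiff\, E(G')}
  \;\le\; \underbrace{\sum_{i\in R\cup C'}\deg_{G^0}(i)}_{(\mathrm I)}
  \;+\; \underbrace{\sum_{i\in C'}\deg_{G'}(i)}_{(\mathrm{II})}\,,
\]
where $(\mathrm I)$ accounts for edges of $G^0$ destroyed by deleting $R$ or by the adversary's tampering at $C'$, and $(\mathrm{II})$ for the spurious edges the adversary inserted at surviving corrupted vertices.

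For $(\mathrm{II})$ I would use that step~2 has terminated, so every vertex of $G'$ has degree at most $C_{\deg}(\mu)\,d$, together with $\abs{C'}\le\mu n$; thus $(\mathrm{II})\le \mu n\cdot C_{\deg}(\mu)\, d$. Recalling from the proof of \cref{lem:algo_termination} that it already suffices to take $C_{\deg}(\mu)$ with $C_{\deg}(\mu)\,d = O\bigparen{\log(1/\mu)}$, we get $(\mathrm{II}) = O\bigparen{\mu\log(1/\mu)}\cdot n$, which is at most $\rho n$ once $\mu\le\mudelta$ for a small enough $\mudelta$ depending only on $\delta$ (here we use $d\le\ddelta = O_\delta(1)$ and $\rho=\rho(\delta)$).

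For $(\mathrm I)$ I would first invoke \cref{lem:algo_termination}: with probability $0.99$, step~2 removes $O(\mu n)$ vertices, so $\abs{R\cup C'} = O(\mu n)$. Then I would appeal to the standard fact (in the spirit of \cite{feige2005spectral, chin2015stochastic}) that, with probability $1-o(1)$, the sparse random graph $G^0$ has no atypically dense small set: by a Chernoff bound on $\sum_{i\in B}\deg_{G^0}(i)$ for a fixed $B$ followed by a union bound over all $\binom{n}{k}$ subsets $B$ of each size $k\le\mu n$, one obtains that \emph{every} $B$ with $\abs B\le\mu n$ satisfies $\sum_{i\in B}\deg_{G^0}(i) \le O\bigparen{\abs B\,(d + \log(n/\abs B))} = O\bigparen{\mu n\,(d+\log(1/\mu))}$. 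Applying this to $B = R\cup C'$ gives $(\mathrm I) = O\bigparen{\mu n\,(d+\log(1/\mu))} \le \rho n$ for $\mu\le\mudelta$. Combining the two bounds and taking a union bound over the $0.99$ event of \cref{lem:algo_termination} and the $1-o(1)$ small-set event yields $\bigabs{E(G^0)\,\varsymdiff\, E(G')} = O(\rho n)$ with probability at least $0.99$ (absorbing the $o(1)$ loss into the slack in the constants of \cref{lem:algo_termination}), which is precisely the regime tolerated by \cref{theorem:edge_corrupt_algo}.

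The main obstacle is controlling $(\mathrm I)$: the adversary picks which vertices to corrupt and could deliberately target the highest-$G^0$-degree vertices, so we cannot assume the corrupted vertices have typical degree. The point is that there are only $O(\mu n)$ of them and a random graph of constant average degree has a sub-exponential degree tail, so the sum of its $O(\mu n)$ largest degrees is still only $O(\mu n\log(1/\mu))$ — a quantity we can push below $\rho n$ by choosing $\mudelta$ small in terms of $\delta$. The remaining ingredients — the pruning degree cap for $(\mathrm{II})$ and the uniform small-set edge count for $(\mathrm I)$ — are routine; I would also record, for later use in \cref{theorem:edge_algo_guarantees}, that $G^0$ with the vertices of $R$ isolated remains a genuine stochastic block model instance above the Kesten--Stigum threshold with a constant gap, so that \cref{theorem:edge_corrupt_algo} applies with a $\rho$ depending only on $\delta$, but this is not needed for the present edge-count statement.
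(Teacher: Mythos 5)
Your argument is correct, and it takes a genuinely different route from the paper's. The paper splits $\bigabs{E(G^0)\varsymdiff E(G')}$ into (a) spurious edges present in $G'$ at corrupted vertices, bounded directly by the post-pruning degree cap, and (b) real $G^0$-edges lost to pruning, which it further splits by $G^0$-degree and controls the high-degree part via an \emph{expected}-tail-count plus Markov argument. You instead bound $(\mathrm I)=\sum_{i\in R\cup C'}\deg_{G^0}(i)$ by a \emph{uniform} small-set edge count: a Chernoff bound for a fixed $B$ together with a union bound over all $\binom{n}{k}$ sets of size $k\le\mu n$ gives $\sum_{i\in B}\deg_{G^0}(i)=O\bigparen{|B|(d+\log(n/|B|))}$ for \emph{every} such $B$ simultaneously. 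This is a mild strengthening — it is adversary-agnostic, so you do not need to argue separately about which high-$G^0$-degree vertices happen to get pruned — at the cost of a slightly heavier concentration computation. Your decomposition is also a bit more careful than the paper's: by explicitly including $\sum_{i\in C'}\deg_{G^0}(i)$ in term $(\mathrm I)$, you account for genuine $G^0$-edges that the adversary \emph{deletes} at a corrupted vertex that survives pruning; these live in $E(G^0)\setminus E(G')$ but fit cleanly into neither of the paper's two stated cases (they are not ``corrupted edges in $G'$'' since they are absent from $G'$, and they are not ``removed from pruning''). Both proofs deliver the same final bound $O\bigparen{\mu n(d+\log(1/\mu))}\le\rho n$ once $d\le\ddelta$ and $\mu\le\mudelta$, with the $0.99$ probability coming from \cref{lem:algo_termination} and a $1-o(1)$ good event. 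One small caveat on your closing remark: $G^0$ with $R$ isolated is \emph{not} literally another SBM sample (the removed vertices are data-dependent), but as you say this is irrelevant here — \cref{theorem:edge_corrupt_algo} is stated as an edge-robustness guarantee around a fixed $G^0\sim\sbm$, and the edge count is all that matters.
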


\begin{proof}
    Graph $G'$ differs from $G^0$ by two types of edges:
    \begin{enumerate}
        \item corrupted edges in $G'$,
        \item uncorrupted edges that are removed from pruning.
    \end{enumerate}
    We will bound the two cases separately.
    
    \paragraph{Case 1} \cref{algo:node_to_edge} guarantees that the degree of each vertex in $G'$ is bounded by $C_{\deg}(\mu) d$. Since there are at most $\mu n$ corrupted vertices in $G'$, the maximum number of corrupted edges in $G'$ is $C_{\deg}(\mu) \mu d n$. Since $d < \ddelta$ for some $\ddelta$, we can set $C_{\deg}(\mu)$ to be small enough such that $C_{\deg}(\mu) \mu d n \leq O(\rho n)$.

    \paragraph{Case 2} For case 2, we consider two types of vertices that are removed in \cref{algo:node_to_edge}:
    \begin{itemize}
        \item vertices with degree smaller than or equal to $C_{\deg}(\mu) d$ in $G^0$,
        \item vertices with degree larger than $C_{\deg}(\mu) d$ in $G^0$.
    \end{itemize}
    
    For the first type of vertices, we observe that, with probability $0.99$, \cref{algo:node_to_edge} terminates in $O(\mu n)$ rounds by \cref{lem:algo_termination}. Therefore, with probability $0.99$, there can be at most $O(\mu n)$ vertices in this case. Hence, the number of uncorrupted edges that are removed from pruning the first type of vertices can be bounded by $O(C_{\deg}(\mu) \mu d n)$. 
    Similar to case 1, we can set $C_{\deg}(\mu)$ properly such that $O(C_{\deg}(\mu) \mu d n) \leq O(\rho n)$.

    For the second type of vertices, we know that, for each vertex $v$, the probability that $v$ has degree larger than or equal to $t$ is bounded by
    \begin{equation*}
        \Pr[\deg_{G^0}(v) \geq t] \leq O(\exp(- t))
    \end{equation*}
    Let $X_t$ denote the number of vertices with degree $t$ in $G^0$. We have
    \begin{equation*}
        \E[X_t] \leq \Pr[\deg_{G^0}(v) \geq t] \cdot n \leq O(\exp(- t) n)
    \end{equation*}
    Therefore, for some properly selected $C_{\deg}(\mu)$, the expected total number of edges from vertices with degree larger than $C_{\deg}(\mu) d$ in $G^0$ can be bounded by
    \begin{equation*}
        \E\Brac{\sum_{t = C_{\deg}(\mu) d}^{\infty} X_t t}
        = \sum_{t = C_{\deg}(\mu) d}^{\infty} \E[X_t] t
        \leq O \Paren{\sum_{t = C_{\deg}(\mu) d}^{\infty} \exp(- t) t n}
    \end{equation*}
    By setting $C_{\deg}(\mu)$ to be a large enough value, we have 
    $$\E\Brac{\sum_{t = C_{\deg}(\mu) d}^{\infty} X_t t} \leq A \rho n$$ 
    for some universal constant $A$. By Markov inequality, with probability $0.99$, the number of uncorrupted edges that are removed from pruning second type of vertices can be bounded by $O(\rho n)$.
    
    \paragraph{Conclusion} Take union bound over failure probabilities, we get that, with probability $0.98$, $G'$ differs from $G^0$ by $O(\rho n)$ edges.
\end{proof}

Now, we prove \cref{theorem:edge_algo_guarantees} using  \cref{lem:algo_termination} and \cref{lem:algo_corruption_edges}.

\begin{proof}[Proof of \cref{theorem:edge_algo_guarantees}]
    First, we prove recovery guarantees of \cref{algo:node_to_edge}. From \cref{lem:algo_corruption_edges}, we know that, with probability 0.98, $G'$ differs from $G^0$ by $O(\rho n)$ edges. Combine the guarantees of \cref{theorem:edge_corrupt_algo} and the rounding procedure in \cref{lem:rounding}, step 3 will output an estimator $\hat{x} \in \{ \pm 1 \}^n$ such that
    \begin{equation*}
        \E[\iprod{\hat{x}, x^*}^2]
        \geq 0.98 \cdot \Omega(n^2)
        = \Omega(n^2)
    \end{equation*}

    Now, we prove the time complexity of \cref{algo:node_to_edge}. For step 2 of the algorithm, each round takes at most $O(n^2)$ time and there can be at most $n$ rounds. Therefore, step 2 takes at most $O(n^3)$ time. For step 3, the edge-robust algorithm from \cref{theorem:edge_corrupt_algo} takes polynomial time. For step 4, the rounding procedure from \cref{lem:rounding} takes polynomial time. Therefore, \cref{algo:node_to_edge} runs in polynomial time.
\end{proof}

\section{Lower bound on the corruption fraction}\label{sec:lower-bound}

As stated in \cref{thm:main}, our algorithm is robust against $\Omega_{\delta}(1)$ fraction of corrupted nodes. 
One might wonder whether we can remove the dependency on $\delta=\epsilon^2 d-1$, and find an algorithm robust against $\Omega(1)$ fraction of corrupted nodes (e.g $0.001$ fraction of corrupted nodes).
The following claim shows that this is impossible.

\begin{claim}
Let $n>1,d > 1, \eps \in (0,1)$ and label vector $x^*\in \Set{\pm 1}^n$.
Let $\delta\coloneqq \eps^2 d-1$ and suppose $\delta \geq \Omega(1)$. 
For $G^0\sim \SBM_{d,\epsilon}(x^*)$ and $\mu\geq \delta$, if an adversary removes $\mu n$ vertices uniformly at random from $G^0$ to obtain the graph $G$ that we observe, then it is information theoretically impossible to achieve weak recovery given $G$, i.e. for any estimator $\hat{x}(G)\in \{\pm 1\}^n$, we have
\begin{equation*}
    \E\iprod{\hat{x}(G),x^*}^2\leq o(n^2)\,.
\end{equation*}
\end{claim}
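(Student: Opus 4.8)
The plan is to reduce to the known information-theoretic impossibility of weak recovery for the uncorrupted two-community stochastic block model \emph{below} the Kesten--Stigum threshold. First I fix the adversary's strategy concretely: it draws a uniformly random $T\subseteq[n]$ with $|T|=\mu n$ and deletes every edge of $G^0$ incident to $T$ (an admissible node corruption). Writing $S=[n]\setminus T$ and $m=|S|=(1-\mu)n$, the observed graph $G$ is then exactly the induced subgraph $G^0[S]$ together with the $\mu n$ vertices of $T$ made isolated. The two facts I will use are: (i) once $T$ is fixed, $G$ is a deterministic function of $G^0[S]$, so $G$ reveals nothing about $x^*_T$ beyond $T$ itself; and (ii) conditioned on $T$, the subgraph $G^0[S]$ is distributed as a stochastic block model on vertex set $S$ with labels $x^*_S$ and edge probabilities $(1+\epsilon x_i^*x_j^*)\tfrac dn=(1+\epsilon x_i^*x_j^*)\tfrac{d'}{m}$ for $d'\coloneqq(1-\mu)d$. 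Since $\epsilon^2 d=1+\delta$ and $\mu\ge\delta$, we get $\epsilon^2 d'=(1+\delta)(1-\mu)\le(1+\delta)(1-\delta)=1-\delta^2$, which is bounded away from $1$ since $\delta=\Omega(1)$; hence this effective block model lies strictly below its Kesten--Stigum threshold.

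Next I decompose $\iprod{\hat x(G),x^*}=\iprod{\hat x(G)_S,x^*_S}+\iprod{\hat x(G)_T,x^*_T}$ for an arbitrary estimator $\hat x$, so that $\iprod{\hat x(G),x^*}^2\le 2\iprod{\hat x(G)_S,x^*_S}^2+2\iprod{\hat x(G)_T,x^*_T}^2$, and bound the two terms. The $T$-term is the easy one: conditioning on $(T,G^0[S])$, the vector $\hat x(G)_T$ is fixed while $x^*_T$ is a fresh uniform vector on the slice $\{y\in\{\pm1\}^T:\sum_i y_i=-\sum_{i\in S}x^*_i\}$, so a short computation gives $\E[\iprod{\hat x(G)_T,x^*_T}^2\mid T,G^0[S]]=O\bigl(|T|+(\sum_{i\in S}x^*_i)^2\bigr)$; since $\E[(\sum_{i\in S}x^*_i)^2]=\E[(\sum_{i\in T}x^*_i)^2]=O(|T|)=O(n)$ for random balanced $x^*$, this yields $\E\iprod{\hat x(G)_T,x^*_T}^2=O(n)=o(n^2)$.

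The $S$-term is where the main input enters. Conditioned on $T$, the sub-vector $\hat x(G)_S$ is a function of $G^0[S]$ alone, hence a legitimate estimator for the block model on $S$ identified above. As that model sits below its Kesten--Stigum threshold with margin $\Omega(1)$, weak recovery of $x^*_S$ from $G^0[S]$ is information-theoretically impossible: $\E[\iprod{\hat x(G)_S,x^*_S}^2\mid T]=o(m^2)=o(n^2)$, by the two-community below-threshold lower bound (\cite{mossel2015reconstruction}; see \cite{DenseSBMThreshold} for the non-sparse range of $d$). Averaging over $T$ and combining with the $T$-term bound gives $\E\iprod{\hat x(G),x^*}^2=o(n^2)$.

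The step I expect to be the real obstacle is matching conventions: the cited impossibility results are usually phrased for an exactly balanced planted bisection, whereas $x^*_S$ is balanced only up to a fluctuation of order $\sqrt m$ (its imbalance is exactly $-\sum_{i\in T}x^*_i$). I would resolve this by conditioning additionally on the two community sizes inside $S$ and invoking the below-threshold lower bound for a planted bisection with both sides of size $(\tfrac12\pm o(1))m$ --- the threshold depending only on $\epsilon,d'$ and not on the exact split --- or, equivalently, by coupling $x^*_S$ to an exactly balanced vector on $S$ differing in $O(\sqrt m)$ coordinates, which changes every quantity above by $o(n^2)$. One should of course read the statement with $x^*$ uniformly random among balanced vectors, since a fixed known $x^*$ is trivially ``recovered'' by the constant estimator $\hat x\equiv x^*$; all of the above is written for that reading.
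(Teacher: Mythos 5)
Your proposal takes essentially the same route as the paper: you observe that the surviving $(1-\mu)n$ vertices form an SBM with effective average degree $d'=(1-\mu)d$, note that $\mu\ge\delta$ forces $\epsilon^2 d'\le(1-\delta)(1+\delta)<1$, and invoke the below-KS impossibility of \cite{mossel2015reconstruction}. Your write-up is in fact slightly more careful than the paper's (which silently drops the contribution of the removed vertices and the $O(\sqrt{m})$ imbalance of $x^*_S$), but the underlying reduction and the key input are identical.
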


\begin{proof}
Let us denote the set of remaining vertices as $R$.
Note that the remaining graph follows distribution $\SBM_{d',\epsilon'}(x_R^*)$, where $d'=(1-\mu)\cdot d$ and $\epsilon'=\epsilon$. When $\mu\geq \delta$, we have $\epsilon^{\prime 2} d'\leq (1-\delta)^2\cdot (1+\delta)\leq 1$.
According to \cite{mossel2015reconstruction}, it is information theoretically impossible to achieve weak recovery when $\epsilon^{\prime 2} d'\leq 1$.
Thus, it is information theoretically impossible to recover $x^*$.
\end{proof}

\section{Push-out effect of basic SDP}
\label{sec:pushout_sdp}

In this section, we present \cref{theorem:pushout_effect_grothendieck} that captures the push-out effect of the basic SDP value of uncorrupted stochastic block model. This theorem is based on Theorem 5 and Theorem 8 of \cite{montanari2016semidefinite} and is stated in a way that is easier for us to use in our analysis. It is intensively used in \cref{sec:SOS_reaching_KS}, where we prove weak recovery guarantees of our SOS algorithm.

\begin{theorem}[Restatement of Theorem 5 and Theorem 8 of \cite{montanari2016semidefinite}]\label{theorem:pushout_effect_grothendieck}
    Given a graph $G \sim \sbm$, there exists $C = C(\delta)$ and $\ddelta$ that only depend on $\delta = \epsilon^2 d - 1$ such that when $d \geq d_{\delta}$, with probability at least $1 - C e^{-n / C}$, we have
    \begin{equation*}
        \SDP(\tilde{A})
        \geq (2 + \Delta) n \sqrt{d}
    \end{equation*}
    and,
    \begin{equation*}
        \SDP(\tilde{A} - \frac{\epsilon d}{n} X^*)
        \leq (2 + \rho) n \sqrt{d}
    \end{equation*}
    where $\rho = \frac{C \log d}{d^{1/10}}$ and $\Delta=\Delta(\delta)$ for some value $\Delta(\delta)$ that only depends on $\delta$.
\end{theorem}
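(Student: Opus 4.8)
The statement decomposes into the two halves of the push-out picture, and each half is one of the quoted results of \cite{montanari2016semidefinite}. The lower bound $\SDP(\tilde{A^0}) \ge (2+\Delta)n\sqrt d$ is the ``push-out'' estimate for the centered adjacency matrix of the planted model (their Theorem~5), and the upper bound $\SDP(\tilde{A^0} - \tfrac{\epsilon d}{n}X^*) \le (2+\rho)n\sqrt d$ is the ``push-in'' estimate: once the rank-one mean $\tfrac{\epsilon d}{n}X^* = \tfrac{\epsilon d}{n}\dyad{x^*}$ is removed, the residual behaves, as far as the basic SDP is concerned, like the centered adjacency matrix of an \Erdos--\Renyi graph of average degree $d$, whose SDP value is $2n\sqrt d\,(1+o_d(1))$, with the explicit slack $\rho = C\log d/d^{1/10}$ (their Theorem~8). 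Note that the cruder route $\SDP(M) \le \Norm{M}_{Gr} \le K_G \Norm{M}_{\infty\rightarrow 1}$ available from \cref{claim-sdp-gr} and \cref{theorem:grothendieck_inequality} loses a Grothendieck-constant factor and so could never produce a constant below the push-out value $2+\Delta$; getting the sharp constant $2+\rho$ is precisely the content of their Theorem~8. So the plan is to quote these two theorems and carry out the bookkeeping needed to put them in the stated form.

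The first bookkeeping item is a change of parametrization: \cite{montanari2016semidefinite} parametrize the two-block model by the within/across degrees, equivalently by $d$ and the signal-to-noise ratio $\lambda \coloneqq \epsilon\sqrt d = \sqrt{1+\delta}$, so matching \cref{def:SBM} is immediate, and any discrepancy in centering conventions (whether the diagonal of $\tilde{A^0}$ is zeroed out) changes $\iprod{\tilde{A^0},X}$ by at most $\Tr(X)=n$, negligible against the $n\sqrt d$ scale once $d \ge \ddelta$. The second item is uniformity in $(\epsilon,d)$ for fixed $\delta$: the separation $\Delta = \Delta(\lambda)$ produced by the push-out depends on $\lambda$, hence on $\delta$; the slack $\rho = C\log d/d^{1/10}$ is an explicit function of $d$ that is forced below any prescribed function of $\delta$ by taking $\ddelta$ large; and $\ddelta$ together with the constant $C$ in the failure probability are chosen as functions of $\delta$ alone. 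The third item is recording the failure probability in the exponential-in-$n$ form $Ce^{-n/C}$ --- this is what later allows the union bound over the $\le 2^n$ subsets $S'$ in \cref{lem:SOS_uncorrupted_subset_correlation}: since $\SDP(\cdot)$ is the maximum of a linear functional over the fixed convex body $\{X \succeq 0,\ \diag X = \Ind\}$, it is a convex, small-influence function of the independent edge indicators, and \cite{montanari2016semidefinite} establish the relevant estimates with exponential concentration about the mean; combining this with the mean bounds and a union bound over the two events (absorbing the factor $2$ into $C$) gives the claim.

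The genuinely hard content lies entirely in \cite{montanari2016semidefinite} and is not reproved here: near the Kesten--Stigum threshold $\lambda = 1+o(1)$ one must exhibit a feasible positive semidefinite matrix with unit diagonal whose objective against $\tilde{A^0}$ exceeds $2n\sqrt d$ by a constant multiple of $n\sqrt d$, and the naive candidates fail --- $X = \Id$ gives $\iprod{\tilde{A^0},\Id} = \Tr(\tilde{A^0})$, which is of order $d$, while $X = \dyad{x^*}$ gives only $\approx \lambda\sqrt d\, n$, beating $2n\sqrt d$ solely in the spectral regime $\lambda > 2$ --- so the construction has to go through a non-backtracking / local-algorithm rounding. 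On our side the only steps needing care are the uniformity in $(\epsilon,d)$ for fixed $\delta$ and the transcription of the probability bound into exponential-in-$n$ form, both of which are already available from the cited proofs, so no new argument is needed for \cref{theorem:pushout_effect_grothendieck} itself.
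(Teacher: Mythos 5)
Your proposal matches the paper's approach: the paper gives no proof of \cref{theorem:pushout_effect_grothendieck} beyond the statement that it restates Theorems 5 and 8 of \cite{montanari2016semidefinite} ``in a way that is easier to use,'' which is exactly the citation-plus-bookkeeping plan you lay out. One small caution for the transcription step: a failure probability of the form $Ce^{-n/C}$ with a $\delta$-dependent $C$ does not on its own survive a union bound over all $2^n$ subsets, and the paper's later use of the theorem relies on the subsets $S'$ having cardinality $\geq(1-\mu-\beta)n$ so that the union is only over $\binom{n}{O((\mu+\beta)n)}$ sets, which is beaten by $e^{-n/C}$ once $\mu$ and $\beta$ are small enough relative to $1/C$; keeping this quantifier order explicit is what actually makes the exponential form useful.
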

\section{Spectral bound of degree-pruned submatrix}
\label{appendix:spectral}

In this section, we use the following result to show that we can prune out a small fraction of the high degree vertices to get a spectrally bounded submatrix of the centered adjancency matrix.

\begin{theorem}[Restatement of 
\cite{feige2005spectral, chin2015stochastic, liu2022minimax}]
\label{theorem:CRV15}
Suppose $M$ is a random symmetric matrix with zero on the diagonal whose entries above the diagonal are independent with the following distribution
\begin{equation*}
  M_{ij} =
    \begin{cases}
      1 - p_{ij} & \text{w.p. } p_{ij}\\
      p_{ij} & \text{w.p. } 1- p_{ij}
    \end{cases}       
\end{equation*}
Let $\alpha$ be a quantity such that $p_{ij} \leq \frac{\alpha}{n}$ and $M_1$ be the matrix obtained from $M$ by zeroing out all the rows and columns having more than $20 \alpha$ positive entries. Then with probability $1-\frac{1}{n^2}$, we have
\begin{equation*}
    \Normop{M_1} \leq \chi \sqrt{\alpha}
\end{equation*}
for some constant $\chi$.
\end{theorem}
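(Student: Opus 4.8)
This statement is the Feige--Ofek spectral bound for sparse (inhomogeneous) random graphs, so the plan is to follow their argument. First I would note that zeroing out a row/column of $M$ only deletes coordinates, so $\Normop{M_1}=\max\{x^\top M y : x,y\in S^{n-1},\ \supp(x)\cup\supp(y)\subseteq V_{\mathrm{good}}\}$, where $V_{\mathrm{good}}$ is the set of surviving vertices; this lets me work with $M$ itself (mean $0$, $\abs{M_{ij}}\le 1$, $\mathrm{Var}(M_{ij})\le p_{ij}\le\alpha/n$) while remembering that every coordinate in play lies in a row/column with at most $20\alpha$ nonzero off-diagonal entries. A standard $\epsilon$-net step then reduces the bilinear optimization to $x,y$ ranging over a fixed grid $T\subseteq S^{n-1}$ of vectors with coordinates in $\tfrac{1}{c\sqrt n}\Z$, with $\abs{T}\le e^{O(n)}$, at the cost of a constant factor and a lower-order additive term.

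Next I would fix $x,y\in T$ and split the index set into \emph{light} pairs $L=\{(i,j):\abs{x_iy_j}\le\sqrt\alpha/n\}$ and \emph{heavy} pairs $H$, writing $x^\top My=\sum_L M_{ij}x_iy_j+\sum_H M_{ij}x_iy_j$. The light part is routine: the summands are independent, each at most $\sqrt\alpha/n$ in magnitude, with total variance $\sum_{i,j}x_i^2y_j^2\,\mathrm{Var}(M_{ij})\le\tfrac{\alpha}{n}\sum_{i,j}x_i^2y_j^2=\tfrac{\alpha}{n}$, so Bernstein's inequality gives $\Pr[\,\abs{\sum_L M_{ij}x_iy_j}\ge c_1\sqrt\alpha\,]\le e^{-c_2 n}$, and a union bound over $T\times T$ (of size $e^{O(n)}$) controls the light contribution of every $x,y$ at once, with room against the $n^{-2}$ failure budget.

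The heavy part is the crux and where I expect the real difficulty, since per-pair concentration is useless there. The plan is to first establish, with probability $1-n^{-3}$, a \emph{bounded-discrepancy} event: for all $S,B\subseteq[n]$ the number $e(S,B)$ of present edge-slots between $S$ and $B$ is either at most a constant times its expectation $\tfrac{\alpha}{n}\abs{S}\abs{B}$, or satisfies $e(S,B)\log\!\bigl(\tfrac{e(S,B)n}{\alpha\abs{S}\abs{B}}\bigr)\le C\max(\abs{S},\abs{B})\log\!\bigl(\tfrac{n}{\max(\abs{S},\abs{B})}\bigr)$ --- this follows from a Chernoff bound for each fixed pair $(S,B)$, using $p_{ij}\le\alpha/n$, and a union bound over the $\le 4^n$ pairs. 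Conditioned on this event and on the degree cap, I would decompose $H$ dyadically: group coordinates into $O(\log n)$ buckets $X_\sigma=\{i:\abs{x_i}\in(2^{-\sigma-1},2^{-\sigma}]\}$ and $Y_\tau$ likewise, use $\norm{x}=\norm{y}=1$ to get $\abs{X_\sigma}\le 2^{2\sigma+2}$ and $\abs{Y_\tau}\le 2^{2\tau+2}$, and note that heavy pairs force $2^{\sigma+\tau}< n/\sqrt\alpha$, so only $O(\log^2 n)$ bucket pairs matter. For each I would use $\abs{M_{ij}}\le 1$ on edges and $\abs{M_{ij}}\le\alpha/n$ on non-edges to bound $\bigl|\sum_{(i,j)\in X_\sigma\times Y_\tau}M_{ij}x_iy_j\bigr|\le 2^{-\sigma-\tau}e(X_\sigma,Y_\tau)+O(\sqrt\alpha)$, then bound $e(X_\sigma,Y_\tau)$ by cases: in the ``sparse'' regime $e(X_\sigma,Y_\tau)\le O\bigl(\tfrac{\alpha}{n}\abs{X_\sigma}\abs{Y_\tau}\bigr)$ gives contribution $O(\alpha 2^{\sigma+\tau}/n)\le O(\sqrt\alpha)$; in the ``dense'' regime the discrepancy bound applies, and when $e(X_\sigma,Y_\tau)$ is so large it would force a vertex of degree $>20\alpha$ the truncation excludes it outright. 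Summing the $O(\log^2 n)$ bucket contributions together with the light part, and intersecting the net union bound with the discrepancy event, gives $\Normop{M_1}\le\chi\sqrt\alpha$ with probability $1-n^{-2}$. The one genuinely delicate point is that this bookkeeping, done naively, loses a $\log n$ factor; recovering the clean $\chi\sqrt\alpha$ requires the more careful iterated grouping of Feige--Ofek, and reproducing that is the step I would budget the most time for.
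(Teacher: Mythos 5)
The paper does not actually prove this statement: it is cited verbatim as a black box from \cite{feige2005spectral,chin2015stochastic,liu2022minimax}, and \cref{theorem:CRV15} has no accompanying proof in the paper. There is therefore no ``paper's own proof'' to compare against; what you have written is a sketch of the original Feige--Ofek argument, which is the right source. Your sketch captures the correct skeleton: restriction of the bilinear form to the surviving (degree-bounded) coordinates, an $\epsilon$-net reduction, the light/heavy split with Bernstein plus a union bound for the light part, and a bounded-discrepancy event combined with dyadic bucketing for the heavy part. (One small point: as written in the paper, $M_{ij}=p_{ij}$ with probability $1-p_{ij}$ gives $\E M_{ij}\ne 0$; the intended sign is $M_{ij}=-p_{ij}$, and you have implicitly used the corrected, mean-zero version, which is the right reading.)

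You are also right to flag the genuine difficulty: your per-bucket bound of $O(\sqrt\alpha)$ summed over $O(\log^2 n)$ bucket pairs only yields $O(\sqrt\alpha\log^2 n)$, and the leap from there to the clean $\chi\sqrt\alpha$ is precisely the content of the Feige--Ofek case analysis (their six-way classification of bucket pairs by which of several bounds on $e(X_\sigma,Y_\tau)$ is tight, and a telescoping/geometric-decay argument within each case). As written, your proposal is an honest and accurate outline with a self-identified gap at exactly the step where the real work lives; since the paper itself delegates the entire proof to the cited references, this is an appropriate level of detail, but the sketch would not constitute a self-contained proof without reproducing that case analysis.
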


From \cref{theorem:CRV15}, we can get the following spectral bound for degree-pruned adjacency matrix.

\begin{corollary}
\label{corollary:degree_pruning_spectral_norm}
In the setting of \cref{def:SBM}, with probability at least $1-o(1)$, there exists a subset $T \subseteq [n]$ of size at least $(1 - \beta)n$ such that
\begin{equation*}
    \Normop{\Tilde{A}_{T}} \leq C_s \sqrt{d}
\end{equation*}
where $C_s$ is some constant and $\beta = \beta(\delta)$ is a value that only depends on $\delta$.
\end{corollary}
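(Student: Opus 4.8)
The plan is to split $\tilde A = M + Q$ into a centered part and a signal part and treat the two separately. Write $M := A - \E[A]$, where $\E[A]_{ij} = (1+\epsilon x_i^* x_j^*)\tfrac d n$ for $i\ne j$ and $\E[A]_{ii}=0$, so that $M$ is exactly the random matrix of \cref{theorem:CRV15} with $p_{ij} = (1+\epsilon x_i^* x_j^*)\tfrac d n$; and let $Q := \E[A] - \tfrac d n J$. A one-line computation gives $Q = \tfrac{\epsilon d}{n}X^* - \tfrac{(1+\epsilon)d}{n}I$, hence $\Normop{Q} \le \epsilon d + \tfrac{(1+\epsilon)d}{n} = \sqrt{(1+\delta)d} + O(d/n)$, using $\Normop{X^*} = n$ and $\epsilon^2 d = 1+\delta$. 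Since passing to a principal submatrix cannot increase the operator norm, $\Normop{Q_T}\le\Normop{Q}$ for every $T\subseteq[n]$, so all the remaining work is to find $T$ of size $(1-\beta)n$ with $\Normop{M_T} = O_\delta(\sqrt d)$.

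For the centered part I would apply \cref{theorem:CRV15} to $M$ with $\alpha := Kd$ for a constant $K = K(\delta)\ge 2$ fixed at the end; the hypothesis $p_{ij}\le\alpha/n$ holds because $(1+\epsilon)\tfrac dn \le \tfrac{2d}{n}$. A positive entry of $M$ in row $i$ occurs precisely at the neighbours of $i$ in $G^0$, so the rows/columns that get zeroed out form the set $B := \{i : \deg_{G^0}(i) > 20Kd\}$, and the theorem yields, with probability $1 - n^{-2}$, a matrix $M_1$ that vanishes outside $T\times T$ (with $T := [n]\setminus B$) and satisfies $\Normop{M_T} = \Normop{M_1}\le \chi\sqrt{Kd}$.

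It remains to control $|B|$, and the key move is to do this via a first-moment estimate on the number of edges rather than directly on the number of high-degree vertices. Since $\E\,|E(G^0)| = \tfrac dn(\binom n2 - \tfrac{\epsilon n}2) < \tfrac{dn}{2}$ and $|E(G^0)|$ is a sum of independent indicators, a Chernoff bound gives $|E(G^0)|\le dn$ with probability $1 - e^{-\Omega(dn)} = 1-o(1)$ (note $dn\to\infty$ even when $d$ is a constant). On that event $20Kd\,|B| \le \sum_{v\in B}\deg_{G^0}(v)\le 2\,|E(G^0)| \le 2dn$, i.e. $|B|\le \tfrac{n}{10K}$; set $\beta := \tfrac1{10K}$. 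Then $|T|\ge(1-\beta)n$ and
\[
\Normop{\tilde A_T} \le \Normop{M_T} + \Normop{Q_T} \le \chi\sqrt{Kd} + \sqrt{(1+\delta)d} + O(d/n) \le C_s\sqrt d,
\]
with $C_s := 2(\chi\sqrt K + \sqrt{1+\delta})$ for $n$ large. A union bound over the two failure events ($n^{-2}$ from \cref{theorem:CRV15} and $e^{-\Omega(dn)}$ from the edge count) gives probability $1-o(1)$.

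Finally $K$ is pinned down as a function of $\delta$: in the intended application $\beta$ and $\mu$ enter the recovery guarantee only through $\mu+\beta$, which must be a small multiple of $\delta$, so taking $K$ a large enough constant multiple of $1/\delta$ makes $\beta = \Theta(\delta)$ and $C_s = \Theta(1/\sqrt\delta)$, both depending on $\delta$ alone. I expect this balancing to be the only genuinely delicate point: one needs $\beta = 1/(10K)$ small relative to $\delta$ while $C_s = \Theta(\sqrt K)$ remains a $\delta$-dependent constant, which is exactly why the first-moment bound should be placed on $|E(G^0)|$ — whose concentration is strong enough to survive constant $d$ — rather than on the count of high-degree vertices, whose naive Markov bound only yields a constant success probability in the constant-degree regime. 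Everything else is bookkeeping on top of \cref{theorem:CRV15}.
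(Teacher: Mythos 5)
Your proof is correct, and it matches the paper's overall structure (decompose $\tilde A$ into a centered random matrix plus a low-rank deterministic remainder, apply \cref{theorem:CRV15} to the centered part, control the remainder by a triangle inequality and the identity $\epsilon d=\sqrt{(1+\delta)d}$), but it handles the key combinatorial step in a genuinely different way. Both arguments must show that the set $B$ of vertices whose degree exceeds the pruning threshold $20\alpha=20Kd$ has size at most $\beta n$. The paper does this by a direct union bound over all $t=\beta n$-element subsets together with all configurations of $10\alpha t$ incident edges, bounding the failure probability by
\begin{equation*}
\binom{n}{t}\binom{tn}{10\alpha t}\Bigparen{\frac{a}{n}}^{10\alpha t}
\le e^{-10\alpha\beta n+\beta n(\log(1/\beta)+1)},
\end{equation*}
so that making the exponent negative forces the coupled constraint $-10\alpha+\log(1/\beta)+1<0$ between $\alpha$ and $\beta$. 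You instead run a Chernoff bound on $|E(G^0)|$ (a sum of $\Theta(n^2)$ independent indicators, so failure probability $e^{-\Omega(dn)}$), and then the handshake lemma gives $|B|\le 2|E(G^0)|/(20Kd)\le n/(10K)$ deterministically on that event, yielding $\beta=1/(10K)$ and $C_s=\Theta(\sqrt K)$ with no combinatorial bookkeeping. This is a cleaner and more transparent route to the same bound, and it makes the $\delta$-dependence of $\beta$ and $C_s$ explicit. Your observation that a naive Markov bound on $\E|B|$ would only give constant success probability when $d$ is bounded is also correct, and it is precisely the obstacle that both the paper's union-bound computation and your edge-count Chernoff are designed to circumvent.
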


\begin{proof}
For simplicity, let us set $a$ to be  $a = (1 + \epsilon) d$ and set $t = \beta n$. We apply \cref{theorem:CRV15} by setting $\alpha > a$ to be a large enough constant. The probability that there exists more than $\beta n$ vertices with degree at least $20 \alpha$ is at most
\begin{equation*}
    \binom{n}{t} \binom{t n}{10 \alpha t} \Bigparen{\frac{a}{n}}^{10 \alpha t}
    \leq \Bigparen{\frac{en}{t}}^{t} \Bigparen{\frac{en}{10\alpha}}^{10 \alpha t} \Bigparen{\frac{a}{n}}^{10 \alpha t}
    = \Bigparen{\frac{en}{t}}^{t} \Bigparen{\frac{ea}{10\alpha}}^{10 \alpha t}
\end{equation*}
Since $\alpha > a$, we have
\begin{equation*}
    \Bigparen{\frac{en}{t}}^{t} \Bigparen{\frac{ea}{10\alpha}}^{10 \alpha t}
    \leq \Bigparen{\frac{en}{t}}^{t} \Bigparen{\frac{e}{10}}^{10 \alpha t}
    \leq e^{-10 \alpha t + t(\log(n/t)+1)}
\end{equation*}
Plug in $t = \beta n$, we get the failure probability is $e^{-10 \alpha \beta n + \beta n(\log(1/\beta)+1)}$. As long as $-10 \alpha + \log (1/\beta) + 1 < 0$ for some $\alpha$ and $\beta$, the failure probability is $o(1)$. Take union bound with failure probability of \cref{theorem:CRV15}, we get that, with probability $1-o(1)$, we have
\begin{equation*}
    \Normop{\Bigparen{\Tilde{A}- \frac{\epsilon d}{n} X^*}_{T}} \leq \chi \sqrt{\alpha}
\end{equation*}
Since $\alpha > a > d$, we have
\begin{equation*}
    \Normop{\Bigparen{\Tilde{A} - \frac{\epsilon d}{n} X^*}_{T}} \leq C'_s \sqrt{d}
\end{equation*}
for some constant $C'_s$. Notice that $\Normop{(\frac{\epsilon d}{n} X^*)_{T}} \leq \epsilon d$. Apply triangle inequality, we get
\begin{equation*}
    \Normop{\Tilde{A}_{T}} \leq C'_s \sqrt{d} + \epsilon d
\end{equation*}
When $\epsilon \sqrt{d} = O(1)$, we get $\epsilon d = O(\sqrt{d})$. Hence, with probability $1-o(1)$, we have
\begin{equation*}
    \Normop{\Tilde{A}_{T}} \leq C_s \sqrt{d}
\end{equation*}
for some constant $C_s$.
\end{proof}
\section{Robust $\Z_2$ Synchronization}
\label{sec:robust_Z2}

In this section, we give an algorithm to solve the row/column-corrupted $\Z_2$ synchronization problem using techniques from \cref{sec:SOS_reaching_KS}. The idea is similar to the robust SOS algorithm for node-corrupted stochastic block model: we find a subset of the rows/columns such that the submatrix formed by the subset has large enough basic SDP value and bounded spectral norm. Then, we use the spectral norm bound to upper bound the basic SDP value of the submatrix formed by corrupted rows/columns in the selected subset.

\subsection{Phase transition for $\Z_2$ synchronization}

Before introducing our algorithm, we give a small summary of the basic SDP value phase transition for $\Z_2$ synchronization. It is based on Theorem 5 of \cite{montanari2016semidefinite}, where they gave a very clean result for the phase transition of deformed GOE matrices. The phase transition can naturally be extended to the $\Z_2$ synchronization model using a simple argument based on rotational symmetry. The following theorem informally restates Theorem 5 of \cite{montanari2016semidefinite} and provides the result we need for our robust $\Z_2$ synchronization algorithm.

\begin{theorem}[$\Z_2$ synchronization phase transition \cite{montanari2016semidefinite}]
\label{theorem:pushout_effect_Z2}
    Given an uncorrupted $\Z_2$ synchronization matrix $A^0$ that is generated acoodirng to \cref{def:Z2-corruption},
    \begin{itemize}
        \item if $\sigma \in [0, 1]$, then for any $\xi > 0$, we have $\SDP(A^0) \in [(2 - \xi) n^2, (2 + \xi) n^2]$ with probability $1-o(1)$,
        \item if $\sigma > 1$, then there exists $\Delta(\sigma) > 0$ such that $\SDP(A^0) \geq (2 + \Delta(\sigma))n^2$ with probability $1-o(1)$.
    \end{itemize}
\end{theorem}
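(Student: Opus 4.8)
The plan is to reduce \cref{theorem:pushout_effect_Z2} to the deformed-GOE phase transition of \cite[Theorem~5]{montanari2016semidefinite} through two elementary moves: a sign-symmetry reduction that replaces the planted spike $x^*$ by the all-ones direction, and a rescaling that matches normalizations. After these reductions the subcritical case ($\sigma \le 1$) can even be carried out by hand, while the supercritical case ($\sigma > 1$) is imported directly.

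First I would record that $\SDP$ is invariant under conjugation by a diagonal sign matrix. Set $D = \diag(x^*)$, so $D = D^{\top} = D^{-1}$. Since $X \mapsto DXD$ is a bijection of the feasible set $\{X \succeq 0 : X_{ii} = 1 \ \forall i\}$ onto itself (it preserves the PSD cone because $D$ is invertible, and preserves the diagonal because $(DXD)_{ii} = X_{ii}$), and $\langle DMD, X\rangle = \langle M, DXD\rangle$, we get $\SDP(DMD) = \SDP(M)$ for every symmetric $M$. Taking $M = A^0 = \sigma x^*(x^*)^{\top} + W$ and using $Dx^* = \Ind$ yields $\SDP(A^0) = \SDP(\sigma\Ind\Ind^{\top} + DWD)$. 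Finally $DWD$ has the same law as $W$, since its off-diagonal entries are $x_i^* x_j^* W_{ij}$ with $x_i^* x_j^* \in \{\pm1\}$ fixed and $W_{ij} \sim N(0,n)$ i.i.d.\ and symmetric, so the entrywise sign flips do not change the joint distribution. Hence $\SDP(A^0) \overset{d}{=} \SDP(\sigma J + W)$ with $J = \Ind\Ind^{\top}$. This is the only point where ``rotational symmetry'' is used: it is the $\Z_2^n$ sign subgroup of $O(n)$, under which the Gaussian Wigner ensemble is invariant, and it moves the $\pm1$ spike to a single convenient direction.

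Next I would match normalizations. Write $\sigma J + W = n\big(\sigma\hat v\hat v^{\top} + \tilde G\big)$, where $\hat v = \Ind/\sqrt n$ is a unit vector and $\tilde G = W/n$ is a Wigner matrix in the standard normalization (off-diagonal variance $1/n$, semicircle edge at $2$; the diagonal of $W$ only shifts the objective by $\sum_i W_{ii} = o(n^2)$ w.h.p.\ and is harmless). By positive homogeneity $\SDP(\sigma J + W) = n\cdot\SDP(\sigma\hat v\hat v^{\top} + \tilde G)$, and $\sigma\hat v\hat v^{\top} + \tilde G$ is exactly the rank-one deformed GOE analysed in \cite[Theorem~5]{montanari2016semidefinite} (up to absorbing their normalization constant into the scaling). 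That theorem gives $\SDP(\sigma\hat v\hat v^{\top} + \tilde G) = (2 + o(1))n$ when $\sigma \le 1$, and $\SDP(\sigma\hat v\hat v^{\top} + \tilde G) \ge (2 + \Delta(\sigma))n$ for some $\Delta(\sigma) > 0$ when $\sigma > 1$, each with probability $1 - o(1)$; multiplying by $n$ and undoing the distributional identity of the previous paragraph yields the claimed bounds $\SDP(A^0) \in [(2 - \xi)n^2, (2 + \xi)n^2]$ and $\SDP(A^0) \ge (2 + \Delta(\sigma))n^2$.

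For completeness I would also spell out the subcritical case without invoking the full strength of \cite[Theorem~5]{montanari2016semidefinite}. The SDP dual gives $\SDP(M) \le n\,\lambda_{\max}(M)$ (the dual point $y_i = \lambda_{\max}(M)$ is feasible), and a subcritical rank-one perturbation of a Wigner matrix has no outlier eigenvalue, so $\lambda_{\max}(A^0) \le (2 + o(1))n$ w.h.p.\ and hence $\SDP(A^0) \le (2 + \xi)n^2$. For the matching lower bound, for any feasible $X \succeq 0$ one has $\langle \sigma J, X\rangle = \sigma\,\Ind^{\top}X\Ind \ge 0$, so $\SDP(A^0) = \SDP(\sigma J + W) \ge \SDP(W) \ge (2 - \xi)n^2$ w.h.p.\ by the known value of the Wigner SDP, $\SDP(W) = (2+o(1))n^2$ (the $\sigma = 0$ case of the above). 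The one genuinely non-trivial ingredient — and the main obstacle if one tried to reprove everything from scratch — is the supercritical lower bound $\SDP \ge (2 + \Delta(\sigma))n^2$ for $\sigma > 1$: this requires exhibiting a feasible $X$ whose alignment with the spike $\hat v$ contributes enough extra energy to push the objective strictly past $2n^2$, while controlling the Wigner fluctuations. Since that is precisely the content of \cite[Theorem~5]{montanari2016semidefinite}, I would invoke it rather than reproduce it.
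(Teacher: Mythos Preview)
Your proposal is correct and matches the paper's approach: the paper does not give a detailed proof but simply states that the result follows from \cite[Theorem~5]{montanari2016semidefinite} via ``a simple argument based on rotational symmetry,'' and your sign-conjugation reduction is precisely that argument made explicit (indeed, as you note, the relevant symmetry is the $\Z_2^n$ sign subgroup, since the full $O(n)$ action does not preserve the diagonal constraints of the SDP). Your additional self-contained treatment of the subcritical case via the dual bound and $\SDP(W)$ is a nice bonus the paper omits.
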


\subsection{SOS Algorithm}

In this corruption model, the $\Z_2$ synchronization is easier than the stochastic block model in the sense that, with high probability, $A^0$ is already bounded in spectral norm. Therefore, we can omit the pruning step that we did for robust stochastic block model. Let $A$ be the row/column corrupted $Z_2$ synchronization matrix generated according to \cref{def:Z2-corruption}, consider the following system of polynomial equations in PSD matrix X of size $n \times n$ and vector $w$ of size $n$:
\begin{equation}
\label{eq:Z2_SOS}
\mathcal{A}:=
\left\{
\begin{aligned}
& w_{i}^2 = w_i \quad &\forall i\in [n] &
\\
& \sum_i w_{i} = (1 - \mu)n &
\\
& X \succeq 0 &
\\
& X_{ii} = 1 \quad &\forall i\in [n] &
\\
& \langle A \odot (ww^{\top}) , X \rangle \geq (2 + \Delta(\sigma)) (1 - \mu)^2 n^2 &
\\
& \Normop{A \odot (ww^{\top})} \leq (\sigma + \sigma^{-1}) n &
\end{aligned}
\right\}
\end{equation}
where $\Delta(\sigma) > 0$ is value that only depends on $\sigma$.

In \cref{eq:Z2_SOS}, the vector $w$ is a $\{0, 1\}$ vector that finds a subset of the rows/columns whose submatrix behaves like uncorrupted $\Z_2$ synchronization matrix. Essentially, we require the submatrix $A \odot (ww^{\top})$ to have two properties: \textit{(a)} it has large enough basic SDP value and \textit{(b)} it has small enough spectral norm. The matrix $X$ is a PSD matrix that is a solution to the basic SDP, such that the inner product between $X$ and $A \odot (ww^{\top})$ is large enough. This certifies property (a) and also allows us to obtain our estimator $X$. Property (b) is easy to show because there is an SOS certificate for spectral norm.

We will prove the following theorem for the SOS relaxation of \cref{eq:Z2_SOS}, which implies \cref{thm:sos-z2-informal}. It says that, with high probability, there is a deg-4 SOS proof which shows that any $X$ which satisfies \cref{eq:Z2_SOS} has non-trivial correlation with the true labels $X^*$.

\begin{theorem}[SOS proof for robust $\Z_2$ synchronization]
\label{theorem:SOS_Z2}
When $\sigma > 1$ and $\mu \leq \mu^*(\sigma)$ for some value $\mu^*(\sigma)$ that only depends on $\sigma$, with probability at least $1-o(1)$, we have:
\begin{equation*}
\mathcal{A}
\sststile{4}{X, w}
\langle X, X^* \rangle
\geq \Omega(n^2)
\end{equation*}
\end{theorem}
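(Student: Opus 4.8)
The plan is to transport the three-step argument behind \cref{theorem:SOS_KS_threshold_algo} (feasibility, correlation on the uncorrupted sub-block, globalization) to the Gaussian model of \cref{def:Z2-corruption}. The argument is in fact a little cleaner here, since the noise matrix $W$ already satisfies $\Normop{W}\leq(2+o(1))n$ with probability $1-o(1)$ by standard GOE spectral estimates, so no degree pruning (no fraction $\beta$) is needed; this is why \cref{eq:Z2_SOS} carries a spectral constraint with the clean constant $(\sigma+\sigma^{-1})n$. I would establish \cref{theorem:SOS_Z2} through three lemmas paralleling \cref{lem:SOS_feasibility}, \cref{lem:SOS_uncorrupted_subset_correlation}, and \cref{lem:SOS_correlation}.

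\emph{Feasibility.} Let $S^*\subseteq[n]$, $\abs{S^*}=(1-\mu)n$, be the uncorrupted coordinates; take $w=\Ind_{S^*}$ and $X$ an optimizer of $\SDP(A^0_{S^*})$. Since $A_{S^*}=A^0_{S^*}$, feasibility reduces to $\SDP(A^0_{S^*})\geq(2+\Delta(\sigma))(1-\mu)^2n^2$ and $\Normop{A^0_{S^*}}\leq(\sigma+\sigma^{-1})n$. Writing $m=(1-\mu)n$, the submatrix $A^0_{S^*}$ is distributed as $\sqrt{n/m}\,B$, where $B$ is an $m$-dimensional $\Z_2$-synchronization matrix at signal-to-noise ratio $\tau=\sigma\sqrt{1-\mu}$; for $\mu$ a small enough constant (depending on $\sigma$) we have $\tau>1$, so \cref{theorem:pushout_effect_Z2} and positive homogeneity of $\SDP$ give $\SDP(A^0_{S^*})=\sqrt{n/m}\,\SDP(B)\geq(2+\Delta(\tau))(1-\mu)^{3/2}n^2$; choosing the program constant (the ``$\Delta(\sigma)$'' of \cref{eq:Z2_SOS}) small enough relative to the push-out gap of \cref{theorem:pushout_effect_Z2} and $\mu\leq\mu^*(\sigma)$ small enough makes this at least $(2+\Delta(\sigma))(1-\mu)^2n^2$. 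For the spectral bound, $\Normop{A^0_{S^*}}\leq\Normop{A^0}\leq(\sigma+\sigma^{-1}+o(1))n$ with probability $1-o(1)$ by the BBP bound on the top eigenvalue of a rank-one perturbation of a scaled GOE matrix (a principal sub-block never has larger operator norm, and the $o(1)$ slack is absorbed into the constant).

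\emph{Sub-block correlation.} Put $S=\{i:w_i=1\}$ and $S'=S\cap S^*$; the constraints $w_i^2=w_i$ and $\sum_iw_i=(1-\mu)n$ force $\abs{S'}\geq(1-2\mu)n$ by a degree-$2$ SOS argument. Using $A^0_{S'}=\sigma X^*_{S'}+W_{S'}$ one has, with a degree-$4$ SOS proof,
\[
\iprod{X_{S'},X^*_{S'}}=\tfrac1\sigma\iprod{X_{S'},A_{S'}}-\tfrac1\sigma\iprod{X_{S'},W_{S'}}\mper
\]
For the noise term, $X\odot(ww^\top)\succeq0$ by the Schur product theorem (with an SOS certificate), its $i$-th diagonal entry is $w_i\in[0,1]$, and $\Normop{W}\Id-W_{S^*}\succeq0$, so $\iprod{X_{S'},W_{S'}}=\iprod{X\odot(ww^\top),W_{S^*}}\leq\Normop{W}\sum_iw_i=\Normop{W}(1-\mu)n\leq(2+o(1))(1-\mu)n^2$ with probability $1-o(1)$. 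For the signal term, decompose $\iprod{X_{S'},A_{S'}}=\iprod{A\odot(ww^\top),X}-\iprod{X_S,A_S-A_{S'}}$: the first piece is $\geq(2+\Delta(\sigma))(1-\mu)^2n^2$ by a program constraint, and since $A_S-A_{S'}$ is supported on the at most $\mu n$ rows and columns outside $S^*$ while $\Normop{A\odot(ww^\top)}\leq(\sigma+\sigma^{-1})n$ is a program constraint, an argument identical to \cref{lem:spectral-thin-grothendieck} (with the scale $\sqrt d$ replaced by $n$) bounds the second piece by $O_\sigma(\mu n^2)$, again via a degree-$4$ SOS proof that only invokes the operator-norm constraint. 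Expanding $(2+\Delta(\sigma))(1-\mu)^2-2(1-\mu)=(1-\mu)\bigparen{\Delta(\sigma)(1-\mu)-2\mu}$ yields $\mathcal A\sststile{4}{X,w}\iprod{X_{S'},X^*_{S'}}\geq\Omega_\sigma(n^2)-O_\sigma(\mu)\,n^2$.

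\emph{Globalization and conclusion.} Exactly as in \cref{lem:SOS_correlation}: from $X\succeq0$, $X_{ii}=1$ (hence $X_{ij}^2\leq1$) and $(X^*_{ij})^2=1$ one gets $\abs{X_{ij}X^*_{ij}}\leq1$ with an SOS proof, so $\iprod{X,X^*}-\iprod{X_{S'},X^*_{S'}}\geq-(n^2-\abs{S'}^2)\geq-4\mu n^2$ using $\abs{S'}\geq(1-2\mu)n$. Combining the three steps and taking $\mu\leq\mu^*(\sigma)$ a small enough constant depending on $\sigma$ gives $\mathcal A\sststile{4}{X,w}\iprod{X,X^*}\geq\Omega(n^2)$, which is \cref{theorem:SOS_Z2}; extracting $\hat X=\pE[X]$ and applying \cref{lem:rounding} then yields \cref{thm:sos-z2-informal}. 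The main obstacle—and the reason \cref{eq:Z2_SOS} imposes both an SDP-value lower bound and an operator-norm upper bound on the \emph{same} $w$-dependent matrix $A\odot(ww^\top)$—lies in the sub-block step: one cannot directly invoke any bound on $\SDP(A_{S'})$ for the $w$-dependent index set $S'$, so every such estimate must be funneled either through a program constraint or through a linear matrix inequality ($\Normop{\cdot}\Id\succeq\cdot$), which admits a low-degree SOS certificate, together with the Schur-product identity $X\odot(ww^\top)\succeq0$. The secondary, unavoidable nuisance is the feasibility bookkeeping: restricting to $S^*$ lowers the effective signal-to-noise ratio from $\sigma$ to $\sigma\sqrt{1-\mu}$, which is precisely why $\mu$ must be a constant depending on $\sigma$ (cf.\ the matching obstruction in \cref{sec:lower-bound}).
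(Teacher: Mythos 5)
Your proof follows the same route as the paper's (\cref{lem:Z2_SOS_guarantee}): split $\iprod{X,X^*}$ into the $S'=S\cap S^*$ restriction plus its complement, expand the restriction via $A^0_{S'}=\sigma X^*_{S'}+W_{S'}$, lower-bound $\iprod{X_{S'},A_{S'}}$ by the program constraint on $\iprod{A\odot(ww^\top),X}$ minus the \cref{lem:spectral-thin-grothendieck} correction, and bound the complement crudely by $X_{ij}^2\leq1$. The one genuine variation is how the noise term $\iprod{X_{S'},W_{S'}}$ is controlled: the paper routes it through $\SDP(W)\leq(2+\xi)n^2$ (\cref{theorem:pushout_effect_Z2} at $\sigma=0$) combined with monotonicity of $\SDP$ (\cref{claim:monotonicity_sdp}), whereas you use the LMI $\Normop{W}\Id-W_{S^*}\succeq0$ together with the Schur-product fact $X\odot(ww^\top)\succeq0$ and the GOE operator-norm bound $\Normop{W}\leq(2+o(1))n$. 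Both give a constant near $2$, so the push-out gap survives and the final accounting is essentially identical; your route is if anything slightly easier to certify in low-degree SOS, since the paper applies $\SDP$-monotonicity to a $w$-dependent submatrix (which requires a dual-certificate argument to become a polynomial identity), while your operator-norm LMI is a fixed, data-only PSD fact. Your feasibility argument is also more explicit than the paper's terse appeal to \cref{theorem:pushout_effect_Z2} plus a union bound, via the $\sqrt{n/m}$-rescaling of $A^0_{S^*}$ to an $m$-dimensional $\Z_2$-synchronization matrix at SNR $\sigma\sqrt{1-\mu}$ (both treatments, incidentally, implicitly lean on exponential concentration of the $\SDP$ value for the adversarially chosen $S^*$, which \cref{theorem:pushout_effect_Z2} as stated does not make explicit).
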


\subsection{Proof of correctness}

To prove \cref{theorem:SOS_Z2}, we need to show two things:
\begin{enumerate}
    \item $\mathcal{A} \sststile{4}{X, w} \langle X, X^* \rangle \geq \Omega(n^2)$ with high probability,
    \item the constraint set \cref{eq:Z2_SOS} is feasible with high probability.
\end{enumerate}
Now, we prove the first property in the following lemma.

\begin{lemma}
    \label{lem:Z2_SOS_guarantee}
    For $X$ and $w$ satisfying \cref{eq:Z2_SOS}, we have, with probability $1-o(1)$,
    \begin{equation*}
    \mathcal{A}
    \sststile{4}{X, w}
    \langle X, X^* \rangle
    \geq \Omega(n^2)
    \end{equation*}
\end{lemma}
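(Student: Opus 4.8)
The plan is to follow the template of \cref{lem:SOS_uncorrupted_subset_correlation} and \cref{lem:SOS_correlation}, replacing the centered adjacency matrix by $A$, the scaling $\tfrac{\epsilon d}{n}X^*$ by $\sigma X^*$, and the push-out estimate of \cref{theorem:pushout_effect_grothendieck} by the Gaussian estimate of \cref{theorem:pushout_effect_Z2}. Write $S = \Set{i : w_i = 1}$ for the subset the program selects, let $S^*$ be the (adversarially fixed) set of uncorrupted rows/columns, so that $|S^*| \ge (1-\mu)n$ and $A_{S^*} = A^0_{S^*} = \sigma X^*_{S^*} + W_{S^*}$, and set $S' = S \cap S^*$, so that $|S'| \ge (1-2\mu)n$ and $|S \setminus S'| \le \mu n$. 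Starting from the SOS identity $\iprod{X_{S'}, X^*_{S'}} = \tfrac1\sigma\iprod{X_{S'}, A_{S'}} - \tfrac1\sigma\iprod{X_{S'}, A_{S'} - \sigma X^*_{S'}}$ together with $A_{S'} - \sigma X^*_{S'} = W_{S'}$, it suffices to produce degree-$4$ SOS proofs of (i) an upper bound on the noise term $\iprod{X_{S'}, W_{S'}}$ and (ii) a lower bound on the signal term $\iprod{X_{S'}, A_{S'}}$, and then to pass from $S'$ to $[n]$.

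For (i), I would establish $\iprod{X_{S'}, W_{S'}} \le \SDP(W)$ inside degree-$4$ SOS via the SDP dual certificate of $\SDP(W)$, \ie a diagonal matrix $D$ with $D - W \succeq 0$ and $\Tr D = \SDP(W)$ (so $D_{ii} \ge 0$, as $W$ has zero diagonal). Writing $v = w \odot \Ind_{S^*}$ (which satisfies $v_i^2 = v_i$ using $w_i^2 = w_i$), we have $\iprod{X_{S'}, W_{S'}} = \iprod{W \odot vv^{\top}, X} = \sum_i D_{ii} v_i - \iprod{(D - W)\odot vv^{\top}, X}$, where the last summand is $\ge 0$ by the Schur-product inequality (a degree-$\le 3$ consequence of $X \succeq 0$ and $D - W \succeq 0$) and $\sum_i D_{ii} v_i \le \sum_i D_{ii} = \SDP(W)$ using $0 \le v_i \le 1$. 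This bound is independent of the particular $S'$, so no union bound over subsets is needed; it remains only to note that $\SDP(W) \le (2+\xi)n^2$ with probability $1 - o(1)$ for any fixed $\xi>0$, which is \cref{theorem:pushout_effect_Z2} applied to the pure-noise matrix ($\sigma = 0$), and which in fact concentrates exponentially since $M \mapsto \SDP(M)$ is $n$-Lipschitz in Frobenius norm.

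For (ii), I would decompose $\iprod{X_{S'}, A_{S'}} = \iprod{X_S, A_S} - \iprod{X_S, A_S - A_{S'}}$. The first term is at least $(2+\Delta(\sigma))(1-\mu)^2 n^2$ directly from the SDP-value constraint of \cref{eq:Z2_SOS} (using $w_i^2 = w_i$). For the second, the spectral constraint $\Normop{A_S} = \Normop{A \odot ww^{\top}} \le (\sigma + \sigma^{-1}) n$ and $|S \setminus S'| \le \mu n$ let me invoke the $\Z_2$ analogue of \cref{lem:spectral-thin-grothendieck}, established by the same SOS-compatible argument: split the band $A_S - A_{S'}$ into its $\le \mu n$ rows and columns indexed by $S \setminus S'$, bound the off-diagonal part by $\sqrt{|S\setminus S'|\cdot n}\,\Normop{A_S}$ via $\iprod{A_S^2, X} \le \Normop{A_S}^2\, n$, and bound the $(S\setminus S')\times(S\setminus S')$ block by $\Normop{A_S}\cdot|S\setminus S'|$; this gives $\iprod{X_S, A_S - A_{S'}} \le \SDP(A_S - A_{S'}) \le C(\sigma)\sqrt{\mu}\, n^2$. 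Combining (i) and (ii), $\iprod{X_{S'}, X^*_{S'}} \ge \tfrac{n^2}{\sigma}\Paren{(2+\Delta(\sigma))(1-\mu)^2 - C(\sigma)\sqrt\mu - 2 - \xi}$; fixing $\xi = \Delta(\sigma)/4$ and choosing $\mu \le \mu^*(\sigma)$ small enough makes the parenthesis at least $\Delta(\sigma)/2$, so $\iprod{X_{S'}, X^*_{S'}} \ge \tfrac{\Delta(\sigma)}{2\sigma} n^2$. Finally, exactly as in \cref{lem:SOS_correlation}, since $X \succeq 0$ with $X_{ii}=1$ gives $X_{ij}X^*_{ij} \ge -1$ in degree-$2$ SOS and $|[n]\setminus S'| \le 2\mu n$, it follows that $\iprod{X, X^*} \ge \iprod{X_{S'}, X^*_{S'}} - 4\mu n^2 \ge \Omega(n^2)$ for $\mu \le \mu^*(\sigma)$.

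I expect the main obstacle to be exactly as in the diverging-degree SBM analysis: forcing the two key inequalities — the noise bound $\iprod{X_{S'}, W_{S'}} \le \SDP(W)$ and the $\Z_2$ thin-Grothendieck estimate for $\SDP(A_S - A_{S'})$ — to be genuinely captured in degree-$4$ SOS, after which the choice of $\xi$, the threshold $\mu^*(\sigma)$, and the passage from $S'$ to $[n]$ are routine. The feasibility of \cref{eq:Z2_SOS} (needed to deduce \cref{thm:sos-z2-informal}) is a separate statement, proved by taking $w = \Ind_{S^*}$, bounding $\Normop{A^0_{S^*}}$ by Cauchy interlacing and the BBP spectral edge of $A^0$, and certifying the SDP-value constraint through \cref{theorem:pushout_effect_Z2} applied to the size-$|S^*|$ synchronization matrix $A^0_{S^*}$.
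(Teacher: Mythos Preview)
Your proposal is correct and follows essentially the same route as the paper's proof: restrict to $S'=S\cap S^*$, split $\sigma\iprod{X_{S'},X^*_{S'}}$ into the signal term $\iprod{X_{S'},A_{S'}}$ and the noise term $\iprod{X_{S'},W_{S'}}$, bound the latter by $\SDP(W)\le(2+\xi)n^2$ via \cref{theorem:pushout_effect_Z2}, bound the signal loss $\iprod{X_S,A_S-A_{S'}}$ using the spectral constraint and \cref{lem:spectral-thin-grothendieck}, and then pass from $S'$ to $[n]$ losing $O(\mu n^2)$. The only notable differences are cosmetic: you spell out the SOS dual certificate for the noise bound (the paper just invokes monotonicity of the basic SDP, \cref{claim:monotonicity_sdp}), and your thin-Grothendieck estimate yields $O(\sqrt{\mu}\,n^2)$ rather than the paper's $O(\mu n^2)$ from \cref{lem:spectral-thin-grothendieck}, which is weaker but immaterial once $\mu\le\mu^*(\sigma)$.
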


\begin{proof}
Let $s \in \{0, 1 \}^n$ be the indicator variable for the set of uncorrupted indices. We will use the following identity to prove the lemma
\begin{equation}
\label{eq:lem_Z2_SOS_guarantee_1}
    \langle X, X^* \rangle
    = \langle X, X^* \odot (w w^{\top}) \odot (s s^{\top}) \rangle
    + \langle X, X^* \odot (J - (w w^{\top}) \odot (s s^{\top})) \rangle
\end{equation}
Notice that we have $\mathcal{A} \sststile{4}{X, w} X_{ij}^2 \leq 1$ for each $(i, j) \in [n] \times [n]$. Therefore, we can get the following bound for the second term of \cref{eq:lem_Z2_SOS_guarantee_1}
\begin{equation}
\label{eq:lem_Z2_SOS_guarantee_3}
\mathcal{A}
\sststile{4}{X, w}
\langle X, X^* \odot (J - (w w^{\top}) \odot (s s^{\top})) \rangle
\geq - 4 \mu n^2
\end{equation}
Now, the goal is to show that $\langle X, X^* \odot (w w^{\top}) \odot (s s^{\top}) \rangle$ is large enough. To do this, we will use the following identity
\begin{equation}
\label{eq:lem_Z2_SOS_guarantee_2}
    \langle X, X^* \odot (w w^{\top}) \odot (s s^{\top}) \rangle
    = \frac{1}{\sigma}(\langle X, A \odot (w w^{\top}) \odot (s s^{\top}) \rangle
    - \langle X, (A - \sigma X^*) \odot (w w^{\top}) \odot (s s^{\top}) \rangle)
\end{equation}
The easy part is to bound $\langle X, (A - \sigma X^*) \odot (w w^{\top}) \odot (s s^{\top}) \rangle$. We know that $(A - \sigma X^*) \odot (w w^{\top}) \odot (s s^{\top}) = (A^0 - \sigma X^*) \odot (w w^{\top}) \odot (s s^{\top})$ since it is restricted to the set of uncorrupted rows/columns. Moreover, from \cref{theorem:pushout_effect_Z2}, we know that, with high probability, $\SDP(A^0 - \sigma X^*) \leq (2 + \xi) n^2$ for any constant $\xi > 0$. Therefore, by monotonicity of the basic SDP from \cref{claim:monotonicity_sdp}, we have:
\begin{align*}
\mathcal{A}
\sststile{4}{X, w}
& \langle X, (A - \sigma X^*) \odot (w w^{\top}) \odot (s s^{\top}) \rangle \\
= & \langle X, (A^0 - \sigma X^*) \odot (w w^{\top}) \odot (s s^{\top}) \rangle \\
\leq & \SDP((A^0 - \sigma X^*) \odot (w w^{\top}) \odot (s s^{\top})) \\
\leq & \SDP(A^0 - \sigma X^*) \\
\leq & (2 + \xi) n^2
\end{align*}
The hard part is to show that $\langle X, A \odot (w w^{\top}) \odot (s s^{\top}) \rangle$ is large enough. We show this via the following identity:
\begin{equation*}
    \langle X, A \odot (w w^{\top}) \odot (s s^{\top}) \rangle
    = \langle X, A \odot (w w^{\top}) \rangle
    - \langle X, A \odot (w w^{\top}) - A \odot (w w^{\top}) \odot (s s^{\top}) \rangle
\end{equation*}
For the first term $\langle X, A \odot (w w^{\top})\rangle$, we can simply apply the program constraint and get:
\begin{equation*}
\mathcal{A}
\sststile{4}{X, w}
\langle X, A \odot (ww^{\top}) \rangle \geq (2 + \Delta(\sigma)) (1 - \mu)^2 n^2
\end{equation*}
For the second term $\langle X, A \odot (w w^{\top}) - A \odot (w w^{\top}) \odot (s s^{\top}) \rangle$, we bound it by the Grothendieck norm of $A \odot (w w^{\top}) - A \odot (w w^{\top}) \odot (s s^{\top})$. Since we have 
$\mathcal{A}
\sststile{4}{X, w}
\Normop{A \odot (ww^{\top})} \leq (\sigma + \sigma^{-1}) n$
from the program constraints, we can apply \cref{lem:spectral-thin-grothendieck} to get
\begin{equation*}
\mathcal{A}
\sststile{4}{X, w}
\langle X,  A \odot (w w^{\top}) - A \odot (w w^{\top}) \odot (s s^{\top}) \rangle \leq O((\sigma + \sigma^{-1}) \mu n^2)
\end{equation*}
Thus, we have:
\begin{equation*}
\mathcal{A}
\sststile{4}{X, w}
\langle X, A \odot (w w^{\top}) \odot (s s^{\top}) \rangle \geq (2 + \Delta(\sigma)) (1 - \mu)^2 n^2 - O((\sigma + \sigma^{-1}) \mu n^2)
\end{equation*}
Plug the two parts into \cref{eq:lem_Z2_SOS_guarantee_2}, we get:
\begin{align}
\label{eq:lem_Z2_SOS_guarantee_4}
\begin{split}
\mathcal{A}
\sststile{4}{X, w}
\langle X, X^* \odot (w w^{\top}) \odot (s s^{\top}) \rangle
&\geq \frac{1}{\sigma} \Paren{ (2 + \Delta(\sigma)) (1 - \mu)^2 n^2 - O((\sigma + \sigma^{-1}) \mu n^2) - (2 + \xi) n^2 } \\
& = \theta(\sigma) n^2
\end{split}
\end{align}
for some $\theta(\sigma)$ that only depends on $\sigma$.

Now, plug \cref{eq:lem_Z2_SOS_guarantee_3} and \cref{eq:lem_Z2_SOS_guarantee_4} into \cref{eq:lem_Z2_SOS_guarantee_1}, we get
\begin{align*}
\begin{split}
\mathcal{A}
\sststile{4}{X, w}
\langle X, X^* \rangle
& = \langle X, X^* \odot (w w^{\top}) \odot (s s^{\top}) \rangle
    + \langle X, X^* \odot (J - (w w^{\top}) \odot (s s^{\top})) \rangle \\
& \geq \theta(\sigma) n^2 - 4 \mu n^2
\end{split}
\end{align*}
When $\mu \leq \mu^*(\sigma)$ for some value $\mu^*(\sigma)$ that only depends on $\sigma$, we have
\begin{equation*}
\begin{split}
\mathcal{A}
\sststile{4}{X, w}
\langle X, X^* \rangle \geq \theta'(\sigma) n^2
\end{split}
\end{equation*}
where $\theta'(\sigma)$ is a value that only depends on $\sigma$. Thus, when $\sigma > 1$, we have, with probability $1-o(1)$,
\begin{equation*}
\mathcal{A}
\sststile{4}{X, w}
\langle X, X^* \rangle
\geq \Omega(n^2)
\end{equation*}
\end{proof}

Now, we are ready to prove \cref{theorem:SOS_Z2}. In the proof, we first prove feasibility of the contraint set in \cref{eq:Z2_SOS}, then use \cref{lem:Z2_SOS_guarantee} to complete the proof.

\begin{proof}[Proof of \cref{theorem:SOS_Z2}]
    The feasibility analysis is similar to the feasibility analysis in \cref{lem:SOS_feasibility}. From \cref{theorem:pushout_effect_Z2} and union bound, we get that the inequality $\langle A \odot (ww^{\top}) , X \rangle \geq (2 + \Delta(\sigma)) (1 - \mu)^2 n^2$ is feasible with probability $1-o(1)$. The inequality $\Normop{A \odot (ww^{\top})} \leq (\sigma + \sigma^{-1}) n$ is feasible with probability $1-o(1)$ due to the famous BBP phase transition and monotonicity of spectral norm. Take union bound over failure probabilities of the two inequalities, we can conclude the feasibility analysis.
    
    From \cref{lem:Z2_SOS_guarantee}, we get that, with probability $1-o(1)$, we have
    $\mathcal{A}
    \sststile{4}{X, w}
    \langle X, X^* \rangle
    \geq \Omega(n^2)$. Therefore, we can take union bound and conclude that, with probability $1-o(1)$, the program finds an $X$ such that
    \begin{equation*}
    \mathcal{A}
    \sststile{4}{X, w}
    \langle X, X^* \rangle
    \geq \Omega(n^2)
    \end{equation*}
\end{proof}

Now we finish the proof of \cref{thm:sos-z2-informal}.
\begin{proof}[Proof of \cref{thm:sos-z2-informal}]
 By combining \cref{theorem:SOS_Z2} and \cref{theorem:SOS_algorithm}, 
 we can compute the pseudo-expectation $\tilde{\E}$ for the SOS relaxtion 
  of \cref{eq:Z2_SOS} in polynomial time. Let $\hat{X}\coloneqq \tilde{\E} [X]$ in \cref{eq:SOS}. 
 By linearity of pseudo-expectation, we have $\hat{X}\succeq 0$, $\hat{X}_{ii}=1$.
  Furthermore, we have $\iprod{\hat{X},X^*}\geq \Omega(n^2)$ with probability $1-o(1)$. Now, applying rounding procedure 
   in \cref{lem:rounding}, we can then obtain $\hat{x}\in \Set{\pm 1}^n$ such that
   $\E\iprod{\hat{x},x^*}^2\geq \Omega(n^2)$. 
\end{proof}
\section{Deferred proofs}
\label{sec:deferred_proofs}

\subsection{Proof of \cref{claim-sdp-gr}}\label{sec:sdp-gr}
\begin{claim}[Restatement of \cref{claim-sdp-gr}]
    Given matrix $M$, we have $\SDP(M) \leq \Norm{M}_{Gr}$.
\end{claim}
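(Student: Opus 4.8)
The plan is to show that any feasible point of the program defining $\SDP(M)$ lifts to a feasible point of the program defining $\Norm{M}_{Gr}$ with the same objective value; the claimed inequality then follows by taking suprema over the two programs.

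Working with the vector formulations is cleanest. First I would take unit vectors $\sigma_1,\dots,\sigma_n \in S^{n-1}$ that attain the maximum in \cref{eq:symmetric_grothendieck_norm2}, so that $\SDP(M) = \sum_{i,j=1}^n M_{ij}\iprod{\sigma_i,\sigma_j}$. Now set $\delta_j := \sigma_j$ for every $j \in [n]$. Since each $\delta_j$ lies on $S^{n-1}$, the pair $(\sigma,\delta)$ is feasible for the program in \cref{eq:asymmetric_grothendieck_norm2}, and its objective value is $\sum_{i,j=1}^n M_{ij}\iprod{\sigma_i,\delta_j} = \sum_{i,j=1}^n M_{ij}\iprod{\sigma_i,\sigma_j} = \SDP(M)$. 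Hence $\SDP(M) \le \Norm{M}_{Gr}$.

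Equivalently, at the matrix level: given $X \succeq 0$ with $X_{ii}=1$ for all $i\in[n]$, form the $2n\times 2n$ block matrix $Y = \begin{bmatrix} X & X \\ X & X\end{bmatrix} = \begin{bmatrix}\Id \\ \Id\end{bmatrix} X \begin{bmatrix}\Id & \Id\end{bmatrix}$, which is positive semidefinite and has all diagonal entries equal to $1$. Because $P_\Gamma(M)$ is supported on the top-right $n\times n$ block, on which $Y$ agrees with $X$, we have $\iprod{P_\Gamma(M),Y} = \iprod{M,X}$, so $\iprod{M,X}\le \Norm{M}_{Gr}$; maximizing over $X$ again yields the claim. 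There is no genuine obstacle here — the only point to verify is feasibility of the lifted solution (all relevant vectors still unit, $Y$ still PSD with unit diagonal), which is immediate from the constraints in \cref{def:asymmetric_grothendieck_norm}.
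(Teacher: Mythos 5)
Your first paragraph is exactly the paper's argument: take the optimizer $\sigma_1,\dots,\sigma_n$ of the vector form \cref{eq:symmetric_grothendieck_norm2} of $\SDP(M)$, set $\delta_j=\sigma_j$, and observe feasibility and equal objective in \cref{eq:asymmetric_grothendieck_norm2}. The matrix-level lift $Y=\begin{bmatrix}X & X\\ X & X\end{bmatrix}$ in your second paragraph is a correct and slightly more explicit rendering of the same idea, so the proposal is correct and matches the paper's approach.
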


\begin{proof}
    If we look at the second definition of the basic SDP in \cref{eq:symmetric_grothendieck_norm2} and the second definition of Grothendieck norm in \cref{eq:asymmetric_grothendieck_norm2}, it is easy to check that the optimizer of \cref{eq:symmetric_grothendieck_norm2} is a solution to \cref{eq:asymmetric_grothendieck_norm2} if we take $\delta_i = \sigma_i$. Hence, we have
    \begin{equation*}
        \SDP(M) \leq \Norm{M}_{Gr}
    \end{equation*}
\end{proof}

\subsection{Proof of \cref{claim:monotonicity_sdp}}
\label{sec:proof_monotonicity}

\begin{claim}[Restatement of \cref{claim:monotonicity_sdp}]

        Let $M$ be an $n \times n$ matrix whose diagonal entries are 0 and $S \subseteq [n]$ be a subset of indices, we have
        \begin{equation*}
            \SDP(M_S) \leq \SDP(M)
        \end{equation*}
    \end{claim}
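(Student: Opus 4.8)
The plan is to work with the positive–semidefinite formulation \eqref{eq:symmetric_grothendieck_norm1} of the basic SDP and to turn any feasible solution for $\SDP(M_S)$ into a feasible solution for $\SDP(M)$ of at least the same value. Concretely, let $X'$ be an optimal solution for $\SDP(M_S)$, so that $X'\succeq 0$, $X'_{ii}=1$ for all $i\in[n]$, and $\iprod{M_S,X'}=\SDP(M_S)$. First I would observe that $\iprod{M_S,X'}=\iprod{M,\,X'\odot(\Ind_S\Ind_S^\top)}$, i.e.\ only the principal block of $X'$ indexed by $S\times S$ matters. So define the candidate matrix
\begin{equation*}
    X'' \;=\; X'\odot(\Ind_S\Ind_S^\top) \;+\; D,
\end{equation*}
where $D$ is the diagonal matrix with $D_{ii}=1$ for $i\notin S$ and $D_{ii}=0$ for $i\in S$. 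The claim will follow once I check that $X''$ is feasible for $\SDP(M)$ and that $\iprod{M,X''}=\SDP(M_S)$.

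The feasibility check has two parts. For the diagonal constraint: if $i\in S$ then $X''_{ii}=X'_{ii}=1$, and if $i\notin S$ then $X''_{ii}=0+1=1$. For positive semidefiniteness: $X'\odot(\Ind_S\Ind_S^\top)$ is the principal submatrix of $X'$ on the index set $S$, padded with zero rows and columns outside $S$; since a principal submatrix of a PSD matrix is PSD and zero-padding preserves PSD-ness, this matrix is PSD, and adding the PSD diagonal matrix $D$ keeps it PSD. Hence $X''\succeq 0$ and $\diag X''=\Ind$, so $X''$ is feasible.

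For the objective value, I would use the hypothesis that $M$ has zero diagonal: $\iprod{M,D}=\sum_{i\notin S}M_{ii}=0$, so $\iprod{M,X''}=\iprod{M,\,X'\odot(\Ind_S\Ind_S^\top)}=\iprod{M_S,X'}=\SDP(M_S)$. Therefore $\SDP(M)\ge\iprod{M,X''}=\SDP(M_S)$, which is the claim. The only slightly delicate point — and the one I would write out most carefully — is the PSD-ness of $X''$, specifically that zero-padding a PSD principal submatrix and then adding a nonnegative diagonal cannot destroy PSD-ness; everything else is bookkeeping. (An equivalent route, if one prefers the vector formulation \eqref{eq:symmetric_grothendieck_norm2}, is to take optimal unit vectors $\set{\sigma_i}_{i\in S}$ for $\SDP(M_S)$, realized in $\R^{\abs S}$, and assign to each $i\notin S$ a fresh orthonormal basis vector in a complementary block of coordinates; the cross terms vanish and the diagonal terms vanish because $M_{ii}=0$, giving the same bound.)
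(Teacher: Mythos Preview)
Your proof is correct and follows essentially the same approach as the paper: take an optimizer $X'$ for $\SDP(M_S)$, zero out its off-$S$ block and restore ones on the diagonal outside $S$ to obtain a feasible point for $\SDP(M)$ with the same objective value (using the zero-diagonal hypothesis). In fact you are more careful than the paper, which simply asserts that the resulting matrix is PSD without the justification you provide.
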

\begin{proof}
    Let $X$ be the optimizer of $\SDP(M_S)$ and $Z = X_S + \Id_{[n] \setminus s}$. We have
    \begin{align*}
        \SDP(M_S)
        = & \iprod{X, M_S} \\
        = & \iprod{X_S, M}
     \end{align*}
    Since $M$ has zero on diagonals, we have $\iprod{X_S, M} = \iprod{Z, M}$. Notice that $Z \succeq 0$ and $Z_{ii} = 1$ for all $i \in [n]$. Therefore, $Z$ is a solution to the basic SDP, which implies that
    \begin{equation*}
        \iprod{Z, M} \leq \SDP(M)
    \end{equation*}
    Thus, we have
     \begin{align*}
        \SDP(M_S)
        = & \iprod{X_S, M} \\
        = & \iprod{Z, M} \\
        \leq & \SDP(M)
     \end{align*}
\end{proof}

\subsection{Proof of \cref{lem:spectral-thin-grothendieck}}\label{sec:spectral-gr}

\begin{lemma}[Formal statement of \cref{lem:spectral-thin-grothendieck}]
    Let $\tilde{A}\in \mathbb{R}^{n\times n}$ and $S\subset [n]$ be a set of size $(1-\mu)n$.
    Suppose $\Normop{\tilde{A}_{S}}\leq C_s\sqrt{d}$ for some constant $C_s$, then for all $S'\subseteq S$ with size at least $(1-2\mu) n$,
    there is a deg-4 SOS proof that
    \begin{equation*}
    \SDP(\tilde{A}_{S}-\tilde{A}_{S'})
    \leq O(\mu n \sqrt{d}) \,.
    \end{equation*}
\end{lemma}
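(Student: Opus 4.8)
The plan is purely analytic. Since for any real symmetric $M$ the bound $\SDP(M)\le c$ admits a degree-$2$ sum-of-squares proof from $\{X\succeq0\}\cup\{X_{ii}=1\}$ as soon as it is true (take the dual diagonal certificate $\Lambda\succeq M$, $\Tr\Lambda=c$), it suffices to prove the numerical inequality $\iprod{\tilde A_S-\tilde A_{S'},X}\le O(\mu n\sqrt d)$ for every $X$ in the spectrahedron $\{X\succeq0,\ X_{ii}=1\ \forall i\}$. Set $D:=S\setminus S'$, so $|D|=|S|-|S'|\le\mu n$, and write $R_T:=\diag(\mathbf 1_T)$ for $T\subseteq[n]$. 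Because $\tilde A_S-\tilde A_{S'}$ is supported on $(D\times S)\cup(S\times D)$ and agrees with $\tilde A_S$ there, splitting along the disjoint blocks $D\times S'$, $S'\times D$, $D\times D$ gives
\begin{equation*}
\tilde A_S-\tilde A_{S'}=R_D\tilde A_S R_{S'}+R_{S'}\tilde A_S R_D+R_D\tilde A_S R_D,
\end{equation*}
so that $\iprod{\tilde A_S-\tilde A_{S'},X}=2\iprod{R_D\tilde A_S R_{S'},X}+\iprod{R_D\tilde A_S R_D,X}$ by symmetry of $X$.

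The diagonal block is immediate: $\iprod{R_D\tilde A_S R_D,X}=\iprod{\tilde A_D,X_D}$ with $\tilde A_D:=R_D\tilde A_S R_D$ a principal submatrix of $\tilde A_S$, so $\lambda_{\max}(\tilde A_D)\le\Normop{\tilde A_D}\le\Normop{\tilde A_S}\le C_s\sqrt d$; since $X_D\succeq0$ has unit diagonal, $\iprod{\tilde A_D,X_D}\le\lambda_{\max}(\tilde A_D)\Tr(X_D)=\lambda_{\max}(\tilde A_D)\,|D|\le C_s\,\mu n\sqrt d$, and the inequality used here is just $\iprod{C_s\sqrt d\,I-\tilde A_D,\ X_D}\ge0$, a trace of two positive semidefinite matrices, hence a low-degree SOS fact. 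This block already meets the target $O(\mu n\sqrt d)$.

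For the cross block $\iprod{R_D\tilde A_S R_{S'},X}$ I would fix a scalar $\lambda>0$ and use the Schur-complement fact that, because $\Normop{R_D\tilde A_S R_{S'}}\le\Normop{\tilde A_S}\le C_s\sqrt d$, the symmetric matrix
\begin{equation*}
P_\lambda:=\tfrac{\lambda}{2}R_D+\tfrac{C_s^2 d}{2\lambda}R_{S'}-\tfrac12\bigl(R_D\tilde A_S R_{S'}+R_{S'}\tilde A_S R_D\bigr)
\end{equation*}
is positive semidefinite. Pairing $P_\lambda$ with $X$ and using $\Tr(X_D)=|D|$, $\Tr(X_{S'})=|S'|$ yields $\iprod{R_D\tilde A_S R_{S'},X}\le\tfrac{\lambda}{2}|D|+\tfrac{C_s^2 d}{2\lambda}|S'|$, and optimizing $\lambda$ gives $\iprod{R_D\tilde A_S R_{S'},X}\le C_s\sqrt{d\,|D|\,|S'|}\le C_s\sqrt{\mu}\,n\sqrt d$. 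Collecting the two blocks gives $\iprod{\tilde A_S-\tilde A_{S'},X}\le 2C_s\sqrt\mu\,n\sqrt d+C_s\mu n\sqrt d$, and the remaining task is to tighten the cross-block estimate from $\sqrt\mu$ to $\mu$.

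That tightening is the main obstacle, and the place where I expect the real work. I would exploit that $R_D\tilde A_S R_{S'}$ is a genuinely \emph{thin} rectangle --- at most $|D|\le\mu n$ nonzero rows --- by passing to the Grothendieck norm, $\iprod{R_D\tilde A_S R_{S'},X}\le\Normg{R_D\tilde A_S R_{S'}}$ via \cref{claim-sdp-gr} applied to the $2n\times 2n$ lift, then to the $\infty\!\to\!1$ norm via Grothendieck's inequality (\cref{theorem:grothendieck_inequality}), reducing the problem to bounding $\max_{x,y\in\{\pm1\}^n}(R_Dx)^\top\tilde A_S(R_{S'}y)$ and playing the operator-norm control on $\tilde A_S$ against $\|R_Dx\|^2=|D|\le\mu n$; the delicate point is extracting the full factor of $\mu$ rather than only $\sqrt{\mu}$ from this thin-rectangle / operator-norm interplay, and it is exactly why the lemma must keep the spectral-norm hypothesis rather than a mere degree bound. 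Once $\iprod{\tilde A_S-\tilde A_{S'},X}\le O(\mu n\sqrt d)$ is established for every $X$ in the spectrahedron, the degree-$2$ dual diagonal certificate (a fortiori a degree-$4$ SOS proof) of $\SDP(\tilde A_S-\tilde A_{S'})\le O(\mu n\sqrt d)$ follows automatically, which is the assertion of the lemma.
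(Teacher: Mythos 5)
Your proposal is correct as far as it goes, but it does not prove the lemma as stated: after the (correct) splitting into the $D\times D$ block and the cross blocks, your Schur-complement/AM-GM certificate gives $\iprod{R_D\tilde A_S R_{S'},X}\le C_s\sqrt{d\,|D|\,|S'|}\approx C_s\sqrt{\mu}\,n\sqrt d$, and the promised tightening from $\sqrt\mu$ to $\mu$ is exactly the part you leave open, so what you have is $\SDP(\tilde A_S-\tilde A_{S'})\le O(\sqrt{\mu}\,n\sqrt d)$. You should not try to close this gap, because under the stated hypotheses it cannot be closed: take $D=S\setminus S'$ with $|D|=\mu n$, $|S'|=(1-2\mu)n$, and
\begin{equation*}
\tilde A_S \;=\; \frac{C_s\sqrt d}{\sqrt{|D|\,|S'|}}\Bigl(\Ind_D\Ind_{S'}^{\top}+\Ind_{S'}\Ind_D^{\top}\Bigr),
\end{equation*}
so that $\Normop{\tilde A_S}=C_s\sqrt d$ and $\tilde A_{S'}=0$; testing against $X=\Ind\Ind^{\top}$ (feasible for the SDP) gives $\SDP(\tilde A_S-\tilde A_{S'})\ge 2C_s\sqrt{\mu(1-2\mu)}\,n\sqrt d$, which exceeds any $O(\mu n\sqrt d)$ bound once $\mu$ is small. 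So your $\sqrt\mu$ bound is optimal given only an operator-norm hypothesis, and the Grothendieck-norm/$\infty\!\to\!1$ detour you sketch cannot recover a factor $\mu$.

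For comparison, the paper's proof is a two-line argument: it rewrites $\iprod{\tilde A_S-\tilde A_{S'},X}=\iprod{\tilde A_S-\tilde A_{S'},X_S-X_{S'}}$ and then bounds this by $\normop{\tilde A_S-\tilde A_{S'}}\cdot\Tr(X_S-X_{S'})\le 2C_s\sqrt d\cdot\mu n$. That middle inequality requires $X_S-X_{S'}\succeq 0$ (or the trace replaced by the nuclear norm), and $X_S-X_{S'}$ is \emph{not} PSD in general: already for $X=\Ind\Ind^{\top}$ the restriction to $\mathrm{span}\{\Ind_D,\Ind_{S'}\}$ has negative determinant, and its nuclear norm is of order $\sqrt{|D||S'|}\approx\sqrt\mu\,n$ rather than $\mu n$ --- consistent with the counterexample above. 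So the discrepancy you ran into is not a defect of your approach; it is where the paper's own argument is too quick, and the lemma's conclusion should read $O(\sqrt{\mu}\,n\sqrt d)$. This weaker bound is all the downstream analysis needs: in \cref{lem:SOS_uncorrupted_subset_correlation} the error term only has to be small compared with $\Delta\, n\sqrt d$, which holds after shrinking the admissible corruption level so that $\sqrt{\mu}\ll\Delta$ (still a constant depending only on $\delta$, as in \cref{thm:main}). Your front end --- reducing to a numerical bound over the spectrahedron and certifying it by an explicit PSD (dual/Schur-complement) matrix, hence a low-degree SOS proof --- is sound and matches the paper's intent, so the right fix is to keep your argument and weaken the stated conclusion, not to chase the factor $\mu$.
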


\begin{proof}
Consider an arbitrary matrix $X \in \R^{n\times n}$ such that $X_{ii}=1$ for $i\in [n]$ and $X\succeq 0$. It follows that

\begin{align*}
    \iprod{\tilde{A}_{S} - \tilde{A}_{S'}, X}
    & =  \iprod{\tilde{A}_{S} - \tilde{A}_{S'}, X_S - X_{S'}} \\
    & \leq \normop{\tilde{A}_{S} - \tilde{A}_{S'}} \Tr (X_S - X_{S'}) \\
    & \leq (\normop{\tilde{A}_{S}} + \normop{\tilde{A}_{S'}}) \cdot \mu n \\
    & \leq 2 \normop{\tilde{A}_{S}} \cdot \mu n \\
    & \leq 2 C_s \sqrt{d} \cdot \mu n \\
    & = O(\mu n \sqrt{d})
\end{align*}

Notice that, every step of the proof can be made to be deg-4 SOS. Hence, the proof is deg-4 SOS.
\end{proof}

\subsection{Proof of \cref{lem:SOS_feasibility}}\label{sec:proof-sos-feasible}

\begin{lemma}[Restatement of \cref{lem:SOS_feasibility}]
    The SOS program in \cref{eq:SOS} is feasible with probability $1-o(1)$.
\end{lemma}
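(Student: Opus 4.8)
The plan is to exhibit an explicit feasible point $(X,w)$ for $\mathcal{A}$ and to show it obeys all six constraints with probability $1-o(1)$. First I would invoke \cref{corollary:degree_pruning_spectral_norm} to fix a set $T\subseteq[n]$ with $\abs{T}\ge(1-\beta)n$ and $\Normop{\tilde{A}^0_T}\le C_s\sqrt{d}$ (an event of probability $1-o(1)$); this pins down the constants $\beta=\beta(\delta)$ and $C_s$ in \cref{eq:SOS}, and $\beta$ may be taken as small as we like at the cost of enlarging $C_s$. Writing $S^*$ for the uncorrupted vertex set, so $\abs{S^*}\ge(1-\mu)n$, I would note $\abs{T\cap S^*}\ge\abs{T}+\abs{S^*}-n\ge(1-\mu-\beta)n$, fix a subset $S\subseteq T\cap S^*$ of size exactly $(1-\mu-\beta)n$, and set $w\coloneqq\Ind_S$. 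Since $S\subseteq S^*$ the corrupted and uncorrupted centered matrices agree on $S\times S$, so $\tilde{A}\odot(ww^\top)=\tilde{A}_S=\tilde{A}^0_S$, and I would take $X$ to be a maximizer of $\SDP(\tilde{A}_S)$, viewed as an $n\times n$ PSD matrix with unit diagonal. With these choices $w_i^2=w_i$, $\sum_i w_i=\abs{S}=(1-\mu-\beta)n$, $X\succeq0$, $X_{ii}=1$ hold by construction, and the spectral constraint follows because $\tilde{A}^0_S$ is a principal submatrix of $\tilde{A}^0_T$, whence $\Normop{\tilde{A}\odot(ww^\top)}=\Normop{\tilde{A}^0_S}\le\Normop{\tilde{A}^0_T}\le C_s\sqrt{d}$.

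The only substantive step is then the SDP-value constraint, i.e. $\SDP(\tilde{A}^0_S)\ge(2+\Delta)(1-\mu-\beta)n\sqrt{d}$, which I would get from the push-out effect applied to the induced subgraph. For any fixed set $S$ of size $m=(1-\mu-\beta)n$, the subgraph $G^0[S]$ is distributed as $\SBM_{d',\epsilon}(x^*_S)$ with $d'=\tfrac{m}{n}d=(1-\mu-\beta)d$, and $\tilde{A}^0_S$ is exactly its centered adjacency matrix (the centering constant $d'/m=d/n$ is unchanged under restriction). For $\mu+\beta$ small relative to $\delta$ we have $\delta'\coloneqq\epsilon^2 d'-1=\delta-(\mu+\beta)(1+\delta)\ge\delta/2$, and $d'\ge d_{\delta'}$ once $d\ge\ddelta$ for a suitable $\ddelta=\ddelta(\delta)$, so \cref{theorem:pushout_effect_grothendieck} gives, for each fixed $S$, that $\SDP(\tilde{A}^0_S)\ge(2+\Delta')m\sqrt{d'}=(2+\Delta')(1-\mu-\beta)^{3/2}n\sqrt{d}$ with probability $1-C'e^{-m/C'}$, where $\Delta'=\Delta'(\delta')$ and $C'=C'(\delta')$. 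Choosing $\Delta=\Delta(\delta)$ small enough that $(2+\Delta')(1-\mu-\beta)^{1/2}\ge2+\Delta$ (possible since $\Delta'$ is a fixed positive constant depending only on $\delta$ and $\mu+\beta$ can be made small) then delivers the required bound for our $S$ — provided the push-out event holds for it.

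That last caveat is the crux, and where I expect the real work: our $S$ is not chosen in advance, since it depends on $G^0$ both through the pruning step and through the adversary's choice of $S^*$, so I would take a union bound of the push-out event over all $\binom{n}{(\mu+\beta)n}\le\Paren{e/(\mu+\beta)}^{(\mu+\beta)n}$ candidate sets of size $(1-\mu-\beta)n$. Writing $C'_\delta$ for a uniform bound on $C'(\delta')$ over the relevant range of $\delta'$, the union bound fails with probability at most $\Paren{e/(\mu+\beta)}^{(\mu+\beta)n}\cdot C'_\delta\, e^{-(1-\mu-\beta)n/C'_\delta}$, which is $o(1)$ as soon as $(\mu+\beta)\ln\tfrac{e}{\mu+\beta}<\tfrac{1-\mu-\beta}{C'_\delta}$ — i.e. precisely for $\mu\le\mudelta$ with $\mudelta$ a constant depending only on $\delta$. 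This is the main obstacle: the exponential concentration of the push-out effect must beat the number of possible pruned/uncorrupted sets, which forces $\mudelta$ to shrink as $\delta\to0$ (through the growth of $C'_\delta$ and of $\binom{n}{(\mu+\beta)n}$). A minor technical point is that $x^*_S$ need not be exactly balanced, but the push-out lower bound tolerates an $o(n)$ imbalance (alternatively one may trim $S$ by a further $O((\mu+\beta)n)$ vertices to rebalance it, changing no constant in an essential way). Combining the union bound with the $1-o(1)$ success of \cref{corollary:degree_pruning_spectral_norm} shows $(X,w)$ is feasible with probability $1-o(1)$.
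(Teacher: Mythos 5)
Your proposal is correct and follows essentially the same route as the paper: use \cref{corollary:degree_pruning_spectral_norm} to handle the spectral constraint via $S\subseteq T\cap S^*$, and prove the SDP-value constraint by applying the push-out effect (\cref{theorem:pushout_effect_grothendieck}) together with a union bound over all $\binom{n}{(\mu+\beta)n}$ candidate sets, which is exactly where the requirement $\mu\le\mudelta$ enters. In fact you are somewhat more careful than the paper in three places: you explicitly account for the shift in the effective degree $d'=(1-\mu-\beta)d$ when restricting to an induced subgraph (the paper writes the bound directly with $\sqrt{d}$, implicitly absorbing the $(1-\mu-\beta)^{1/2}$ loss into the constant $\Delta$ without comment), you note that $|T\cap S^*|$ may strictly exceed $(1-\mu-\beta)n$ so one must trim to a subset of exactly that size to meet the equality constraint on $\sum_i w_i$, and you flag the imbalance of $x^*_S$ — none of these are addressed in the paper's proof, though they are all minor and repairable as you indicate.
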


\begin{proof}
    From \cref{corollary:degree_pruning_spectral_norm}, we know that, with probability $1-o(1)$, there exists a submatrix $\tilde{A}_T$ of size $(1-\beta)n$ whose spectral norm is bounded by $C_s \sqrt{d}$. By monotonicity of spectral norm, the spectral norm of all submatrices of size $(1-\mu - \beta)n$ of $\tilde{A}_T$ are bounded by $C_s \sqrt{d}$. Therefore, if we consider the set $S = T \cap S^*$, it satisfies the spectral constraint with probability $1-o(1)$.
    
    Now, we need to show that, with probability $1-o(1)$, the matrix $\tilde{A}_S$ with $S = T \cap S^*$ has large enough basic SDP value. Apply \cref{theorem:pushout_effect_grothendieck}, we get that with probability at least $1 - C e^{-(1 - \mu - \beta)n/C}$, a stochastic block model of size $(1 - \mu - \beta)n$ has basic SDP value larger than or equal to $(2+\Delta) (1-\mu-\beta)n \sqrt{d}$. Consider all submatrices of size $(1-\mu - \beta)n$ of $\tilde{A}$ and take union bound, the failure probability is
    \begin{align*}
        \binom{n}{(1 - \mu - \beta)n} C e^{-(1 - \mu - \beta)n/C}
        \leq & C \Bigparen{\frac{en}{(\mu + \beta)n}}^{(\mu + \beta)n} e^{-(1 - \mu - \beta)n/C} \\
        = & C e^{(\mu + \beta)n (\log(1/(\mu + \beta)) + 1) - (1-\mu - \beta)n/C}
    \end{align*}
    When $\mu \leq \mudelta$ for some value $\mudelta$ that only depends on $\delta$ and $\beta \ll 1/C$, the failure probability is $o(1)$. Therefore, with probability $1-o(1)$, for the uncorrputed stochastic block model, the basic SDP value of all its submatrices of size $(1-\mu - \beta)n$ is larger than or equal to $(2 + \Delta) (1-\mu-\beta)n \sqrt{d}$, which include the submatrix defined by the set $S = T \cap S^*$.
    
    Hence, with probability $1-o(1)$, there exists a subset $S = T \cap S^*$ of size $(1-\mu-\beta)n$ such that $\tilde{A}_S$ has basic SDP value larger than or equal to $(2 + \Delta) (1-\mu-\beta)n \sqrt{d}$ and has spectral norm less than or equal to $C_s \sqrt{d}$.
    
    For the value of $X$, we can simply take the optimizer of the basic SDP for $\tilde{A}_S$. This concludes the feasibility analysis of the program.
    \end{proof}

\subsection{Proof of \cref{lem:SOS_correlation}}
\label{sec:proof-SOS_correlation}

\begin{lemma}[Restatement of \cref{lem:SOS_correlation}]
    For $X$ and $w$ that satisfy the SOS program in \cref{eq:SOS}, we have \begin{equation*}
    \mathcal{A}
    \sststile{4}{X, w}
    \langle X, X^* \rangle
    \geq \frac{\Delta'(1-\beta)n^2}{\epsilon \sqrt{d}} - O( \frac{\mu n^2}{\epsilon \sqrt{d}}) -2 \beta n^2
    \end{equation*}
    where $\beta$ is the small constant fraction of high degree nodes we need to prune to get bounded spectral norm according to \cref{corollary:degree_pruning_spectral_norm} and $\Delta'=\Delta'(\delta)$ for some value $\Delta'(\delta)$ that only depends on $\delta$.
\end{lemma}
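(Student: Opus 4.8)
The plan is to bootstrap from \cref{lem:SOS_uncorrupted_subset_correlation}, which already yields a degree-$4$ SOS proof that $\iprod{X_{S'},X^*_{S'}}$ is large for the uncorrupted sub-block $S'=S\cap S^*$; what remains is to show, again at SOS degree $4$, that passing to the full matrix $X^*$ costs only $O(\mu n^2)+2\beta n^2$. Writing $s\in\{0,1\}^n$ for the fixed indicator vector of the uncorrupted set $S^*$, so that $X^*_{S'}=X^*\odot(ww^\top)\odot(ss^\top)$ in the program variable $w$ (modulo $w_i^2=w_i$), I would begin from the identity
\begin{equation*}
\iprod{X,X^*}=\iprod{X_{S'},X^*_{S'}}+\iprod{X,\,X^*\odot\Paren{J-(ww^\top)\odot(ss^\top)}}\,,
\end{equation*}
so that \cref{lem:SOS_uncorrupted_subset_correlation} controls the first summand and everything reduces to lower-bounding the boundary term.

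For the boundary term I would combine two elementary SOS facts. First, $X\succeq 0$ and $X_{ii}=1$ give $\mathcal{A}\sststile{2}{X,w}1-X_{ij}^2\ge 0$ (the $2\times 2$ principal minor), and since $(X^*_{ij})^2=1$ this yields $\mathcal{A}\sststile{2}{X,w}1+X^*_{ij}X_{ij}\ge 0$ (indeed $2(1+X^*_{ij}X_{ij})=(1-X_{ij}^2)+(X^*_{ij}+X_{ij})^2$) for all $i,j$. Second, $w_i^2=w_i$ gives $\mathcal{A}\sststile{2}{X,w}0\le 1-w_iw_js_is_j$, and, combining the constraint $\sum_i w_i=(1-\mu-\beta)n$ with $w_i\le 1$ to get $\sum_i w_is_i\ge(1-2\mu-\beta)n$ and then squaring,
\begin{equation*}
\mathcal{A}\sststile{4}{X,w} \sum_{i,j=1}^n\Paren{1-w_iw_js_is_j}=n^2-\Paren{\textstyle\sum_i w_is_i}^2\le(4\mu+2\beta)n^2\,.
\end{equation*}
Multiplying these two nonnegative quantities termwise, summing over $i,j$, and using $X^*_{ij}X_{ij}=(1+X^*_{ij}X_{ij})-1$ then gives $\mathcal{A}\sststile{4}{X,w}\iprod{X,\,X^*\odot(J-(ww^\top)\odot(ss^\top))}\ge-(4\mu+2\beta)n^2=-O(\mu n^2)-2\beta n^2$. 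Adding the two pieces, and absorbing the extra $O(\mu n^2)$ into $O(\mu n^2/(\epsilon\sqrt d))$ (legitimate since $\epsilon\sqrt d=\sqrt{1+\delta}$ is a $\delta$-dependent constant), yields the claimed inequality.

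No new randomness is needed, since $\mathcal{A}$ already carries the push-out and spectral constraints on which \cref{lem:SOS_uncorrupted_subset_correlation} rests, and every step above is low-degree; the one point that requires care is keeping the squaring step $\Paren{\sum_i w_is_i}^2\ge(1-2\mu-\beta)^2n^2$ inside degree $4$. This is fine because $\Paren{\sum_i w_is_i}^2-(1-2\mu-\beta)^2n^2$ factors as $\Paren{\sum_i w_is_i-(1-2\mu-\beta)n}\Paren{\sum_i w_is_i+(1-2\mu-\beta)n}$, where, after using $\sum_i w_i=(1-\mu-\beta)n$ and the relations $w_i^2=w_i$, the first factor equals $\mu n-\sum_{i\notin S^*}w_i$, which is $\sum_{i\notin S^*}(1-w_i)^2$ plus a nonnegative constant, and the second equals $\sum_{i\in S^*}w_i^2$ plus the nonnegative constant $(1-2\mu-\beta)n$; the product of these two is a genuine degree-$4$ sum of squares, and the remaining termwise products stay within degree $4$ by inspection. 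I expect this bookkeeping, rather than any conceptual difficulty, to be the only real obstacle.
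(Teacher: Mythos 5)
Your proof is correct and takes essentially the same route as the paper: the same decomposition $\iprod{X,X^*}=\iprod{X_{S'},X^*_{S'}}+\iprod{X,X^*\odot(J-(ww^\top)\odot(ss^\top))}$, the same appeal to \cref{lem:SOS_uncorrupted_subset_correlation} for the first term, and the same idea (the $2\times 2$ minor gives $X_{ij}^2\le 1$, and the boundary strip has $O((\mu+\beta)n^2)$ entries) for the second. The only difference is that you spell out the degree-$4$ SOS certificate for the boundary term explicitly — including the termwise product of the nonnegative pieces $1+X^*_{ij}X_{ij}$ and $1-w_iw_js_is_j$ and the factoring of $(\sum_iw_is_i)^2-(1-2\mu-\beta)^2n^2$ — whereas the paper's proof states the entrywise/counting bound informally without exhibiting the certificate; your version is therefore a slightly more rigorous rendering of the same argument.
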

\begin{proof}
    We decompose $\langle X, X^* \rangle$ into $\langle X, X^* \rangle = \langle X_{S'}, X^*_{S'} \rangle + \langle X - X_{S'}, X^* \rangle$. For $\langle X_{S'}, X^*_{S'} \rangle$, we can apply \cref{lem:SOS_uncorrupted_subset_correlation} and get
    \begin{equation*}
    \mathcal{A}
    \sststile{4}{X, w}
    \langle X_{S'}, X^*_{S'} \rangle
    \geq \frac{\Delta'(1-\beta)n^2}{\epsilon \sqrt{d}} - O(\frac{\mu n^2}{\epsilon \sqrt{d}})
    \end{equation*}
    
    Now, we consider $\langle X - X_{S'}, X^* \rangle$. Notice that, since $X$ is positive semidefinite whose diagonals are 1's, all its entries are within $[-1, 1]$. This is because all principle submatrices of a positive semidefinite matrix are positive semidefinite. If we consider the principle submatrix formed by $X_{ii}$, $X_{ij}$, $X_{ji}$ and $X_{jj}$, its determinant is non-negative. Hence, $X_{ij}^2 \leq X_{ii} X_{jj} = 1$. Since there can be at most $(2\mu + \beta)n$ vertices that are not in $S'$, $\langle X - X_{S'}, X^* \rangle$ is a summation of at most $2 (2\mu + \beta) n^2$ entries whose absolute values are less than or equal to 1. Therefore, we have
    \begin{equation*}
        |\langle X - X_{S'}, X^* \rangle| \leq 2 (2\mu+\beta) n^2
    \end{equation*}
    
    Combine the bounds on $\langle X_{S'}, X^*_{S'} \rangle$ and $\langle X - X_{S'}, X^* \rangle$, we have
    \begin{align*}
        \mathcal{A}
        \sststile{4}{X, w}
        \langle X, X^* \rangle
        = & \langle X_{S'}, X^*_{S'} \rangle + \langle X - X_{S'}, X^* \rangle \\
        \geq & \frac{\Delta'(1-\beta)n^2}{\epsilon \sqrt{d}} - O(\frac{\mu n^2}{\epsilon \sqrt{d}}) - 2 (2\mu+\beta) n^2 \\
        \geq & \frac{\Delta'(1-\beta)n^2}{\epsilon \sqrt{d}} - O(\frac{\mu n^2}{\epsilon \sqrt{d}}) -2 \beta n^2
    \end{align*}
    which finishes the proof.
    \end{proof}

\end{document}